\pgfplotsset{compat=1.18}
\newtheorem{theorem}{Theorem}
\newtheorem{proposition}[theorem]{Proposition}
\newtheorem{corollary}[theorem]{Corollary}
\theoremstyle{definition}
\newtheorem{definition}[theorem]{Definition}
\theoremstyle{remark}
\let\cite\citep
\definecolor{darkblue}{rgb}{0, 0, 0.5}
\title{R$^3$L: Reflect-then-Retry Reinforcement Learning with Language-Guided Exploration, Pivotal Credit, and Positive Amplification}
\author{
  Weijie Shi$^{\dagger,\S}$\thanks{Work done during internship at Tongyi Lab. Email: shiweijie0311@foxmail.com} \quad
  Yanxi Chen$^{\dagger}$ \quad
  Zexi Li$^{\dagger}$ \quad
  Xuchen Pan$^{\dagger}$ \\
  Yuchang Sun$^{\dagger}$ \quad
  Jiajie Xu$^{\ddagger}$ \quad
  Xiaofang Zhou$^{\S}$ \quad
  Yaliang Li$^{\dagger}$ \\[0.8em]
  $^{\dagger}$Tongyi Lab\,\raisebox{-0.3em}{\includegraphics[height=1.1em]{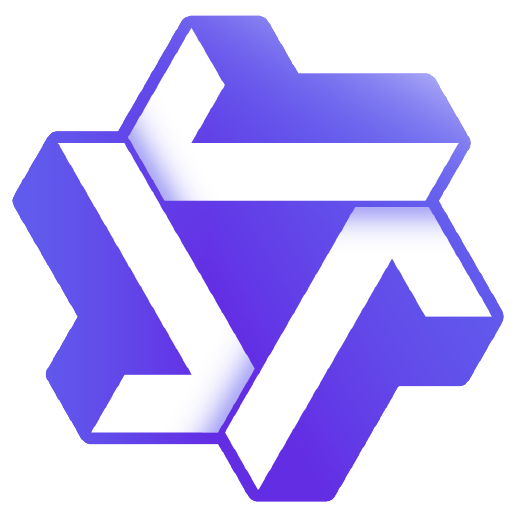}} \qquad $^{\ddagger}$Soochow University \\
  $^{\S}$Hong Kong University of Science and Technology
}
\begin{document}
\maketitle
\begin{abstract}
Reinforcement learning drives recent advances in LLM reasoning and agentic capabilities, yet current approaches struggle with both exploration and exploitation. Exploration suffers from low success rates on difficult tasks and high costs of repeated rollouts from scratch. Exploitation suffers from coarse credit assignment and training instability: Trajectory-level rewards penalize valid prefixes for later errors, and failure-dominated groups overwhelm the few positive signals, leaving optimization without constructive direction. To this end, we propose R$^3$L, Reflect-then-Retry Reinforcement Learning with Language-Guided Exploration, Pivotal Credit, and Positive Amplification. To synthesize high-quality trajectories, R$^3$L shifts from stochastic sampling to active synthesis via Reflect-then-Retry, leveraging language feedback to diagnose errors, transform failed attempts into successful ones, and reduce rollout costs by restarting from identified failure points. With errors diagnosed and localized, Pivotal Credit Assignment updates only the diverging suffix where contrastive signals exist, excluding the shared prefix from gradient update. Since failures dominate on difficult tasks and Reflect-then-Retry produces off-policy data, risking training instability, Positive Amplification upweights successful trajectories to ensure positive signals guide the optimization process. Experiments on agentic and reasoning tasks demonstrate 5\% to 52\% relative improvements over baselines while maintaining training stability. Our code is released at \url{https://github.com/shiweijiezero/R3L}. 
\end{abstract}

\section{Introduction}
Reinforcement learning (RL) like GRPO \cite{shao2024deepseekmath} drives recent advances in LLM reasoning and agentic capabilities \cite{cui2025enhancing,shi2025semantic,plaat2025agentic}. Recent systems such as DeepSeek-R1 \cite{guo2025deepseek}, DeepSeek-Math \cite{shao2024deepseekmath}, and Search-R1 \cite{jin2025search} demonstrate its effectiveness. However, as tasks scale to complex, multi-step agentic environments with sparse rewards, current approaches struggle with both exploration and exploitation.

\textbf{Inefficient exploration.} Stochastic sampling produces predominantly failed trajectories on difficult problems, leading to a lack of positive signals and even null gradients when all samples in a group fail \cite{nan2025ngrpo}. Repeated rollouts from scratch attempt to compensate but incur high cost. Furthermore, scalar rewards indicate correctness but provide no actionable guidance on why solutions failed or how to discover better ones \cite{scheurer2022training,scheurer2023training,zhang2025critique}. In practice, environments often provide abundant natural language feedback such as error messages, execution traces, and observation descriptions, yet current RL algorithms like GRPO cannot leverage this rich information. This calls for a guided exploration mechanism that leverages language feedback to efficiently synthesize successful trajectories. \looseness=-1

\textbf{Unstable exploitation.} Beyond discovering successful trajectories, learning from them poses distinct challenges. Trajectory-level credit assignment applies the same reward signal to all tokens regardless of where errors occur. When a trajectory fails due to a late mistake, valid earlier reasoning is suppressed alongside the error, introducing noise into gradient estimates. While Process Reward Models \cite{lightman2023let} offer step-level credit as an alternative, they require costly human annotation and often produce unstable supervision signals \cite{xiong2024watch}, calling for credit assignment methods that derive clear signals without external annotation. In group-relative methods, instability arises when failed trajectories dominate a group, as gradients become driven by error suppression rather than reinforcement of correct solutions. Suppressing errors reduces the probability of incorrect tokens, releasing probability mass that must be redistributed. Without sufficient positive signals to direct this mass toward correct solutions, it disperses across the vocabulary, driving the policy toward high entropy. We term this entropy collapse and analyze it in Appendix \ref{sec:entropy_collapse}. Worse still, synthesized off-policy data exacerbates this instability \cite{wu2025learning}.

\begin{wrapfigure}{r}{0.5\textwidth}
    \centering
    \includegraphics[width=\linewidth]{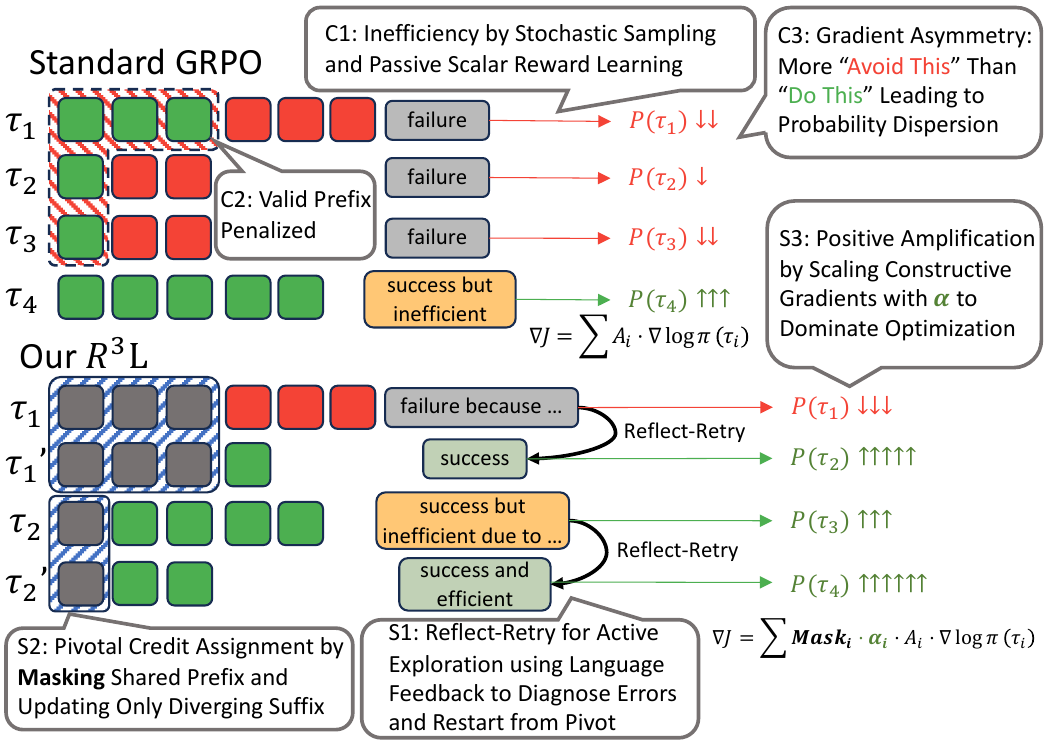}
    \caption{Comparison between standard RL (GRPO) and R$^3$L. \textcolor{red}{Red} blocks indicate erroneous steps, \textcolor{green!50!black}{Green} blocks indicate correct steps, and \textcolor{gray}{Gray} blocks indicate masked prefix excluded from gradient updates. Standard RL suffers from (C1) inefficient stochastic sampling, (C2) valid prefix penalization, and (C3) gradient asymmetry due to failure dominance. R$^3$L addresses these via (S1) reflect-then-retry for active exploration, (S2) pivotal credit, and (S3) positive amplification. The detailed R$^3$L framework is illustrated in Figure~\ref{fig:architecture}.}
    \label{fig:motivation}
\end{wrapfigure}

As depicted in Figure \ref{fig:motivation}, we propose R$^3$L, \textbf{R}eflect-then-\textbf{R}etry \textbf{R}einforcement \textbf{L}earning with Language-Guided Exploration, Pivotal Credit, and Positive Amplification. It synthesizes successful trajectories through language-guided reflection, refines credit assignment through contrastive structure, and stabilizes training through positive advantage amplification. To overcome the low success rates of stochastic sampling, R$^3$L employs a reflect-then-retry mechanism that leverages language feedback to diagnose errors and identify failure points in unsuccessful trajectories, then restarts generation from these pivots with corrective guidance to synthesize successful trajectories at reduced rollout cost. Crucially, the training data removes guidance descriptions so that the model can internalize corrections in the inference stage. To refine credit assignment, we introduce Pivotal Credit Assignment that exploits the contrastive structure between base and retry trajectories. Since they share the same prefix up to the pivot, we exclude this prefix from gradient updates and focus optimization on the diverging suffix where clear signals exist. To ensure stable learning from synthesized off-policy data, we propose Positive Amplification that scales the advantages of successful trajectories. This ensures that positive signals dominate optimization even when failures outnumber successes, preventing entropy collapse. Reflection and retry skills are maintained through auxiliary meta-tasks trained on verified corrections. Our contributions are summarized as follows:
\begin{itemize}
\item We propose a language-guided exploration strategy that synthesizes successful trajectories by diagnosing errors and restarting generation from identified failure points with corrective guidance, improving exploration efficiency while reducing rollout cost.
\item We present Pivotal Credit Assignment and Positive Amplification to stabilize training by focusing gradient updates on diverging suffixes with clear contrastive signals and ensuring positive signals dominate optimization.
\item Extensive experiments demonstrate 5\% to 52\% relative improvements over baselines on agentic tasks, including ALFWorld, WebShop, and ScienceWorld, and mathematical reasoning benchmarks.
\end{itemize}

\section{Related Work}
\label{sec:related_work}
Reinforcement learning trains LLMs on self-generated trajectories to enhance reasoning and agentic capabilities. Typically, methods like GRPO \cite{shao2024deepseekmath} use group-relative normalization to estimate advantages without learned critics, enabling recent advances in mathematical reasoning \cite{guo2025deepseek}. However, reliance on stochastic sampling and sparse rewards introduces challenges in exploration efficiency and training stability.

\subsection{Exploration Efficiency}
Stochastic sampling struggles on difficult tasks where successful trajectories are rare. Sampling-based approaches like DAPO \cite{yu2025dapo} and RAFT \cite{dongraft} compensate through oversampling and filtering, ensuring gradient validity at significant computational cost. Correction-based approaches instead leverage feedback to synthesize improved trajectories directly. HINT \cite{wang2025hint} and Agent-RLVR \cite{da2025agent} employ heuristic guidance and external critics, Goedel-Prover-V2 \cite{lin2025goedel} uses scaffolded synthesis, and Reflect-Retry-Reward \cite{bensal2025reflect} rewards self-reflection tokens that lead to successful retries. While these methods improve trajectory quality, they introduce distributional shifts that can destabilize training if not properly managed \cite{zheng2025prosperity}. R$^3$L combines language-guided synthesis with positive amplification to stabilize learning from off-policy corrections. 

% \vspace{-1em}
\subsection{Training Stability}
% \vspace{-0.7em}
Learning from sampled trajectories introduces gradient variance and credit assignment challenges. To reduce variance, GSPO \cite{zheng2025group} replaces token-level importance weights with sequence-level ratios, while BAPO \cite{xi2025bapo} introduces adaptive clipping to mitigate negative-sample dominance. For credit assignment, trajectory-level rewards penalize valid prefixes when later errors occur. Critique-GRPO \cite{zhang2025critique} uses natural language critiques to guide refinements and applies weighted advantages to the best refinement in each group. Process Reward Models \cite{wang2024math} offer step-level supervision but require expensive annotation and produce unstable signals \cite{xiong2024watch}. Alternative approaches like GiGPO \cite{feng2025group} and VinePPO \cite{kazemnejad2024vineppo} estimate step-level credit through anchor states or Monte Carlo rollouts. R$^3$L takes a distinct way, exploiting the contrastive structure of retry data for precise credit assignment without external annotation, and amplifying positive signals to stabilize off-policy learning.\looseness=-1

Similarly, Reflect-Retry-Reward \cite{bensal2025reflect} and Critique-GRPO \cite{zhang2025critique} leverage language feedback for exploration. R$^3$L differs by identifying specific failure points to reduce rollout cost, using model-driven judgments of suboptimality rather than binary verification, learning from all trajectories rather than only the best refinement or reflection tokens, and applying a simple amplification factor uniformly across the exploration group. Context distillation further removes the guidance dependency, enabling corrections to transfer directly to inference.

\section{Preliminaries}
\subsection{Problem Formulation}
We formulate the agent's interaction as a multi-turn decision process. A trajectory $\tau$ consists of $K$ turns, where each turn $k$ comprises an environment observation $x_k$ and the agent's response $y_k$:
\begin{equation}
    \tau = (x_1, y_1, x_2, y_2, \ldots, x_K, y_K).
\end{equation}
Given a reward function $R(\cdot)$, the reinforcement learning objective is to maximize the expected reward:
\begin{equation}
    J(\theta) = \mathbb{E}_{\tau \sim \pi_{\theta}} [R(\tau)].
\end{equation}
Each response $y_k$ consists of $T_k$ tokens. Let $h_k = (x_1, y_1, \ldots, x_{k-1}, y_{k-1}, x_k)$ denote the history up to turn $k$. The policy gradient is derived as:
\begin{equation}
    \nabla_{\theta} J(\theta) \!\!=\!\! \mathbb{E}_{\tau \sim \pi_{\theta}} \!\!\left[ \sum_{k=1}^{K} \!\! \sum_{t=1}^{T_k} \!\! \nabla_{\theta} \! \log \pi_{\theta}(y_k^t | h_k, y_k^{<t}) \!\cdot\! A_k^t \! \right]\!,
\end{equation}
where $\pi_{\theta}(y_k^t | h_k, y_k^{<t})$ represents the probability of token $y_k^t$ given the history, and $A_k^t$ denotes the advantage function.

\subsection{Group Relative Policy Optimization}
Group Relative Policy Optimization (GRPO) \cite{shao2024deepseekmath} estimates advantages via group-wise reward normalization, eliminating the need for a learned critic. It maintains a trainable policy $\pi_{\theta}$, a behavior policy $\pi_{\theta_{old}}$ for sampling, and a frozen reference policy $\pi_{ref}$ for KL regularization.

For each query, GRPO samples a group of $N$ trajectories $\mathcal{G} = \{\tau_1, \ldots, \tau_N\}$ using $\pi_{\theta_{old}}$. The advantage for a specific trajectory $\tau_i \in \mathcal{G}$ is computed by normalizing its reward against the group statistics:
\begin{equation}
    A(\tau_i) = \frac{R(\tau_i) - \bar{R}}{\sigma_R},
    \label{eq:grpo_advantage}
\end{equation}
where $\bar{R}$ and $\sigma_R$ denote the mean and standard deviation of rewards within group $\mathcal{G}$.
To balance computational efficiency with stability, $\pi_{\theta_{old}}$ is synchronized with $\pi_{\theta}$ every $S$ steps. To manage the resulting off-policy divergence, GRPO employs importance sampling with clipping as follows:
\begin{equation}
    \mathcal{L}_{i,k,t}\!\! =\!\! \min\! \! \left( \! r_{i,k,t} \hat{A}_{i,k,t},\! \text{clip}(r_{i,k,t}, \! 1 \!\!- \!\epsilon, \! 1 \!+\!\epsilon) \hat{A}_{i,k,t} \! \right)\!,
\end{equation}
where $r_{i,k,t} = \frac{\pi_{\theta}(y_k^t | h_k, y_k^{<t})}{\pi_{\theta_{old}}(y_k^t | h_k, y_k^{<t})}$ is the importance sampling ratio for trajectory $\tau_i$. The full GRPO objective incorporates a KL penalty to constrain policy updates:
\begin{equation}
\begin{split}
    \mathcal{J}_{GRPO}(\theta) = &\mathbb{E}_{\mathcal{G} \sim \pi_{\theta_{old}}} \left[ \frac{1}{N} \sum_{i=1}^N \frac{1}{|\tau_i|} \sum_{k,t} \mathcal{L}_{i,k,t} \right] \\
    &- \beta D_{KL}(\pi_{\theta} || \pi_{ref}).
\end{split}
\label{eq:grpo_full}
\end{equation}

\section{Methodology}
\begin{figure*}[t]
\centering
\begin{minipage}[t]{1\linewidth}
\centering
\includegraphics[width=1.0\textwidth]{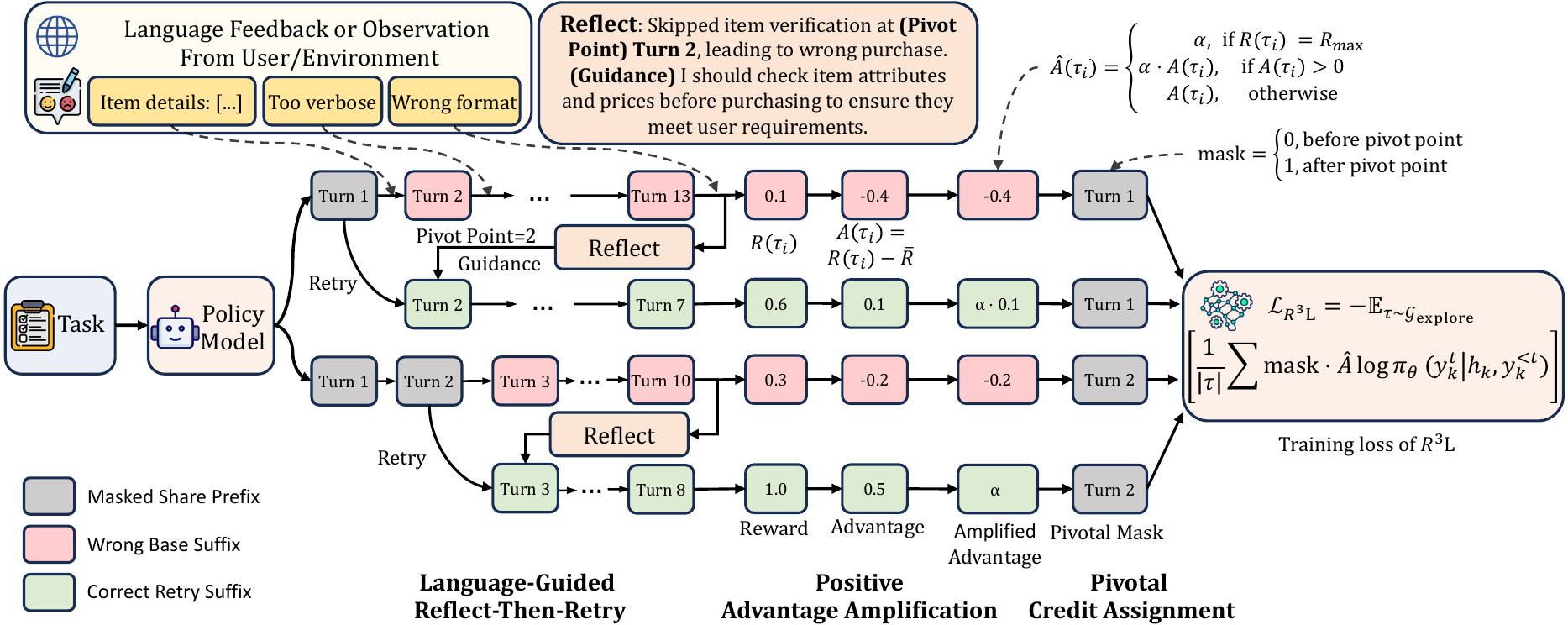}
\end{minipage}
\centering
\caption{Overview of the R$^3$L framework. The method utilizes Language-Guided Reflect-Then-Retry to synthesize high-reward trajectories via verbal feedback. To preserve valid steps, Pivotal Credit Assignment masks shared prefixes to isolate critical decision points, while Positive 
Amplification scales the advantages of successful trajectories to stabilize off-policy training.}
\label{fig:architecture}
\end{figure*}
As illustrated in Figure \ref{fig:architecture}, R$^3$L alternates between trajectory synthesis and policy optimization. In the synthesis phase, a reflect-then-retry mechanism diagnoses failures using language feedback and restarts generation from identified pivot points, transforming failed attempts into successful ones. In the optimization phase, the policy learns from both base and retry trajectories. Pivotal Credit Assignment focuses gradient updates on the diverging suffix where contrastive signals exist, while Positive Amplification upweights successful trajectories to ensure positive signals guide optimization.

\subsection{Language-Guided Reflect-Then-Retry}
The core challenge in RL is discovering successful trajectories to learn from. Stochastic sampling often fails on difficult tasks, and compensating through repeated rollouts is costly. Instead, we leverage language feedback to actively synthesize high-quality trajectories. We construct four types of training data: base trajectories from stochastic sampling, distillation trajectories from guided correction, and two meta-tasks for maintaining reflection and retry skills. The detailed protocols are shown in Appendix \ref{sec:appendix_trajectory}.

\textbf{Base Trajectories.} To present a fair comparison with standard RL methods, we allocate half the sampling budget to standard exploration and the other half for the retried samples, so the total sample size keeps the same. Given a query $x$, $N/2$ base trajectories are sampled from the behavior policy $\pi_{\theta_{old}}$. These establish baseline performance and provide raw material for reflection:
\begin{equation}
    \mathcal{D}_{base} = \{\tau_i\}_{i=1}^{N/2}, \quad \tau_i \sim \pi_{\theta_{old}}(\cdot | x).
\end{equation}

\textbf{Distillation Trajectories.} For each base sample, the model reflects on the trajectory to produce a structured diagnosis, including outcome classification (success, success but inefficient, or failure), root cause analysis, improvement suggestions, and the pivot turn $k_{pivot}$ where the issue first manifested. We refer to the turns preceding the pivot as the \emph{prefix} ($\tau_{<k_{pivot}}$) and the turns from the pivot onward as the \emph{suffix} ($\tau_{\geq k_{pivot}}$); both credit assignment and trajectory stitching operate at this turn level. For trajectories not classified as fully successful, this diagnosis is embedded into a guidance prompt, and generation restarts from $k_{pivot}$ conditioned on this guidance to obtain a corrected suffix. Crucially, we construct training inputs by pairing the original prefix with the corrected suffix, deliberately omitting the guidance. This forces the model to internalize corrections rather than rely on explicit prompts, ensuring they transfer directly to inference:
\begin{equation}
    \mathcal{D}_{distill} = \{(\tau_{i,<k_{pivot}}, \tau'_{i,\geq k_{pivot}})\}_{i=1}^{N/2}.
\end{equation}

\textbf{Learning to Reflect and Retry.} Reflection and retry are learnable skills that require explicit supervision to maintain throughout training, so we devise two auxiliary tasks. We select instances where the retry trajectory achieves a higher reward than the original attempt as verified successful corrections. Specifically, the reflection task trains the model to produce structured diagnoses $r$ given trajectories with environment feedback $f$, while the retry task trains the model to generate corrected suffixes given prefixes concatenated with guidance $g$ (denoted $\oplus$):
\begin{equation}
\begin{gathered}
    \mathcal{D}_{reflect} = \{([\tau_i, f_i], r_i)\}, \\
    \mathcal{D}_{retry} = \{(\tau_{i,<k_{pivot}} \oplus g_i, \tau'_{i,\geq k_{pivot}})\}.
\end{gathered}
\end{equation}

\textbf{Training Groups.} $\mathcal{D}_{base}$ and $\mathcal{D}_{distill}$ form the exploration group $\mathcal{G}_{explore}$ for RL optimization, while $\mathcal{D}_{reflect}$ and $\mathcal{D}_{retry}$ serve as auxiliary SFT objectives to sustain the exploration engine throughout training as Equation \ref{eq:R3L}. Unlike Critique-GRPO which performs $N$ explorations followed by $N$ retries and selects only the best refinement, R$^3$L allocates the budget evenly between base and retry trajectories, learns from all samples in the group, and explicitly maintains reflection and retry skills through auxiliary tasks.

\subsection{Pivotal Credit Assignment}
Standard trajectory-level credit assigns the same reward signal to all tokens, penalizing all tokens when later errors occur. A failure at the final step suppresses the correct reasoning that preceded it, introducing noise into gradient estimates.

To assign credit more precisely, we leverage the contrastive structure of the exploration group. Base and distillation trajectories share an identical prefix up to the pivot turn $k_{pivot}$ and diverge only in the suffix. Since both trajectories behave identically in the prefix region, this shared portion provides no information about which path is better. Only the diverging suffix, where one succeeds and the other does not, reveals which decision was correct.\looseness=-1

We exploit this structure by applying a gradient mask that excludes the shared prefix from updates. For the $t$-th token in turn $k$:
\begin{equation}
    \text{mask}_k^t = \begin{cases}
        0 & \text{if } k < k_{pivot}, \\
        1 & \text{otherwise}.
    \end{cases}
\end{equation}
Tokens before the pivot turn receive zero weight in the gradient, focusing optimization on the critical branching decision. Functionally, the shared prefix acts as a control variate, because policy behavior is identical in this region, including it would introduce variance without providing useful signal. Masking cancels this correlated noise and maximizes the signal-to-noise ratio, as analyzed in Appendix \ref{sec:analysis_pivot}.

\subsection{Positive Amplification}
\label{sec:positive_amplification}
While the reflect-then-retry mechanism synthesizes high-quality trajectories, learning from them poses stability challenges. On difficult tasks, failures often dominate the exploration group. Standard GRPO computes advantages by normalizing rewards against the group mean, so when most trajectories fail, this normalization dilutes the positive signal from the few successful ones, causing them to be overwhelmed by the mass of negative samples. The challenge is further compounded by distribution shifts, as retry trajectories are generated under guidance, creating a mismatch with the current policy that standard importance sampling struggles to correct.\looseness=-1

As analyzed in Appendix \ref{sec:entropy_collapse}, this dilution leads to gradient asymmetry. Failed trajectories provide destructive gradients that suppress incorrect actions, while successful trajectories provide constructive gradients that reinforce correct paths. In difficult tasks, destructive signals are abundant but low-value, merely telling the model what not to do. Constructive signals are rare but high-value. Without intervention, the aggregate gradient of dense negative samples structurally overpowers the rare positive signals, forcing the policy to unlearn behaviors blindly and driving the distribution toward high entropy rather than convergence.

Recent methods address this imbalance through various mechanisms. DeepSeek-V3.2 \cite{liu2025deepseek32} masks out negative-advantage sequences with high policy divergence. BAPO \cite{xi2025bapo} dynamically adjusts clipping bounds through multiple hyperparameters to rebalance positive and negative contributions. Critique-GRPO \cite{zhang2025critique} applies policy shaping only to refinements. Positive Amplification takes a simpler approach: It suffices to ensure that constructive gradients dominate within each group. A single amplification factor $\alpha > 1$ applied uniformly suffices. For trajectory $\tau$ with advantage $A(\tau)$:
\begin{equation}
\label{eq:reweight_advantage}
    \hat{A}(\tau) = \begin{cases}
    \alpha & \text{if } R(\tau) = R_{max}, \\
    \alpha \cdot A(\tau) & \text{if } A(\tau) > 0, \\
    A(\tau) & \text{otherwise}.
    \end{cases}
\end{equation}
Trajectories achieving the maximum reward in the group receive the full amplification factor, other positive-advantage trajectories are scaled proportionally, and negative-advantage trajectories remain unchanged. By scaling up positive advantages, constructive gradients become strong enough to dominate the group, channeling probability mass toward discovered solutions rather than letting it disperse. We find $\alpha=3.0$ works well across tasks, with sensitivity analysis in Appendix \ref{sec:hyperparam}.

Combining the pivotal mask with amplified advantages, the final R$^3$L objective is:
\begin{equation}
\begin{split}
    \mathcal{L}_{R^3L} = & -\mathbb{E}_{\tau \sim \mathcal{G}_{explore}} \bigg[ \frac{1}{|\tau|} \sum_{k=1}^{K} \sum_{t=1}^{T_k} \text{mask}_k^t \cdot \hat{A}(\tau) \\
    & \qquad \cdot \log \pi_{\theta}(y_k^t | h_k, y_k^{<t}) \bigg] \\
    & + \mathcal{L}_{SFT}(\mathcal{D}_{reflect} \cup \mathcal{D}_{retry}),
\end{split}
\label{eq:R3L}
\end{equation}
where $\mathcal{L}_{SFT}$ is the auxiliary supervised fine-tuning loss on verified successful corrections to maintain reflection and retry skills throughout training. Unlike standard GRPO, we remove both importance sampling and KL constraints. Importance sampling becomes unreliable for retry trajectories generated under guidance, as the behavior distribution differs fundamentally from the current policy. KL constraints are unnecessary because positive amplification already prevents the policy from drifting toward high entropy. This simplification reduces memory overhead and computational cost while maintaining training stability, as validated for stability in Appendix~\ref{sec:stability} and for compute in Appendix~\ref{sec:compute}.\looseness=-1

\section{Experiments}
\subsection{Experimental Setup}
\textbf{Benchmarks.} We evaluate R$^3$L on two categories of tasks:
\begin{itemize}[leftmargin=*,nosep]
    \item \textbf{Agentic Environments:} ALFWorld (language-only version) \cite{shridhar2020alfworld} for embodied decision-making, WebShop \cite{yao2022webshop} for web navigation, and ScienceWorld \cite{wang2022scienceworld} for long-horizon scientific reasoning.
    \item \textbf{Mathematical Reasoning:} We train the model on the DAPO training set and evaluate on GSM8K \cite{cobbe2021training}, Math500 \cite{lightman2023let}, MinervaMath \cite{lewkowycz2022solving}, OlympiadBench \cite{gao2024omni}, AMC23 \cite{amc}, and the DAPO test set \cite{yu2025dapo} to assess cross-distribution generalization. Following DeepSeek-R1~\cite{guo2025deepseek}, we use the \texttt{<think>/<answer>} output format, which also matches our agent environment defaults. Post-training scores on distribution-distant benchmarks can therefore fall below zero-shot \texttt{\textbackslash boxed\{\}} numbers from backbone technical reports; see Appendix~\ref{sec:eval_protocol}.
\end{itemize}

\textbf{Baselines.} We compare against rejection sampling methods like RAFT \cite{dongraft}, group-relative policy optimization variants including GRPO \cite{shao2024deepseekmath}, OPMD \cite{yao2025group}, and GSPO \cite{zheng2025group}, as well as language-feedback guided approaches such as Critique-GRPO \cite{zhang2025critique} and Reflect-Retry-Reward \cite{bensal2025reflect}, which we abbreviate as Reflect-GRPO. All baselines are reproduced within the Trinity-RFT framework \cite{pan2025trinity} to ensure fair comparison.

\begin{table*}[t]
\centering
\resizebox{\textwidth}{!}{%
\begin{tabular}{ll ccc cccccc}
\toprule
& & \multicolumn{3}{c}{\textbf{Agentic Environments}} & \multicolumn{6}{c}{\textbf{Mathematical Reasoning}} \\
\cmidrule(lr){3-5} \cmidrule(lr){6-11}
\textbf{Model} & \textbf{Method} & \textbf{ALFWorld} & \textbf{WebShop} & \textbf{ScienceWorld} & \textbf{GSM8K} & \textbf{Math500} & \textbf{MinervaMath} & \textbf{Olympiad} & \textbf{AMC23} & \textbf{DAPO} \\
\midrule
% --- 1.5B Model Data ---
\multirow{7}{*}{\texttt{Qwen2.5-1.5B-Ins}}
& RAFT & 0.826 & 0.450 & 0.110 & 0.434 & 0.204 & 0.051 & 0.053 & 0.125 & 0.086 \\
& OPMD & 0.835 & 0.561 & 0.356 & 0.463 & 0.292 & 0.062 & 0.123 & 0.125 & 0.070 \\
& GRPO & 0.720 & \underline{0.614} & \underline{0.366} & 0.474 & 0.368 & 0.099 & 0.114 & 0.250 & 0.123 \\
& GSPO & 0.857 & 0.566 & 0.200 & 0.518 & 0.196 & 0.077 & 0.087 & 0.275 & \underline{0.142} \\
& Reflect-GRPO & 0.878 & 0.559 & 0.347 & 0.672 & 0.376 & 0.102 & \underline{0.130} & \underline{0.300} & 0.136 \\
& Critique-GRPO & \underline{0.914} & 0.613 & 0.314 & \textbf{0.798} & \underline{0.404} & \underline{0.110} & 0.124 & 0.275 & 0.133 \\
\rowcolor{cyan!10} & \textbf{R$^3$L (Ours)} & \textbf{0.928} & \textbf{0.663} & \textbf{0.385} & \underline{0.721} & \textbf{0.424} & \textbf{0.125} & \textbf{0.151} & \textbf{0.325} & \textbf{0.156} \\
\midrule
% --- 7B Model Data  ---
\multirow{7}{*}{\texttt{Qwen2.5-7B-Ins}}
& RAFT & 0.914 & 0.682 & 0.201 & \underline{0.884} & \underline{0.592} & \underline{0.257} & 0.210 & 0.575 & 0.383 \\
& OPMD & 0.914 & 0.684 & 0.359 & 0.836 & 0.352 & 0.252 & 0.133 & 0.225 & 0.270 \\
& GRPO & 0.933 & 0.709 & 0.378 & 0.846 & 0.572 & 0.239 & 0.277 & \underline{0.675} & 0.393 \\
& GSPO & 0.895 & 0.720 & 0.363 & 0.866 & 0.548 & 0.209 & \underline{0.284} & \underline{0.675} & \underline{0.413} \\
& Reflect-GRPO & \underline{0.942} & \underline{0.723} & 0.356 & 0.765 & 0.532 & 0.194 & 0.250 & 0.550 & 0.396 \\
& Critique-GRPO & 0.921 & 0.714 & \underline{0.388} & 0.678 & 0.522 & 0.152 & 0.170 & 0.300 & 0.390 \\
\rowcolor{cyan!10} & \textbf{R$^3$L (Ours)} & \textbf{0.948} & \textbf{0.757} & \textbf{0.403} & \textbf{0.897} & \textbf{0.658} & \textbf{0.275} & \textbf{0.301} & \textbf{0.700} & \textbf{0.436} \\
\midrule
% --- Qwen3-4B Model Data ---
\multirow{7}{*}{\texttt{Qwen3-4B}}
& RAFT & 0.886 & 0.659 & 0.350 & 0.914 & 0.638 & 0.326 & 0.516 & 0.650 & 0.505 \\
& OPMD & 0.878 & 0.678 & 0.342 & 0.922 & 0.663 & 0.358 & 0.492 & 0.675 & 0.531 \\
& GRPO & 0.912 & 0.695 & 0.368 & \underline{0.934} & 0.718 & 0.350 & 0.546 & 0.725 & 0.540 \\
& GSPO & 0.916 & 0.707 & 0.376 & 0.928 & \underline{0.722} & 0.364 & 0.538 & \textbf{0.800} & 0.548 \\
& Reflect-GRPO & 0.938 & 0.724 & \underline{0.378} & 0.920 & 0.654 & 0.366 & \underline{0.552} & 0.650 & 0.547 \\
& Critique-GRPO & \underline{0.942} & \underline{0.725} & 0.374 & 0.926 & 0.705 & \underline{0.368} & 0.544 & \underline{0.775} & \underline{0.558} \\
\rowcolor{cyan!10} & \textbf{R$^3$L (Ours)} & \textbf{0.962} & \textbf{0.746} & \textbf{0.398} & \textbf{0.948} & \textbf{0.753} & \textbf{0.383} & \textbf{0.571} & \textbf{0.800} & \textbf{0.583} \\
\bottomrule
\end{tabular}
}
\caption{Main results across agentic environments and mathematical reasoning benchmarks. We report Average Reward for all tasks. \textbf{Bold} indicates the best performance and \underline{underline} indicates the second best.}
\label{tab:main_results}
\vspace{-0.5em}
% \vspace{-0.8em}
\end{table*}

\begin{table*}[t]
\centering
\resizebox{\textwidth}{!}{%
\begin{tabular}{lccccccccc}
\toprule
& \multicolumn{3}{c}{\textbf{Agentic Environments}} & \multicolumn{6}{c}{\textbf{Mathematical Reasoning}} \\
\cmidrule(lr){2-4} \cmidrule(lr){5-10}
\textbf{Method} & \textbf{ALFWorld} & \textbf{WebShop} & \textbf{ScienceWorld} & \textbf{GSM8K} & \textbf{Math500} & \textbf{MinervaMath} & \textbf{Olympiad} & \textbf{AMC23} & \textbf{DAPO} \\
\midrule
\rowcolor{cyan!10}\textbf{R$^3$L (Full)} & \textbf{0.928} & \textbf{0.663} & \textbf{0.385} & \textbf{0.721} & \textbf{0.424} & \textbf{0.125} & \textbf{0.151} & \textbf{0.325} & \textbf{0.156} \\
w/o Positive & 0.881 & 0.646 & 0.373 & 0.685 & 0.391 & 0.112 & 0.143 & 0.300 & 0.144 \\
w/o Credit & 0.914 & 0.649 & 0.378 & 0.706 & 0.410 & 0.118 & 0.150 & 0.325 & 0.153 \\
w/o Reflect & 0.894 & 0.628 & 0.371 & 0.562 & 0.389 & 0.105 & 0.133 & 0.275 & 0.141 \\
\midrule
GRPO (Baseline) & 0.720 & 0.614 & 0.366 & 0.474 & 0.368 & 0.099 & 0.114 & 0.250 & 0.123 \\
\bottomrule
\end{tabular}
}
\caption{Ablation study on Qwen2.5-1.5B-Instruct. Each row removes the component from the R$^3$L framework. Removing Reflect-then-Retry inherently disables Pivotal Credit Assignment as the Credit module depends on pivot points identified by reflection.}
\label{tab:ablation}
\vspace{-1em}
\end{table*}

\textbf{Implementation.} We conduct main experiments on Qwen2.5-1.5B-Instruct, Qwen2.5-7B-Instruct, and Qwen3-4B, with cross-architecture evaluation on Llama-3.2-3B-Instruct reported in Appendix~\ref{sec:cross_arch}. Key hyperparameters include group size $N=8$, amplification factor $\alpha=3.0$, and synchronization step $S=1$ for updating the behavior policy. We report trajectory-level \textbf{Average Reward} as the primary metric, measuring task completion rate for agentic environments and solution accuracy for mathematical reasoning. Subsequent ablation studies and analyses are mainly performed using Qwen2.5-1.5B-Instruct unless otherwise specified. More baseline reproduction and task-specific implementation details are provided in Appendix \ref{app:implementation}.

\subsection{Main Results}
Table \ref{tab:main_results} presents results across agentic environments and mathematical reasoning benchmarks. Rejection sampling methods like RAFT reach 0.914 on ALFWorld and 0.884 on GSM8K with the 7B model, but degrade to 0.826 and 0.434 with the 1.5B model, where successful samples are scarce. Group-relative methods provide solid baselines but rely on stochastic sampling. GRPO reaches 0.720 on ALFWorld and 0.474 on GSM8K with the 1.5B model, while GSPO improves to 0.857 on ALFWorld through variance reduction yet lacks active trajectory synthesis. Language-feedback guided methods improve exploration. Reflect-GRPO reaches 0.878 on ALFWorld and 0.672 on GSM8K with the 1.5B model, but cannot stabilize learning from the shifted distribution. Critique-GRPO further improves to 0.914 on ALFWorld and 0.798 on GSM8K by selecting the best refinement, though discarding other refinement trajectories limits signal diversity.

R$^3$L achieves the best or second-best performance on all 27 settings. On agentic environments, R$^3$L ranks first on all 9 settings, reaching 0.928 on ALFWorld, 0.663 on WebShop, and 0.385 on ScienceWorld with the 1.5B model. The gains are most pronounced on WebShop and ScienceWorld, surpassing the strongest baselines by 8.0\% and 5.2\% respectively, where Pivotal Credit Assignment protects valid prefixes from late-error penalties. On mathematical reasoning, R$^3$L achieves 0.897 on GSM8K, 0.658 on Math500, and 0.301 on OlympiadBench with the 7B model. The only exception to achieving the first best is GSM8K on 1.5B, where Critique-GRPO outperforms R$^3$L with 0.798 versus 0.721. The reason for the exception is that GSM8K is simple enough that stochastic sampling already produces sufficient successful trajectories. Nevertheless, R$^3$L still achieves 52\% gains over GRPO on GSM8K, and harder benchmarks benefit substantially more from its deliberate mechanisms for trajectory synthesis, fine-grained credit, and stable learning from sparse positive signals.

Results on the modern Qwen3-4B backbone further confirm that R$^3$L's benefits persist on stronger pretrained models. Even with strong baselines on Qwen3-4B, where GRPO reaches 0.934 on GSM8K and GSPO reaches 0.722 on Math500, R$^3$L still pushes these to 0.948 and 0.753, respectively, attaining the best or tied-best score on all 9 benchmarks. This suggests that the structural bottlenecks R$^3$L targets, including prefix penalty, entropy collapse, and inefficient exploration, persist across model generations rather than being resolved by scale or newer pretraining alone.

\subsection{Ablation Study} 
\vspace{-0.3em}
To validate the necessity of each component, we systematically remove Positive Amplification, Pivotal Credit Assignment, and Reflect-then-Retry. Since the Credit module depends on pivot points identified by reflection, removing the Reflect module inherently disables the Credit mechanism.

Table \ref{tab:ablation} presents the ablation results. Removing Reflect-then-Retry causes the most severe degradation, with GSM8K dropping from 0.721 to 0.562 and WebShop from 0.663 to 0.628. This confirms that stochastic sampling alone cannot penetrate sparse-reward landscapes, and active trajectory synthesis is essential for discovering successful solutions. Excluding Positive Amplification degrades performance across all benchmarks, with ALFWorld dropping from 0.928 to 0.881 and GSM8K from 0.721 to 0.685. In failure-dominated regimes, the abundance of negative samples overwhelms rare positive signals, and without amplification, gradients focus on suppressing errors rather than reinforcing correct paths. Omitting Pivotal Credit Assignment has the smallest but consistent impact, with ALFWorld dropping from 0.928 to 0.914 and GSM8K from 0.721 to 0.706. The effect is most visible on long-horizon tasks where trajectory-level rewards penalize valid prefixes when errors occur.

\subsection{Efficacy of Language-Guided Reflection}
\label{sec:reflect_efficacy}
\begin{wrapfigure}{r}{0.55\textwidth}
    \centering
    \begin{minipage}{0.49\linewidth}
        \centering
        \begin{tikzpicture}
            \begin{axis}[
                width=\linewidth, height=3.8cm,
                xlabel={Training Steps}, ylabel={Reward Gain},
                grid=major, ymin=0,
                font=\scriptsize, yticklabel={\pgfmathprintnumber{\tick}},
            ]
            \addplot[color=blue, mark=square*, mark size=0pt, thick, smooth, tension=1.0]
            table[x=step, y=gain] {figure/src/alfworld-gain.dat};
            \addplot[color=red, mark=*, mark size=0pt, thick, smooth, tension=1.0]
            table[x=step, y=gain] {figure/src/alfworld-7b-gain.dat};
            \addplot[color=green!70!black, mark=triangle*, mark size=0pt, thick, smooth, tension=1.0]
            table[x=step, y=gain] {figure/src/webshop-gain.dat};
            \addplot[color=orange, mark=diamond*, mark size=0pt, thick, smooth, tension=1.0]
            table[x=step, y=gain] {figure/src/dapo-gain.dat};
            \end{axis}
        \end{tikzpicture}
        \centerline{\scriptsize (a) Avg. Reward Gain}
    \end{minipage}\hfill
    \begin{minipage}{0.49\linewidth}
        \centering
        \begin{tikzpicture}
            \begin{axis}[
                width=\linewidth, height=3.8cm,
                xlabel={Training Steps}, ylabel={Improv. Rate (\%)},
                grid=major, ymin=0,
                font=\scriptsize, yticklabel={\pgfmathprintnumber{\tick}},
            ]
            \addplot[color=blue, mark=square*, mark size=0pt, thick, smooth, tension=1.0]
            table[x=step, y=rir] {figure/src/alfworld-rir.dat};
            \addplot[color=red, mark=*, mark size=0pt, thick, smooth, tension=1.0]
            table[x=step, y=rir] {figure/src/alfworld-7b-rir.dat};
            \addplot[color=green!70!black, mark=triangle*, mark size=0pt, thick, smooth, tension=1.0]
            table[x=step, y=rir] {figure/src/webshop-rir.dat};
            \addplot[color=orange, mark=diamond*, mark size=0pt, thick, smooth, tension=1.0]
            table[x=step, y=rir] {figure/src/dapo-rir.dat};
            \end{axis}
        \end{tikzpicture}
        \centerline{\scriptsize (b) Retry Improvement Rate}
    \end{minipage}

    \vspace{0.4em}
    {\scriptsize
    \textcolor{blue}{\rule[0.35ex]{1.1em}{1.3pt}}\,ALFWorld (1.5B)\quad
    \textcolor{red}{\rule[0.35ex]{1.1em}{1.3pt}}\,ALFWorld (7B)\par
    \textcolor{green!70!black}{\rule[0.35ex]{1.1em}{1.3pt}}\,WebShop (7B)\quad
    \textcolor{orange}{\rule[0.35ex]{1.1em}{1.3pt}}\,DAPO (7B)\par}

    \caption{Evolution of exploration metrics across environments by the Reflect-then-Retry mechanism. \textbf{(a)} The average Reward Gain of retry trajectory relative to the base trajectory. \textbf{(b)} The percentage of retry trajectories that successfully improved upon the base attempt.}
    \label{fig:exploration_metrics}
\end{wrapfigure}
\begin{table}[t]
\centering
\small
\adjustbox{max width=\linewidth}{%
\begin{tabular}{lcccc}
\toprule
\textbf{Model} & \textbf{ALFWorld} & \textbf{WebShop} & \textbf{ScienceWorld} & \textbf{DAPO} \\
\midrule
Qwen2.5-1.5B-Ins & 64.7\% & 23.7\% & 13.5\% & 16.4\% \\
Llama3.2-3B-Ins & 12.8\% & 31.6\% & 9.2\% & 10.3\% \\
Qwen2.5-7B-Ins & 73.9\% & 36.5\% & 19.6\% & 27.1\% \\
\bottomrule
\end{tabular}%
}
\caption{Average reward improvement rate achieved by the Reflect-then-Retry mechanism.}
\label{tab:retry_gain_stats}
\vspace{-1.2em}
\end{table}

A key question is whether language-guided reflection can reliably improve upon base trajectories throughout training. To analyze this, we track two metrics: the Retry Improvement Rate, defined as the percentage of retry trajectories that achieve higher reward than their corresponding base attempts ($R_{retry} > R_{base}$), and the Reward Gain, defined as the relative improvement $(\bar{R}_{retry} - \bar{R}_{base}) / \bar{R}_{base}$ in average reward. Figure \ref{fig:exploration_metrics} illustrates the temporal evolution of these metrics across environments and model scales. We observe two distinct patterns in the training dynamics.

\textbf{Cold Start and Warm-up.} Most configurations start with Reward Gain near zero, including ALFWorld with the 1.5B model, WebShop with the 7B model, and DAPO with the 7B model. During this phase the model learns to generate structured diagnoses and apply corrections effectively. ALFWorld with the 7B model is the exception, starting with Reward Gain around 0.4 because its stronger base capabilities enable effective reflection without additional adaptation.

\textbf{Sustained Improvement Gap.} After warm-up, all configurations show sustained improvement that persists throughout training. We achieve the highest Reward Gain on ALFWorld, reaching 0.6 with the 1.5B model and 0.8 with the 7B model; DAPO follows with 0.3 and WebShop reaches 0.15. Table \ref{tab:retry_gain_stats} summarizes the average Retry Improvement Rate across the full training process: Qwen2.5-7B-Instruct reaches 73.9\% on ALFWorld and 36.5\% on WebShop, while Qwen2.5-1.5B-Instruct follows with 64.7\% and 23.7\% respectively. Task-level variation in retry gains reflects differences in how amenable each error type is to language-guided correction (Appendix~\ref{sec:error_amenability}).
Two key factors drive this improvement. First, as the model masters environment dynamics, both base performance and reflection quality improve in tandem. Second, the auxiliary SFT on verified corrections explicitly maintains reflection and retry capabilities, preventing the skill degradation observed in methods like Critique-GRPO and Reflect-GRPO that lack such supervision. Together, these factors keep Positive Amplification supplied with high-quality trajectories. Pivot localization accuracy against oracle labels supports this trend (Appendix~\ref{sec:pivot_accuracy}).

%\vspace{-0.2em}
\section{Conclusion}
% \vspace{-0.5em}
Current reinforcement learning approaches for LLMs are severely constrained by inefficient exploration, coarse credit assignment, and training instability in failure-dominated regimes. To overcome these structural bottlenecks, we introduced R$^3$L, which transforms sparse-reward environments into rich learning opportunities. By leveraging Reflect-then-Retry to actively synthesize high-quality trajectories, employing Pivotal Credit Assignment to protect valid reasoning prefixes, and utilizing Positive Amplification to ensure that successful signals dominate gradient updates, R$^3$L achieves robust learning where standard methods collapse or converge prematurely.

\section*{Limitations}
Despite the merits, we acknowledge that R$^3$L has several limitations as follows. First, the reflection step requires an additional inference pass to diagnose errors. However, the pivot mechanism compensates by restarting generation from identified failure points rather than from scratch, reducing per-step rollout turns by 28--34\% on multi-step agentic tasks. Across four model-task settings we measure, R$^3$L's per-step rollout time stays below GRPO's while training tokens on multi-step tasks are roughly halved, and Critique-GRPO runs 1.8--2.0$\times$ slower due to its $2N$ rollouts (Appendix~\ref{sec:compute}).

Second, we observe a cold-start phenomenon in smaller models. Unlike the 7B model that can immediately leverage reflection prompts, the 1.5B model initially struggles to generate actionable self-corrections. This necessitates a longer warm-up period to bootstrap reflection capability, potentially complicating the training pipeline for low-resource settings.\looseness=-1

Finally, while our framework is theoretically applicable to any domain where a preference signal exists, our experimental scope is limited to tasks with verifiable ground truth. We have not validated R$^3$L in open-ended domains with subjective criteria such as creative writing, where the reliability of automated retry validation remains an open question for future research.

\bibliographystyle{colm2026_conference}
\bibliography{custom}

@article{shao2024deepseekmath,
  title={Deepseekmath: Pushing the limits of mathematical reasoning in open language models},
  author={Shao, Zhihong and Wang, Peiyi and Zhu, Qihao and Xu, Runxin and Song, Junxiao and Bi, Xiao and Zhang, Haowei and Zhang, Mingchuan and Li, YK and Wu, Yang and others},
  journal={arXiv preprint arXiv:2402.03300},
  year={2024}
}

@article{yu2025dapo,
  title={Dapo: An open-source llm reinforcement learning system at scale},
  author={Yu, Qiying and Zhang, Zheng and Zhu, Ruofei and Yuan, Yufeng and Zuo, Xiaochen and Yue, Yu and Dai, Weinan and Fan, Tiantian and Liu, Gaohong and Liu, Lingjun and others},
  journal={arXiv preprint arXiv:2503.14476},
  year={2025}
}

@article{dongraft,
  title={RAFT: Reward rAnked FineTuning for Generative Foundation Model Alignment},
  author={Dong, Hanze and Xiong, Wei and Goyal, Deepanshu and Zhang, Yihan and Chow, Winnie and Pan, Rui and Diao, Shizhe and Zhang, Jipeng and SHUM, KaShun and Zhang, Tong},
  journal={Transactions on Machine Learning Research},
  year={2023}
}

@article{wang2025hint,
  title={HINT: Helping Ineffective Rollouts Navigate Towards Effectiveness},
  author={Wang, Xinyi and Han, Jinyi and Jiang, Zishang and Li, Tingyun and Liang, Jiaqing and Jiang, Sihang and Dai, Zhaoqian and Ma, Shuguang and Yu, Fei and Xiao, Yanghua},
  journal={arXiv preprint arXiv:2510.09388},
  year={2025}
}

@article{da2025agent,
  title={Agent-RLVR: Training Software Engineering Agents via Guidance and Environment Rewards},
  author={Da, Jeff and Wang, Clinton and Deng, Xiang and Ma, Yuntao and Barhate, Nikhil and Hendryx, Sean},
  journal={arXiv preprint arXiv:2506.11425},
  year={2025}
}

@article{lin2025goedel,
  title={Goedel-prover-v2: Scaling formal theorem proving with scaffolded data synthesis and self-correction},
  author={Lin, Yong and Tang, Shange and Lyu, Bohan and Yang, Ziran and Chung, Jui-Hui and Zhao, Haoyu and Jiang, Lai and Geng, Yihan and Ge, Jiawei and Sun, Jingruo and others},
  journal={arXiv preprint arXiv:2508.03613},
  year={2025}
}

@article{zheng2025group,
  title={Group sequence policy optimization},
  author={Zheng, Chujie and Liu, Shixuan and Li, Mingze and Chen, Xiong-Hui and Yu, Bowen and Gao, Chang and Dang, Kai and Liu, Yuqiong and Men, Rui and Yang, An and others},
  journal={arXiv preprint arXiv:2507.18071},
  year={2025}
}

@article{xi2025bapo,
  title={BAPO: Stabilizing Off-Policy Reinforcement Learning for LLMs via Balanced Policy Optimization with Adaptive Clipping},
  author={Xi, Zhiheng and Guo, Xin and Nan, Yang and Zhou, Enyu and Shen, Junrui and Chen, Wenxiang and Liu, Jiaqi and Huang, Jixuan and Zhang, Zhihao and Guo, Honglin and others},
  journal={arXiv preprint arXiv:2510.18927},
  year={2025}
}

@inproceedings{wang2024math,
  title={Math-Shepherd: Verify and Reinforce LLMs Step-by-step without Human Annotations},
  author={Wang, Peiyi and Li, Lei and Shao, Zhihong and Xu, Runxin and Dai, Damai and Li, Yifei and Chen, Deli and Wu, Yu and Sui, Zhifang},
  booktitle={ACL},
  pages={9426--9439},
  year={2024}
}

@inproceedings{xiong2024watch,
  title={Watch Every Step! LLM Agent Learning via Iterative Step-level Process Refinement},
  author={Xiong, Weimin and Song, Yifan and Zhao, Xiutian and Wu, Wenhao and Wang, Xun and Wang, Ke and Li, Cheng and Peng, Wei and Li, Sujian},
  booktitle={EMNLP},
  pages={1556--1572},
  year={2024}
}

@article{feng2025group,
  title={Group-in-group policy optimization for llm agent training},
  author={Feng, Lang and Xue, Zhenghai and Liu, Tingcong and An, Bo},
  journal={arXiv preprint arXiv:2505.10978},
  year={2025}
}

@article{kazemnejad2024vineppo,
  title={VinePPO: Refining Credit Assignment in RL Training of LLMs},
  author={Kazemnejad, Amirhossein and Aghajohari, Milad and Portelance, Eva and Sordoni, Alessandro and Reddy, Siva and Courville, Aaron and Roux, Nicolas Le},
  journal={arXiv preprint arXiv:2410.01679},
  year={2024}
}

@article{bensal2025reflect,
  title={Reflect, Retry, Reward: Self-Improving LLMs via Reinforcement Learning},
  author={Bensal, Shelly and Jamil, Umar and Bryant, Christopher and Russak, Melisa and Kamble, Kiran and Mozolevskyi, Dmytro and Ali, Muayad and AlShikh, Waseem},
  journal={arXiv preprint arXiv:2505.24726},
  year={2025}
}

@article{zheng2025prosperity,
  title={Prosperity before Collapse: How Far Can Off-Policy RL Reach with Stale Data on LLMs?},
  author={Zheng, Haizhong and Zhao, Jiawei and Chen, Bedi},
  journal={arXiv preprint arXiv:2510.01161},
  year={2025}
}

@article{cui2025enhancing,
  title={Enhancing Tool Learning in Large Language Models with Hierarchical Error Checklists},
  author={Cui, Yue and Yao, Liuyi and Tao, Shuchang and Shi, Weijie and Li, Yaliang and Ding, Bolin and Zhou, Xiaofang},
  journal={arXiv preprint arXiv:2506.00042},
  year={2025}
}

@article{shi2025semantic,
  title={Semantic-guided Diverse Decoding for Large Language Model},
  author={Shi, Weijie and Cui, Yue and Wu, Yaguang and Fang, Jingzhi and Zhang, Shibo and Li, Mengze and Han, Sirui and Zhu, Jia and Xu, Jiajie and Zhou, Xiaofang},
  journal={arXiv preprint arXiv:2506.23601},
  year={2025}
}

@article{plaat2025agentic,
  title={Agentic large language models, a survey},
  author={Plaat, Aske and van Duijn, Max and van Stein, Niki and Preuss, Mike and van der Putten, Peter and Batenburg, Kees Joost},
  journal={arXiv preprint arXiv:2503.23037},
  year={2025}
}

@article{guo2025deepseek,
  title={Deepseek-r1: Incentivizing reasoning capability in llms via reinforcement learning},
  author={Guo, Daya and Yang, Dejian and Zhang, Haowei and Song, Junxiao and Zhang, Ruoyu and Xu, Runxin and Zhu, Qihao and Ma, Shirong and Wang, Peiyi and Bi, Xiao and others},
  journal={arXiv preprint arXiv:2501.12948},
  year={2025}
}

@article{jin2025search,
  title={Search-r1: Training llms to reason and leverage search engines with reinforcement learning},
  author={Jin, Bowen and Zeng, Hansi and Yue, Zhenrui and Yoon, Jinsung and Arik, Sercan and Wang, Dong and Zamani, Hamed and Han, Jiawei},
  journal={arXiv preprint arXiv:2503.09516},
  year={2025}
}

@article{nan2025ngrpo,
  title={Ngrpo: Negative-enhanced group relative policy optimization},
  author={Nan, Gongrui and Chen, Siye and Huang, Jing and Lu, Mengyu and Wang, Dexun and Xie, Chunmei and Xiong, Weiqi and Zeng, Xianzhou and Zhou, Qixuan and Li, Yadong and others},
  journal={arXiv preprint arXiv:2509.18851},
  year={2025}
}

@article{zhang2025critique,
  title={Critique-grpo: Advancing llm reasoning with natural language and numerical feedback},
  author={Zhang, Xiaoying and Sun, Hao and Zhang, Yipeng and Feng, Kaituo and Lu, Chaochao and Yang, Chao and Meng, Helen},
  journal={arXiv preprint arXiv:2506.03106},
  year={2025}
}

@inproceedings{wu2025learning,
  title={Learning from imperfect demonstrations with self-supervision for robotic manipulation},
  author={Wu, Kun and Liu, Ning and Zhao, Zhen and Qiu, Di and Li, Jinming and Che, Zhengping and Xu, Zhiyuan and Tang, Jian},
  booktitle={2025 IEEE International Conference on Robotics and Automation (ICRA)},
  pages={16899--16906},
  year={2025},
  organization={IEEE}
}

@article{shridhar2020alfworld,
  title={Alfworld: Aligning text and embodied environments for interactive learning},
  author={Shridhar, Mohit and Yuan, Xingdi and C{\^o}t{\'e}, Marc-Alexandre and Bisk, Yonatan and Trischler, Adam and Hausknecht, Matthew},
  journal={arXiv preprint arXiv:2010.03768},
  year={2020}
}

@article{yao2022webshop,
  title={Webshop: Towards scalable real-world web interaction with grounded language agents},
  author={Yao, Shunyu and Chen, Howard and Yang, John and Narasimhan, Karthik},
  journal={Advances in Neural Information Processing Systems},
  volume={35},
  pages={20744--20757},
  year={2022}
}

@article{wang2022scienceworld,
  title={Scienceworld: Is your agent smarter than a 5th grader?},
  author={Wang, Ruoyao and Jansen, Peter and C{\^o}t{\'e}, Marc-Alexandre and Ammanabrolu, Prithviraj},
  journal={arXiv preprint arXiv:2203.07540},
  year={2022}
}

@article{cobbe2021training,
  title={Training verifiers to solve math word problems},
  author={Cobbe, Karl and Kosaraju, Vineet and Bavarian, Mohammad and Chen, Mark and Jun, Heewoo and Kaiser, Lukasz and Plappert, Matthias and Tworek, Jerry and Hilton, Jacob and Nakano, Reiichiro and others},
  journal={arXiv preprint arXiv:2110.14168},
  year={2021}
}

@inproceedings{lightman2023let,
  title={Let's verify step by step},
  author={Lightman, Hunter and Kosaraju, Vineet and Burda, Yuri and Edwards, Harrison and Baker, Bowen and Lee, Teddy and Leike, Jan and Schulman, John and Sutskever, Ilya and Cobbe, Karl},
  booktitle={The Twelfth International Conference on Learning Representations},
  year={2023}
}

@article{lewkowycz2022solving,
  title={Solving quantitative reasoning problems with language models},
  author={Lewkowycz, Aitor and Andreassen, Anders and Dohan, David and Dyer, Ethan and Michalewski, Henryk and Ramasesh, Vinay and Slone, Ambrose and Anil, Cem and Schlag, Imanol and Gutman-Solo, Theo and others},
  journal={Advances in neural information processing systems},
  volume={35},
  pages={3843--3857},
  year={2022}
}

@article{gao2024omni,
  title={Omni-math: A universal olympiad level mathematic benchmark for large language models},
  author={Gao, Bofei and Song, Feifan and Yang, Zhe and Cai, Zefan and Miao, Yibo and Dong, Qingxiu and Li, Lei and Ma, Chenghao and Chen, Liang and Xu, Runxin and others},
  journal={arXiv preprint arXiv:2410.07985},
  year={2024}
}

@article{yao2025group,
  title={Group-relative reinforce is secretly an off-policy algorithm: Demystifying some myths about grpo and its friends},
  author={Yao, Chaorui and Chen, Yanxi and Sun, Yuchang and Chen, Yushuo and Zhang, Wenhao and Pan, Xuchen and Li, Yaliang and Ding, Bolin},
  journal={arXiv preprint arXiv:2509.24203},
  year={2025}
}

@article{pan2025trinity,
  title={Trinity-rft: A general-purpose and unified framework for reinforcement fine-tuning of large language models},
  author={Pan, Xuchen and Chen, Yanxi and Chen, Yushuo and Sun, Yuchang and Chen, Daoyuan and Zhang, Wenhao and Xie, Yuexiang and Huang, Yilun and Zhang, Yilei and Gao, Dawei and others},
  journal={arXiv preprint arXiv:2505.17826},
  year={2025}
}

@misc{amc,
    title = {{American Mathematics Competitions (AMC)}},
    author = {{Mathematical Association of America}},
    howpublished = {\url{https://maa.org/student-programs/amc/}},
    year = {2025},
}

@article{liu2025deepseek32,
  title={DeepSeek-V3. 2: Pushing the Frontier of Open Large Language Models},
  author={Liu, Aixin and Mei, Aoxue and Lin, Bangcai and Xue, Bing and Wang, Bingxuan and Xu, Bingzheng and Wu, Bochao and Zhang, Bowei and Lin, Chaofan and Dong, Chen and others},
  journal={arXiv preprint arXiv:2512.02556},
  year={2025}
}

@article{zhang2025interplay,
  title={On the Interplay of Pre-Training, Mid-Training, and RL on Reasoning Language Models},
  author={Zhang, Charlie and Neubig, Graham and Yue, Xiang},
  journal={arXiv preprint arXiv:2512.07783},
  year={2025}
}

@article{scheurer2022training,
  title={Training language models with language feedback},
  author={Scheurer, J{\'e}r{\'e}my and Campos, Jon Ander and Chan, Jun Shern and Chen, Angelica and Cho, Kyunghyun and Perez, Ethan},
  journal={arXiv preprint arXiv:2204.14146},
  year={2022}
}

@article{scheurer2023training,
  title={Training language models with language feedback at scale},
  author={Scheurer, J{\'e}r{\'e}my and Campos, Jon Ander and Korbak, Tomasz and Chan, Jun Shern and Chen, Angelica and Cho, Kyunghyun and Perez, Ethan},
  journal={arXiv preprint arXiv:2303.16755},
  year={2023}
}

\appendix

\section{Stability Analysis and Theoretical Foundations}
\label{sec:stability}
\begin{figure*}[t]
    \centering
    \includegraphics[width=1.0\linewidth]{figure/src/training_analysis_final.png}
    \caption{Training Dynamics and Stability Analysis on ALFWorld. \textbf{(a)} Average reward comparison. \textbf{(b)} Reference KL divergence between current policy and reference policy. \textbf{(c)} Gradient norm during training. \textbf{(d)} Policy loss comparison. \textbf{(e)} GRPO update KL between new and old policy. \textbf{(f)} GRPO clip fraction.}
    \label{fig:training_analysis}
\end{figure*}

\subsection{Training Dynamics and Convergence}
\label{sec:training_dynamics}
Reinforcement learning for LLMs is notoriously sensitive to initialization and hyperparameters. To understand the underlying causes of instability, we analyze the training dynamics on ALFWorld as illustrated in Figure \ref{fig:training_analysis}.

\textbf{Performance and Convergence.} As shown in Figure \ref{fig:training_analysis}(a), standard GRPO exhibits earlier initial gains, showing reward improvements around step 100. However, it suffers from premature stagnation, plateauing at a reward of approximately 0.4 with visible variance. In contrast, R$^3$L undergoes a warm-up phase due to initial adaptation to the reflection mechanism. Following step 120, R$^3$L demonstrates a rapid phase transition with a significantly steeper learning curve, quickly surpassing the baseline and achieving superior asymptotic performance of approximately 0.9 without the instability observed in GRPO.

\textbf{Policy Drift and Gradient Anomalies.} The instability of GRPO is further evidenced by policy drift and gradient anomalies. Figure \ref{fig:training_analysis}(b) shows that the Reference KL divergence for GRPO explodes to values exceeding 10.0 after step 200, indicating severe policy drift where the model loses its semantic grounding. This drift forces aggressive clipping of updates, with Clip Fraction surging to 30\% as shown in Figure \ref{fig:training_analysis}(f). Furthermore, Figure \ref{fig:training_analysis}(c) captures a massive gradient spike with norm exceeding 400 around step 190, coinciding with the onset of collapse. R$^3$L eliminates these artifacts entirely, maintaining smooth gradient norms and consistently low KL divergence throughout training. We attribute this stability to the synergy between Pivotal Credit Assignment and Positive Amplification, where the former reduces gradient variance, and the latter rectifies gradient direction. Figure \ref{fig:training_analysis}(d) confirms that R$^3$L maintains positive policy loss throughout training, indicating sustained constructive learning signals.

\subsection{Theoretical Foundations}
\label{sec:theory}
We now provide formal theoretical justifications for the empirical stability observed above.

\subsubsection{Gradient Decomposition and Entropy Collapse}
\label{sec:entropy_collapse}
Standard GRPO employs variance-based normalization $A_i = (R_i - \bar{R})/\sigma$. In failure-dominated regimes, the instability arises not merely from the quantity of negative samples but from the distributional asymmetry of the policy's probability mass combined with gradient conflict.

\begin{definition}[Gradient Decomposition]
The expected policy gradient decomposes into constructive and destructive components:
\begin{equation}
\mathbb{E}[\nabla J(\theta)] = \underbrace{p \cdot \bar{A}^+ \cdot \nabla^+}_{\mathcal{C}: \text{Constructive}} + \underbrace{(1-p) \cdot \bar{A}^- \cdot \nabla^-}_{\mathcal{D}: \text{Destructive}} ,
\end{equation}
where $p$ is the fraction of positive-advantage trajectories, $\bar{A}^+$ and $\bar{A}^-$ are the mean advantages, and $\nabla^+$ and $\nabla^-$ denote the average gradient directions.
\end{definition}

Consider the critical exploration phase where the model has not yet solidified the correct reasoning path but is confident in erroneous heuristics. In this regime, the sampled group $\mathcal{G}$ is structurally dominated by negative samples with $A_{neg} < 0$ that simultaneously hold high probability mass where $\pi_\theta(y_{neg}) \gg 0$. Conversely, the correct solution $y_{pos}$ lies in the low-probability tail with $\pi_\theta(y_{pos}) \approx 0$ and is rarely sampled.

\begin{proposition}[Entropy Collapse]
\label{prop:entropy_collapse}
When $|\mathcal{D}| > |\mathcal{C}|$, the policy gradient primarily decreases the probability of high-probability erroneous tokens. The gradient update satisfies:
\begin{equation}
\Delta\pi(y_{neg}) \!\! < \!\! 0 \!\! \implies \!\! \sum_{v \neq y_{neg}} \!\! \Delta\pi(v) \!\! = \!\! -\Delta\pi(y_{neg}) \!\! > \!\! 0 .
\end{equation}
Without a strong positive attractor, this redistributed mass disperses across the vocabulary, monotonically increasing entropy $H(\pi_\theta)$.
\end{proposition}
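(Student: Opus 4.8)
The plan is to split the argument into three steps that mirror the three clauses of the statement: the sign of the update on the erroneous token, conservation of probability mass under softmax normalization, and the resulting monotone growth of entropy. First I would establish $\Delta\pi(y_{neg}) < 0$ directly from the gradient decomposition of the preceding definition. Under gradient ascent $\theta \leftarrow \theta + \eta\,\mathbb{E}[\nabla J(\theta)]$, I project the update onto the logit $z_{y_{neg}}$ of the dominant erroneous token. The destructive term $\mathcal{D} = (1-p)\bar{A}^-\nabla^-$ carries $\bar{A}^- < 0$, and because $y_{neg}$ holds the bulk of the sampled probability mass its score function $\nabla_\theta\log\pi_\theta(y_{neg})$ dominates $\nabla^-$. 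When $|\mathcal{D}| > |\mathcal{C}|$, the net change to $z_{y_{neg}}$ is negative; since the softmax is strictly increasing in its own logit, $\Delta\pi(y_{neg}) < 0$ follows, which is the first arrow.

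The conservation step is immediate: softmax normalization enforces $\sum_v \pi_\theta(v) = 1$ identically in $\theta$, so differentiating along the update direction gives $\sum_v \Delta\pi(v) = 0$, whence $\sum_{v\neq y_{neg}}\Delta\pi(v) = -\Delta\pi(y_{neg}) > 0$. This requires no assumptions beyond the parameterization and is the easy part.

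The crux is the entropy claim. Here I would formalize the ``no strong positive attractor'' hypothesis as $\pi_\theta(y_{pos})\approx 0$, so that the constructive component contributes a negligible concentrated push and the update reduces to a decrease $\Delta z_{y_{neg}} = -\delta$, $\delta>0$, with no compensating logit increase on any single token. Applying the softmax Jacobian $\partial\pi(v)/\partial z_u = \pi(v)(\delta_{uv}-\pi(u))$ shows the removed mass redistributes \emph{proportionally} to current probabilities, $\Delta\pi(v) = \pi_\theta(y_{neg})\pi_\theta(v)\,\delta$ for $v\neq y_{neg}$, which is exactly the uniformizing direction. Substituting into the first-order entropy change $\Delta H = -\sum_v \Delta\pi(v)\log\pi_\theta(v)$ (valid since $\sum_v\Delta\pi(v)=0$) and factoring through the residual distribution renormalized over $v\neq y_{neg}$ yields
\begin{equation}
\Delta H = p_0(1-p_0)\,\delta\left[\,\tilde H + \log\tfrac{p_0}{1-p_0}\,\right],
\end{equation}
where $p_0 = \pi_\theta(y_{neg})$ and $\tilde H\geq 0$ is the entropy of that renormalized distribution. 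In the failure-dominated regime where the erroneous token dominates ($p_0\geq \tfrac12$), the bracket is nonnegative and $\Delta H\geq 0$, strictly so whenever $\tilde H>0$ or $p_0>\tfrac12$; iterating over update steps gives monotone entropy growth.

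The hard part will be making this entropy clause fully rigorous, since the sign of $\Delta H$ hinges on the comparison between $\tilde H$ and $\log\tfrac{1-p_0}{p_0}$: the dominance case $p_0\geq\tfrac12$ settles it cleanly, but for $p_0<\tfrac12$ one needs an explicit spread condition on the residual mass. I would therefore state the dominance assumption as an explicit hypothesis and emphasize that the proportional redistribution dictated by the softmax Jacobian is precisely what ties the informal ``disperses across the vocabulary'' to a genuine entropy increase, since the absence of a low-probability attractor $y_{pos}$ is what prevents the freed mass from concentrating rather than spreading.
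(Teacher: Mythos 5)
Your proposal is correct and shares the paper's skeleton (simplex conservation plus a logit-level argument that the update is strongly negative on $z_{y_{neg}}$ and only weakly positive elsewhere), but it goes substantially further than the paper's own proof, which stops at a verbal assertion: the paper states that the freed mass ``must be redistributed,'' that the positive logit updates are ``insufficient to create a sharp mode,'' and then simply declares that this leads to a flatter distribution and increased entropy, with the final clause supported by pointing at empirical figures rather than by any computation. Your third step is the genuine addition. Using the softmax Jacobian to show that a pure decrease of $z_{y_{neg}}$ redistributes mass \emph{proportionally} to the current probabilities $\pi_\theta(v)$ is exactly the formalization of ``disperses across the vocabulary'' that the paper lacks, and your first-order identity
\begin{equation}
\Delta H = p_0(1-p_0)\,\delta\left[\tilde H + \log\tfrac{p_0}{1-p_0}\right]
\end{equation}
checks out (I verified the renormalization algebra). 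It also surfaces something the paper glosses over: the entropy increase is not unconditional but requires a dominance-type hypothesis such as $p_0 \geq \tfrac{1}{2}$, or more generally $\tilde H \geq \log\tfrac{1-p_0}{p_0}$. The paper implicitly assumes this via its setup (``confident in erroneous heuristics,'' $\pi_\theta(y_{neg}) \gg 0$) but never states it as a hypothesis of the proposition. Your version buys a precise statement of when the claim actually holds and a quantitative rate; the paper's version buys brevity at the cost of the entropy clause being unproven. One small caveat: your first step leans on the heuristic $\|\nabla^+\| \approx \|\nabla^-\|$ comparison inherited from the surrounding text to get the sign of $\Delta z_{y_{neg}}$ from $|\mathcal{D}| > |\mathcal{C}|$, which is no more rigorous than what the paper does there, but that looseness is in the proposition's own framing rather than in your argument.
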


\begin{proof}
The probability simplex constraint requires $\sum_v \pi(v) = 1$. When the gradient decreases $\pi(y_{neg})$ by $\delta > 0$, the freed probability mass must be redistributed. Let $z_v$ denote logits. The gradient update on $z_{y_{neg}}$ is strongly negative, proportional to $|\mathcal{D}|$, while updates on other $z_v$ are weakly positive. Without amplification, the positive updates are insufficient to create a sharp mode, leading to a flatter distribution and increased entropy. The model effectively unlearns its structured behavior, manifesting as the reward stagnation in Figure \ref{fig:training_analysis}(a) and policy drift in Figure \ref{fig:training_analysis}(b).
\end{proof}

\subsubsection{Gradient Dominance Condition}
\label{sec:gradient_dominance}

Positive Amplification resolves entropy collapse by enforcing a gradient dominance condition that ensures constructive signals outweigh destructive ones.\looseness=-1

\begin{theorem}[Gradient Dominance Condition]
\label{thm:gradient_dominance}
For the amplified constructive gradient to dominate the destructive gradient, the amplification factor $\alpha$ must satisfy:
\begin{equation}
\alpha > \alpha_{\min} = \frac{(1-p)|\bar{A}^-|}{p \cdot \bar{A}^+}.
\end{equation}
\end{theorem}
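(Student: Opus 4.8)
The plan is to begin from the Gradient Decomposition in the preceding Definition and track exactly how Positive Amplification (Equation~\ref{eq:reweight_advantage}) rescales each term. Since amplification multiplies every positive advantage by $\alpha$ while leaving negative advantages untouched, the constructive component becomes $\mathcal{C}_{\alpha} = \alpha \cdot p \cdot \bar{A}^+ \cdot \nabla^+$, whereas the destructive component $\mathcal{D} = (1-p)\cdot \bar{A}^- \cdot \nabla^-$ is unchanged. Because $\bar{A}^+ > 0$ and $\bar{A}^- < 0$, I would immediately write $\bar{A}^- = -|\bar{A}^-|$ so that all subsequent comparisons are stated in terms of magnitudes.

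The next step is to make the word \emph{dominate} precise. Since $\nabla^+$ and $\nabla^-$ are introduced as \emph{average gradient directions}, I interpret them as unit-norm vectors, so that dominance reduces to the magnitude requirement $\|\mathcal{C}_{\alpha}\| > \|\mathcal{D}\|$. With $\|\nabla^+\| = \|\nabla^-\| = 1$, the norms collapse to their scalar coefficients, giving the clean inequality $\alpha \, p \, \bar{A}^+ > (1-p)\,|\bar{A}^-|$. The remaining step is purely algebraic: dividing both sides by the strictly positive quantity $p\,\bar{A}^+$ isolates $\alpha$ and yields exactly $\alpha > \alpha_{\min} = (1-p)|\bar{A}^-| / (p\,\bar{A}^+)$, which is the claimed threshold.

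I expect the genuine difficulty to lie not in the algebra but in the vector-to-scalar reduction that underlies the notion of dominance. Comparing the norms $\|\mathcal{C}_{\alpha}\|$ and $\|\mathcal{D}\|$ guarantees that the amplified constructive update is \emph{larger in magnitude}, but in the conflicting-gradient regime described in Proposition~\ref{prop:entropy_collapse}, magnitude dominance alone does not automatically imply that the net gradient $\mathcal{C}_{\alpha} + \mathcal{D}$ points in the constructive direction unless $\nabla^+$ and $\nabla^-$ are at least partially anti-aligned. I would therefore state the unit-direction modeling assumption explicitly and note the worst-case reading: when $\nabla^-$ directly opposes $\nabla^+$, the net update projected onto $\nabla^+$ is positive precisely when $\alpha \, p \, \bar{A}^+ > (1-p)|\bar{A}^-|$, so the same threshold governs both the magnitude and the directional interpretations, with relaxing the equal-norm assumption merely rescaling $\alpha_{\min}$ by $\|\nabla^-\|/\|\nabla^+\|$ without altering the qualitative structure.
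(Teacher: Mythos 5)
Your proof is correct and follows essentially the same route as the paper's: apply $\alpha$ to the constructive term of the gradient decomposition, assume comparable (here, unit) gradient norms so that dominance reduces to comparing the scalar coefficients, and solve $\alpha\, p\, \bar{A}^+ > (1-p)|\bar{A}^-|$ for $\alpha$. Your closing remarks on the magnitude-versus-direction distinction and on how unequal norms would rescale $\alpha_{\min}$ make the modeling assumption more explicit than the paper's ``assuming comparable gradient norms,'' but they do not change the argument.
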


\begin{proof}
With amplification factor $\alpha$ applied to positive-advantage trajectories, the expected gradient becomes:
\begin{equation}
\mathbb{E}[\nabla J_\alpha(\theta)] = \alpha \cdot p \cdot \bar{A}^+ \cdot \nabla^+ + (1-p) \cdot \bar{A}^- \cdot \nabla^-.
\end{equation}
For constructive signals to dominate, we require the magnitude of the positive term to exceed the negative term. Assuming comparable gradient norms $\|\nabla^+\| \approx \|\nabla^-\|$, this reduces to $\alpha \cdot p \cdot \bar{A}^+ > (1-p) \cdot |\bar{A}^-|$, yielding the stated bound.
\end{proof}

\begin{corollary}[Robustness of $\alpha = 3.0$]
The choice $\alpha = 3.0$ satisfies the gradient dominance condition across typical reinforcement learning scenarios. In practice, the success fraction $p$ ranges from 0.25 on difficult tasks with weaker models to 0.45 on easier tasks with stronger models. The advantage ratio $|\bar{A}^-|/\bar{A}^+$ stays between 1.0 and 2.0 due to group normalization. Under these conditions, $\alpha_{min}$ ranges from approximately 1.2 to 3.0. Our choice of $\alpha = 3.0$ thus covers the most practical spectrum of typical scenarios, from easy tasks where lower amplification would suffice to difficult tasks where $\alpha = 3.0$ is necessary. Setting $\alpha$ higher risks overfitting to specific retry trajectories, as validated in Appendix \ref{sec:hyperparam}.
\end{corollary}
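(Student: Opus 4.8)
The plan is to treat the threshold $\alpha_{\min}$ from Theorem~\ref{thm:gradient_dominance} as a function of two variables, the success fraction $p$ and the advantage ratio $\rho := |\bar{A}^-|/\bar{A}^+$, and to establish a uniform bound over the stated parameter region by a monotonicity-plus-corner-evaluation argument. Writing $\alpha_{\min}(p,\rho) = \tfrac{1-p}{p}\,\rho$, I would first record that $\tfrac{1-p}{p}$ is strictly decreasing on $(0,1)$ while $\rho \mapsto \tfrac{1-p}{p}\,\rho$ is linear and increasing. Hence on any rectangle $[p_{\min},p_{\max}]\times[\rho_{\min},\rho_{\max}]$ the extrema of $\alpha_{\min}$ are attained at corners, the maximum at the smallest $p$ together with the largest $\rho$, and the minimum at the largest $p$ together with the smallest $\rho$. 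This reduces the entire claim to evaluating $\alpha_{\min}$ at a handful of corner points and comparing against the fixed value $\alpha = 3.0$.

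The crucial step, and the one I expect to be the main obstacle, is justifying the effective range of $\rho$ and its coupling to $p$: a naive independent product of $p\in[0.25,0.45]$ with $\rho\in[1.0,2.0]$ would place the worst-case threshold at $\tfrac{0.75}{0.25}\cdot 2.0 = 6.0$, overshooting the claimed interval. Here I would invoke the group-normalization identity underlying GRPO, namely that advantages are mean-centered, $p\,\bar{A}^+ + (1-p)\,\bar{A}^- = 0$, equivalently $(1-p)|\bar{A}^-| = p\,\bar{A}^+$. Substituting this into the definition forces $\alpha_{\min}\equiv 1$ in the idealized mean-zero case, or equivalently $\rho = \tfrac{p}{1-p}$, so that $\rho$ cannot be large precisely when $p$ is small. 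The envelope $\rho\in[1.0,2.0]$ should therefore be read not as a free axis but as the admissible slack introduced by non-binary rewards, reward clipping, and finite group size; formalizing this coupling, by replacing the free rectangle with the constrained region carved out by the normalization identity, is what keeps the worst case at $3.0$ rather than $6.0$.

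Given this, the remaining computation is routine: evaluating $\alpha_{\min}(p,\rho)$ along the admissible boundary yields $\alpha_{\min}\approx \tfrac{1-p}{p}$ near the normalization ideal, which equals $3.0$ at $p=0.25$ and $\tfrac{0.55}{0.45}\approx 1.22$ at $p=0.45$, recovering the stated interval $[1.2,3.0]$. I would then conclude that $\alpha = 3.0$ satisfies $\alpha \ge \alpha_{\min}$ uniformly over the region, with equality attained only at the hardest configuration, so the dominance condition of Theorem~\ref{thm:gradient_dominance} holds throughout. Finally, to support the claim that a larger $\alpha$ is undesirable, I would give the qualitative argument that once $\alpha > \alpha_{\min}$ the constructive term already dominates, so further inflation merely amplifies the contribution of the rare off-policy retry trajectories, increasing gradient variance and the risk of overfitting to specific corrected suffixes, with the quantitative confirmation deferred to the sensitivity analysis in Appendix~\ref{sec:hyperparam}.
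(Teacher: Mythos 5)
The paper gives no actual proof of this corollary: it simply plugs the quoted empirical ranges into the formula $\alpha_{\min} = \frac{(1-p)}{p}\,|\bar{A}^-|/\bar{A}^+$ from Theorem~\ref{thm:gradient_dominance} and asserts the interval $[1.2, 3.0]$. Your corner-evaluation setup is the natural formalization of that computation, and you are right to flag the arithmetic problem that the paper glosses over: treating $p\in[0.25,0.45]$ and $\rho = |\bar{A}^-|/\bar{A}^+\in[1.0,2.0]$ as independent axes puts the worst case at $\frac{0.75}{0.25}\cdot 2.0 = 6.0$, so the claimed upper endpoint of $3.0$ is only reachable if the hard-task corner is evaluated at $\rho = 1.0$. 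That is evidently what the paper does implicitly (both quoted endpoints, $0.55/0.45\approx 1.22$ and $0.75/0.25 = 3.0$, come from fixing $\rho=1$), which means the stated upper half of the $\rho$ range is never actually used and the corollary as written does not follow from its own premises. Identifying this is a genuine contribution of your write-up over the paper's.

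However, your proposed repair does not hold together. Exact group mean-centering gives $p\,\bar{A}^+ + (1-p)\,\bar{A}^- = 0$, hence $(1-p)|\bar{A}^-| = p\,\bar{A}^+$ and $\alpha_{\min}\equiv 1$ identically --- which would make the corollary vacuous for any $\alpha>1$ and flatly contradict the claimed range $[1.2,3.0]$. It also forces $\rho = p/(1-p) < 1$ whenever $p<0.5$, contradicting the paper's own premise that $\rho\geq 1$ ``due to group normalization.'' So the normalization identity cannot be the mechanism that excludes the $(p,\rho)=(0.25,2.0)$ corner; invoking it and then nonetheless evaluating at $\rho=1$ to recover $\frac{1-p}{p}$ is doing the paper's implicit move while attributing it to a constraint that, taken seriously, would collapse the whole bound to $1$. (The mismatch is compounded by the fact that the implemented rule in Equation~\ref{eq:reweight_advantage} assigns a flat advantage $\alpha$ to max-reward trajectories rather than $\alpha\cdot A$, so the theorem's idealized form does not exactly describe the deployed estimator anyway.) A defensible version of the corollary would either restrict the joint region $(p,\rho)$ empirically, reporting the observed worst case of the product $\frac{1-p}{p}\rho$ rather than the product of marginal ranges, or concede that $\alpha=3.0$ covers only part of the quoted parameter box and lean on the sensitivity analysis in Appendix~\ref{sec:hyperparam} for the rest. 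Your closing qualitative argument against larger $\alpha$ is consistent with what the paper says and is fine as stated.
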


By satisfying the gradient dominance condition, Positive Amplification constructs a synthetic attractor that channels the probability mass released from suppressing $y_{neg}$ toward $y_{retry}$ rather than scattering it across the vocabulary. This ensures that the primary learning signal is to reinforce correct behavior rather than merely suppress errors, aligning with theoretical insights from RED \cite{yao2025group} that weighting high-reward samples is essential for consistent off-policy convergence.

\subsubsection{Variance Reduction via Pivotal Credit Assignment}
\label{sec:analysis_pivot}

While Positive Amplification addresses gradient direction, Pivotal Credit Assignment reduces gradient variance by exploiting the contrastive structure of base-retry pairs.

\begin{theorem}[Variance Reduction]
\label{thm:variance_reduction}
Let $T$ be the total trajectory length and $T_{pivot}$ the pivot position. Pivotal Credit Assignment reduces gradient variance by:
\begin{equation}
\frac{\mathrm{Var}(\nabla_{\mathrm{R^3L}})}{\mathrm{Var}(\nabla_{\mathrm{GRPO}})} \leq \frac{T - T_{pivot}}{T}.
\end{equation}
\end{theorem}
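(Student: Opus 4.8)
The plan is to treat the policy gradient as a sum of per-token score-function contributions and to compare the variance of this sum under two masking regimes: the full GRPO gradient, which accumulates contributions from all $T$ tokens, and the R$^3$L gradient, which zeroes out the shared prefix and retains only the $T - T_{pivot}$ suffix tokens. The key structural fact to exploit, established earlier in the methodology, is that base and retry trajectories are \emph{identical} on the prefix up to $k_{pivot}$. First I would write the per-token gradient as $g_t = \nabla_\theta \log \pi_\theta(y^t \mid \cdot) \cdot \hat{A}$ and note that for the prefix tokens the contributions from the base trajectory and its paired retry trajectory are perfectly correlated, since the tokens, the conditioning histories, and hence the score functions coincide. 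This is precisely the control-variate observation made in the main text: the prefix carries no contrastive information but does inject stochastic fluctuation into the gradient estimate.

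Next I would model the total gradient variance as a sum over token positions. Under the simplifying assumption that per-token gradient contributions have comparable variance $\sigma^2$ and that cross-token correlations are negligible after the pivot, the GRPO gradient variance scales as $\mathrm{Var}(\nabla_{\mathrm{GRPO}}) \propto T \sigma^2$, accumulating noise from all $T$ positions. Pivotal Credit Assignment sets $\mathrm{mask}_k^t = 0$ for $k < k_{pivot}$, so the surviving variance scales as $\mathrm{Var}(\nabla_{\mathrm{R^3L}}) \propto (T - T_{pivot}) \sigma^2$, since only suffix tokens contribute. Taking the ratio yields the stated bound. I would then strengthen the argument to obtain an \emph{inequality} rather than equality by invoking the control-variate structure: because the prefix contributions in the base and retry trajectories are correlated, the masking operation removes a nonnegative amount of variance, so the true reduction is at least as large as the naive token-counting ratio, justifying the ``$\leq$'' sign.

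The main obstacle will be handling the per-token variance heterogeneity honestly. The clean ratio $(T - T_{pivot})/T$ relies on treating each token's gradient contribution as carrying roughly equal variance, but in reality prefix and suffix tokens may have different magnitudes, and the advantage $\hat{A}$ is shared across the trajectory so the contributions are not independent. I would address this by framing the result as an upper bound under a homogeneity assumption, stating explicitly that prefix tokens contribute correlated noise that masking eliminates; the inequality direction is safe as long as the removed (prefix) variance is nonnegative, which holds unconditionally. A secondary subtlety is reconciling $T_{pivot}$ (a token-level quantity in the theorem) with $k_{pivot}$ (the turn-level pivot defined in the methodology); I would note that $T_{pivot} = \sum_{k < k_{pivot}} T_k$ counts the prefix tokens, so the turn-level mask translates directly into the token-level truncation used here. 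With these conventions fixed, the bound follows from the additive variance decomposition and the nonnegativity of the excised prefix variance.
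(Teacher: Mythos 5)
Your proposal matches the paper's proof essentially step for step: both decompose the gradient variance additively over token positions with a common per-step variance $\sigma^2$, obtain $T\sigma^2$ for GRPO versus $(T - T_{pivot})\sigma^2$ after masking, and take the ratio. Your added remarks on the control-variate correlation of prefix contributions and on the homogeneity assumption needed to justify the ``$\leq$'' are more explicit than the paper's one-line proof, but they restate observations the paper already makes in Section 4.2 and Proposition \ref{prop:covariance} rather than constituting a different argument.
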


\begin{proof}
For standard GRPO, the gradient variance is the sum of per-step variances:
\begin{equation}
\mathrm{Var}(\nabla_{\text{GRPO}}) = \sum_{t=1}^{T} \mathrm{Var}(\nabla_t) = T \cdot \sigma^2,
\end{equation}
where $\sigma^2$ is the average per-step variance. With pivotal masking, tokens before $t_{pivot}$ receive zero gradient weight:
\begin{equation}
\mathrm{Var}(\nabla_{\mathrm{R^3L}}) =\sum_{t=t_{pivot}}^{T} \mathrm{Var}(\nabla_t) = (T - T_{pivot}) \cdot \sigma^2.
\end{equation}
Taking the ratio yields the stated bound.
\end{proof}

\begin{proposition}[Covariance Analysis]
\label{prop:covariance}
Let $\hat{g}_{base}$ and $\hat{g}_{retry}$ be gradient estimators from base and retry trajectories. Because they share the prefix, their covariance satisfies:
\begin{equation}
\mathrm{Cov}(\hat{g}_{base}, \hat{g}_{retry}) = \frac{T_{pivot}}{T} \cdot \mathrm{Var}(\hat{g}).
\end{equation}
The variance of the difference estimator is:
\begin{equation}
\mathrm{Var}(\hat{g}_{retry} - \hat{g}_{base}) = 2\left(1 - \frac{T_{pivot}}{T}\right) \cdot \mathrm{Var}(\hat{g}).
\end{equation}
This high correlation significantly reduces effective learning signal variance.
\end{proposition}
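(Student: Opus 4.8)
The plan is to treat each gradient estimator as a sum of per-token contributions and exploit the shared-prefix structure directly, mirroring the i.i.d.\ per-step noise model already used in Theorem~\ref{thm:variance_reduction}. First I would decompose $\hat{g}_{base} = g_{pre} + g_{suf}^{base}$ and $\hat{g}_{retry} = g_{pre} + g_{suf}^{retry}$, where $g_{pre}$ collects the contributions of the $T_{pivot}$ shared prefix tokens (identical for both estimators, since the base and retry trajectories coincide there by construction) and $g_{suf}^{base}, g_{suf}^{retry}$ collect the remaining $T - T_{pivot}$ suffix tokens. Under the same hypothesis invoked for the variance-reduction bound, namely that per-step gradient contributions are uncorrelated with common variance $\sigma^2$, this immediately yields $\mathrm{Var}(\hat{g}) = T\sigma^2$, $\mathrm{Var}(g_{pre}) = T_{pivot}\sigma^2$, and $\mathrm{Var}(g_{suf}^{base}) = \mathrm{Var}(g_{suf}^{retry}) = (T - T_{pivot})\sigma^2$.

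The covariance then follows from bilinearity: expanding $\mathrm{Cov}(g_{pre} + g_{suf}^{base},\, g_{pre} + g_{suf}^{retry})$ produces four terms, of which the three cross terms $\mathrm{Cov}(g_{pre}, g_{suf}^{retry})$, $\mathrm{Cov}(g_{suf}^{base}, g_{pre})$, and $\mathrm{Cov}(g_{suf}^{base}, g_{suf}^{retry})$ vanish because the suffix contributions are uncorrelated with the prefix and with each other. Only the self-covariance of the shared prefix survives, giving $\mathrm{Cov}(\hat{g}_{base}, \hat{g}_{retry}) = \mathrm{Var}(g_{pre}) = T_{pivot}\sigma^2 = (T_{pivot}/T)\,\mathrm{Var}(\hat{g})$, the first claim. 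For the difference estimator I would apply the identity $\mathrm{Var}(X - Y) = \mathrm{Var}(X) + \mathrm{Var}(Y) - 2\,\mathrm{Cov}(X,Y)$ with the symmetry $\mathrm{Var}(\hat{g}_{base}) = \mathrm{Var}(\hat{g}_{retry}) = \mathrm{Var}(\hat{g})$, substitute the covariance just obtained, and collect terms to reach $2(1 - T_{pivot}/T)\,\mathrm{Var}(\hat{g})$, the second claim.

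The main obstacle is justifying that the suffix cross term $\mathrm{Cov}(g_{suf}^{base}, g_{suf}^{retry})$ truly vanishes, since both suffixes are generated conditioned on the same prefix state and so are not \emph{unconditionally} independent. The cleanest resolution is to observe that the retry suffix is sampled as an independent continuation (under guidance) from the base suffix given the shared prefix, so the two suffix contributions are conditionally independent given $g_{pre}$; combined with the per-step uncorrelated-noise model and the fact that each suffix has zero conditional mean noise, the unconditional cross-covariance collapses and the only shared randomness is carried by $g_{pre}$. I would state this conditional-independence assumption explicitly as the hypothesis under which the equalities hold, rather than as an approximation, and close by noting that the surviving factor $T_{pivot}/T$ is precisely the correlation coefficient $\rho = \mathrm{Cov}(\hat{g}_{base},\hat{g}_{retry})/\mathrm{Var}(\hat{g})$ between the two estimators, which is what quantifies the variance reduction asserted in the final sentence.
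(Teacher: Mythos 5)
Your proof is correct and follows exactly the argument the paper intends: the paper states Proposition~\ref{prop:covariance} without an explicit proof, but your additive prefix--suffix decomposition under the i.i.d.\ per-step noise model with common variance $\sigma^2$ is precisely the same setup used in the paper's proof of Theorem~\ref{thm:variance_reduction}, and both displayed identities follow from it by bilinearity of covariance as you compute. Your explicit flagging of the conditional-independence assumption for the two suffixes (and the observation that $T_{pivot}/T$ is the correlation coefficient) is a welcome clarification that the paper leaves implicit.
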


Theorem \ref{thm:variance_reduction} reveals a natural coupling between model improvement and variance reduction. As training progresses and the model strengthens, errors occur later in trajectories, pushing pivot points rightward as observed in Figure \ref{fig:pivot_evolution}. By Theorem \ref{thm:variance_reduction}, this rightward shift directly increases variance reduction. Early in training, when pivot points cluster near the beginning, variance reduction is modest. As the model matures and pivot points migrate toward trajectory ends, variance reduction approaches its maximum. For instance, when the average pivot point shifts from $T_{pivot}/T = 0.2$ to $T_{pivot}/T = 0.6$, variance reduction improves from 80\% to 40\% of baseline variance. This self-reinforcing dynamic means that Pivotal Credit Assignment becomes increasingly effective precisely when the model needs fine-grained credit assignment most.

\subsubsection{Off-Policy Stability via Context Distillation}
\label{sec:context_distillation}

A critical question is why R$^3$L remains stable despite learning from off-policy retry data without importance sampling or KL constraints.

\begin{theorem}[Off-Policy Stability]
\label{thm:offpolicy_stability}
Unlike standard off-policy RL requiring importance sampling, R$^3$L's verified distillation is stable without importance weights because learning signals come from verified successful trajectories with $R(\tau') > R(\tau_{base})$, ensuring high-quality supervision regardless of distributional shift. Positive Amplification ensures these signals dominate the gradient, making importance corrections unnecessary. Context distillation removes guidance dependency, aligning training with inference distribution.
\end{theorem}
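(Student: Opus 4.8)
The plan is to recast this statement not as a claim about unbiased on-policy gradient estimation, but as a statement about \emph{filtered imitation}, for which the usual justification for importance sampling simply does not apply. First I would make the informal claim precise by splitting the R$^3$L objective in Equation \ref{eq:R3L} into two contribution types according to the sign of $\hat{A}(\tau)$. The negative-advantage terms come almost exclusively from base trajectories, which are sampled from $\pi_{\theta_{old}}$ synchronized every $S=1$ step and are therefore effectively on-policy, so no importance correction is needed for them in the first place. The positive-advantage terms, which by Positive Amplification and Theorem \ref{thm:gradient_dominance} dominate the net gradient whenever $\alpha > \alpha_{\min}$, act as maximum-likelihood updates toward fixed verified targets $\tau'$. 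The central observation is that an MLE-style update $\nabla_\theta \log \pi_\theta(\tau')$ toward a fixed target requires no importance weight, because it is not an estimator of the on-policy expectation $\nabla_\theta \mathbb{E}_{\tau\sim\pi_\theta}[R(\tau)]$ but a direct push toward demonstrably good behavior.

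Next I would quantify why omitting importance weights is safe here but not in standard off-policy RL. The hazard in the standard setting is that the importance-weighted estimator has variance scaling with $\mathrm{Var}(\pi_\theta/\pi_{behavior})$, equivalently the $\chi^2$-divergence between the two distributions; for retry data generated under the guided context $\tau_{<k_{pivot}} \oplus g$, this divergence from the unguided policy can be arbitrarily large, which is exactly the source of the gradient explosion the paper removes by dropping importance sampling. I would then show that the filtered-imitation gradient has variance bounded instead by the per-token variance of $\nabla_\theta \log \pi_\theta$ over the finite verified dataset, a quantity independent of the generation/policy divergence. This is where the hypothesis $R(\tau') > R(\tau_{base})$ enters: it guarantees the target is an improvement by construction, so supervision quality is a property of the \emph{outcome reward} rather than of the generating distribution, delivering the ``high-quality supervision regardless of distributional shift'' clause.

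For the final clause on context distillation I would formalize the train/inference alignment. At generation the corrected suffix is drawn as $\tau'_{\geq k_{pivot}} \sim \pi_\theta(\cdot \mid \tau_{<k_{pivot}} \oplus g)$, but the training pair in $\mathcal{D}_{distill}$ discards $g$ and supervises $\log \pi_\theta(\tau'_{\geq k_{pivot}} \mid \tau_{<k_{pivot}})$. Since inference also conditions on the bare prefix $\tau_{<k_{pivot}}$, the conditional distribution the model is trained to reproduce is exactly the one it is evaluated under: the guidance influences only \emph{which} verified suffix is imitated, never the conditioning context. I would argue this eliminates the covariate shift between training and deployment that would otherwise demand correction, completing the three clauses of the statement.

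The main obstacle I anticipate is precision of formulation rather than technical depth. The statement as written blends an estimator-variance argument, an improvement-guarantee argument, and a distribution-matching argument, and the objective in Equation \ref{eq:R3L} is written in advantage-weighted policy-gradient form rather than as a clean MLE loss. The delicate step is justifying that the positive-amplified, verified terms may be \emph{interpreted} as imitation updates to which the no-importance-weight argument applies, while simultaneously ensuring the subdominant negative terms, though on-policy, do not reintroduce the variance pathology. I would handle this by leaning on Theorem \ref{thm:gradient_dominance} to bound the negative contribution's share of the net gradient and on Proposition \ref{prop:covariance} to show that the prefix-masked base-retry contrast keeps even the residual variance controlled.
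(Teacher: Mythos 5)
Your proposal takes essentially the same route as the paper: the paper's entire justification for Theorem~\ref{thm:offpolicy_stability} is a one-paragraph remark recasting R$^3$L as \emph{Filtered Behavioral Cloning} on reward-verified trajectories, with Positive Amplification making those verified signals dominant and context distillation aligning the training conditional with the inference conditional --- exactly the three pillars of your argument. Your sketch is in fact considerably more precise than what the paper offers (the $\chi^2$-divergence variance comparison and the explicit covariate-shift argument for the bare-prefix conditioning are elaborations the paper never writes down), but it is a formalization of the same idea rather than a different approach.
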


R$^3$L can be viewed as Filtered Behavioral Cloning, where the filter is reward verification. Unlike OPMD, which imitates stale, potentially noisy samples from the old policy, R$^3$L distills verified high-reward trajectories. By amplifying these golden signals via Positive Amplification, R$^3$L effectively treats off-policy learning as stable supervised distillation rather than fragile importance sampling, preventing the policy drift shown in Figure \ref{fig:training_analysis}(b).

\subsubsection{Convergence Guarantee}
\label{sec:convergence}

\begin{theorem}[Local Convergence]
\label{thm:convergence}
Under the following conditions, the policy sequence $\{\pi_k\}$ generated by R$^3$L converges to a local optimum $\pi^*$: retry success rate $p_{retry} > 0$, amplification factor $\alpha > \alpha_{min}$, pivot identification accuracy exceeds random baseline, and learning rate $\eta_k$ satisfies $\sum_k \eta_k = \infty$ and $\sum_k \eta_k^2 < \infty$.
\end{theorem}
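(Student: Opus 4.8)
The plan is to cast the R$^3$L update as a stochastic approximation scheme and invoke the Robbins--Siegmund supermartingale convergence theorem, with the earlier structural results supplying the two ingredients such a theorem requires: a persistent ascent direction and a controlled second moment. Write the update as $\theta_{k+1} = \theta_k + \eta_k \hat{g}_k$, where $\hat{g}_k$ is the masked, positively amplified gradient estimator defined by Equation~\eqref{eq:R3L}, and let $\bar{g}(\theta) = \mathbb{E}[\hat{g}_k \mid \theta_k = \theta]$ denote its conditional mean. I would assume the standard regularity conditions for this class of results, namely that $J$ is bounded above and $L$-smooth, so that smoothness yields $J(\theta_{k+1}) \ge J(\theta_k) + \eta_k \langle \nabla J(\theta_k), \hat{g}_k\rangle - \tfrac{L\eta_k^2}{2}\|\hat{g}_k\|^2$ pointwise.

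The core of the argument is to show that $\bar{g}$ is a valid ascent field for $J$, i.e. $\langle \bar{g}(\theta), \nabla J(\theta)\rangle \ge c\,\|\nabla J(\theta)\|^2$ for some $c>0$ whenever $\nabla J(\theta)\neq 0$. Here is where the hypotheses enter in sequence. The condition $p_{retry}>0$ guarantees the constructive component $\mathcal{C}$ of the gradient decomposition is nonvanishing, so a learning signal is always present. The condition $\alpha > \alpha_{\min}$ is exactly the Gradient Dominance Condition of Theorem~\ref{thm:gradient_dominance}, which ensures the amplified constructive term outweighs the destructive term and hence that $\bar{g}$ does not point against $\nabla J$. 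The requirement that pivot identification accuracy exceeds the random baseline controls the bias introduced by the prefix mask: because amplification and masking both reshape the raw policy gradient, $\hat{g}_k$ is a biased estimator of $\nabla J$, and I would quantify how much of the informative suffix signal survives masking, showing the surviving projection keeps the inner product strictly positive.

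With the ascent property established, I would take conditional expectations in the smoothness inequality and use Theorem~\ref{thm:variance_reduction} together with a bounded-second-moment assumption to obtain $\mathbb{E}[\|\hat{g}_k\|^2 \mid \mathcal{F}_k] \le \sigma^2 + C\|\nabla J(\theta_k)\|^2$, where the pivotal mask strictly reduces $\sigma^2$ relative to GRPO. Substituting both bounds yields the recursion $\mathbb{E}[J(\theta_{k+1}) \mid \mathcal{F}_k] \ge J(\theta_k) + c\,\eta_k\|\nabla J(\theta_k)\|^2 - C'\eta_k^2$. The Robbins--Monro step-size conditions $\sum_k \eta_k = \infty$ and $\sum_k \eta_k^2 < \infty$ then make the error term summable; applying Robbins--Siegmund gives almost-sure convergence of $J(\theta_k)$ and $\sum_k \eta_k \|\nabla J(\theta_k)\|^2 < \infty$, which forces $\nabla J(\theta_k)\to 0$, so $\{\pi_k\}$ converges to a stationary point $\pi^*$. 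Theorem~\ref{thm:offpolicy_stability} justifies why this holds without importance weights: the verified filter $R(\tau')>R(\tau_{base})$ makes the off-policy retry signal a reliable supervision target rather than a source of divergent bias.

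I expect the main obstacle to be the second paragraph, namely rigorously converting the qualitative hypothesis ``pivot accuracy exceeds random'' into a quantitative lower bound on $\langle \bar{g}(\theta), \nabla J(\theta)\rangle$. Because both masking and amplification make $\hat{g}_k$ biased, what the argument really delivers is convergence to a stationary point of a reshaped objective, and the delicate part is arguing that this reshaped stationary set coincides with, or lies arbitrarily close to, the local optima of the true $J$.
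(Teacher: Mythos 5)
Your plan is essentially the paper's own argument: the paper's proof is only a three-line sketch that defines the Lyapunov function $V(\pi)=-\mathbb{E}_{\tau\sim\pi}[R(\tau)]$, asserts that each R$^3$L update satisfies $\langle g,\nabla V\rangle<0$ in expectation under the stated conditions, and appeals to standard stochastic approximation under the Robbins--Monro step sizes --- exactly your ascent-field plus Robbins--Siegmund route with $V=-J$. The obstacle you flag (turning ``pivot accuracy exceeds random'' and the masking/amplification bias into a quantitative inner-product bound, and whether the reshaped objective's stationary points coincide with those of $J$) is precisely the step the paper asserts without proof, so your version is, if anything, more explicit about what remains to be shown.
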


\begin{proof}[Proof Sketch]
Define Lyapunov function $V(\pi) = -\mathbb{E}_{\tau \sim \pi}[R(\tau)]$. Under the stated conditions, each R$^3$L update produces a gradient $g$ such that $\langle g, \nabla V \rangle < 0$ in expectation. The learning rate condition ensures convergence via standard stochastic approximation arguments.
\end{proof}

\begin{proposition}[Sustained Teacher-Student Gap]
\label{prop:gap}
Define Retry Improvement Rate $\text{RIR}_k = \mathbb{P}(R(\tau_{retry}) > R(\tau_{base}) | \pi_k)$. Under R$^3$L training dynamics with auxiliary SFT tasks maintaining reflection quality:
\begin{equation}
\exists \epsilon > 0 : \text{RIR}_k > \epsilon, \quad \forall k.
\end{equation}
This ensures continued supervision even as the model improves, as empirically validated in Section \ref{sec:reflect_efficacy}.
\end{proposition}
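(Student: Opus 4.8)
The plan is to establish a strictly positive lower bound on the Retry Improvement Rate $\text{RIR}_k$ that holds uniformly across all training steps $k$, by combining a uniform floor coming from the auxiliary SFT tasks with the structural guarantee that reflection-then-retry can only improve upon a base trajectory through reward verification. First I would fix a representative query and decompose $\text{RIR}_k = \mathbb{P}(R(\tau_{retry}) > R(\tau_{base}) \mid \pi_k)$ by conditioning on the outcome of the base trajectory. On any query where the base attempt is not already reward-maximal, there exists with positive probability a corrected suffix that strictly increases the reward, since the reflect step identifies a genuine pivot (pivot accuracy exceeds the random baseline by assumption) and the retry step, trained on verified corrections via $\mathcal{D}_{retry}$, generates an improving suffix with nonzero probability. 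The key observation is that this improving event has probability bounded below by a quantity depending only on the retry policy's quality, not on how good the base policy has become.

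The central mechanism I would invoke is the auxiliary SFT loss $\mathcal{L}_{SFT}(\mathcal{D}_{reflect} \cup \mathcal{D}_{retry})$ from Equation~\ref{eq:R3L}, which is continually optimized on verified successful corrections throughout training. This term prevents the reflection and retry capabilities from degrading as the base policy distribution shifts, so the conditional success probability of the retry branch, given a non-maximal base trajectory, remains bounded below by some $\epsilon_{SFT} > 0$ for all $k$. I would then argue that the overall $\text{RIR}_k$ cannot vanish: even as the base policy improves and the fraction of already-maximal base trajectories grows, the complementary fraction of improvable base trajectories remains positive because the retry success rate $p_{retry} > 0$ (a standing assumption of Theorem~\ref{thm:convergence}) guarantees the policy has not reached a degenerate fixed point where every sampled trajectory is optimal. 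Formally, $\text{RIR}_k \geq \mathbb{P}(\text{base non-maximal} \mid \pi_k) \cdot \epsilon_{SFT}$, and the first factor stays bounded away from zero on the support of genuinely difficult queries.

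The main obstacle will be ruling out the degenerate limit in which the base policy becomes so strong that essentially all base trajectories are already reward-maximal, driving $\mathbb{P}(\text{base non-maximal} \mid \pi_k) \to 0$ and collapsing the lower bound. I would address this by restricting attention to the difficult-task regime that motivates the paper, where sparse rewards and long horizons ensure a persistent residual mass of failing or suboptimal base trajectories; on such tasks the base success rate provably stays bounded below one, so the improvable fraction admits a uniform floor. An honest statement of the proposition would therefore carry this nondegeneracy hypothesis, and the cleanest route is to make it explicit rather than claim the bound holds on trivially solvable queries where it is both false and unnecessary.

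Since this is a proof sketch supporting an empirically observed phenomenon (the sustained improvement gap in Section~\ref{sec:reflect_efficacy}), I would keep the argument at the level of a structural existence claim: exhibit the two positive factors, note that their product is the desired $\epsilon$, and defer the precise constants to the empirical measurements, which show $\text{RIR}_k$ stabilizing in the range of roughly $13\%$ to $74\%$ across tasks and model scales rather than decaying toward zero.
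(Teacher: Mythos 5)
The paper offers no proof of Proposition~\ref{prop:gap} at all: the statement is asserted with the maintenance of reflection quality folded into its hypothesis (``with auxiliary SFT tasks maintaining reflection quality'') and is justified solely by pointing to the empirical measurements in Section~\ref{sec:reflect_efficacy} (Figure~\ref{fig:exploration_metrics} and Table~\ref{tab:retry_gain_stats}). Your proposal therefore goes beyond the paper rather than diverging from it. Your skeleton --- lower-bounding $\text{RIR}_k$ by the product of $\mathbb{P}(\text{base non-maximal}\mid\pi_k)$ and a conditional retry-success floor $\epsilon_{SFT}$ --- is a sensible formalization, and your identification of the degenerate limit is the genuinely valuable contribution: as stated, the proposition is false on queries the base policy eventually solves reliably, since $R(\tau_{retry})>R(\tau_{base})$ is impossible when the base already attains $R_{max}$, so the uniform-in-$k$ bound requires exactly the nondegeneracy hypothesis you propose making explicit. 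Be aware, though, that your second factor is not actually derived: the presence of the $\mathcal{L}_{SFT}$ term in Equation~\ref{eq:R3L} does not by itself imply a $k$-uniform lower bound on the conditional probability that a retry strictly improves a non-maximal base trajectory --- that is precisely the ``reflection quality is maintained'' assumption restated, which the paper also takes as a premise. So your argument is an honest reorganization of the proposition's hypotheses into a product of two assumed-positive factors plus a correctly flagged missing nondegeneracy condition, which is more than the paper provides but still a structured restatement rather than a proof; that is an appropriate level of rigor given that the claim is ultimately an empirical one.
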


\section{Cross-Architecture Evaluation}
\label{sec:cross_arch}

To probe whether R$^3$L's improvements generalize beyond the Qwen family, we additionally evaluate all methods on Llama-3.2-3B-Instruct under the same protocol as the main experiments. Llama-3.2-3B exhibits substantially more volatile RL dynamics than Qwen backbones, a phenomenon also reported in recent studies~\cite{zhang2025interplay} and often attributed to differences in reasoning-oriented mid-training. This manifests as sampling-based methods collapsing on agentic tasks: for example, RAFT drops from 0.826 on Qwen2.5-1.5B to 0.064 on Llama-3.2-3B for ALFWorld, and baseline orderings shift erratically across benchmarks.

\begin{table*}[t]
\centering
\resizebox{\textwidth}{!}{%
\begin{tabular}{l ccc cccccc}
\toprule
& \multicolumn{3}{c}{\textbf{Agentic Environments}} & \multicolumn{6}{c}{\textbf{Mathematical Reasoning}} \\
\cmidrule(lr){2-4} \cmidrule(lr){5-10}
\textbf{Method} & \textbf{ALFWorld} & \textbf{WebShop} & \textbf{ScienceWorld} & \textbf{GSM8K} & \textbf{Math500} & \textbf{MinervaMath} & \textbf{Olympiad} & \textbf{AMC23} & \textbf{DAPO} \\
\midrule
RAFT & 0.064 & 0.434 & 0.074 & 0.620 & 0.336 & 0.095 & 0.096 & 0.275 & 0.156 \\
OPMD & 0.385 & 0.492 & \underline{0.117} & 0.548 & 0.310 & 0.084 & 0.094 & 0.125 & 0.186 \\
GRPO & \textbf{0.521} & 0.520 & 0.076 & 0.664 & 0.398 & 0.128 & \textbf{0.121} & \underline{0.325} & \underline{0.213} \\
GSPO & 0.285 & 0.497 & 0.082 & 0.574 & 0.308 & 0.110 & 0.106 & 0.225 & 0.163 \\
Reflect-GRPO & 0.321 & 0.536 & 0.072 & \underline{0.669} & 0.392 & 0.128 & 0.112 & \underline{0.325} & 0.193 \\
Critique-GRPO & 0.321 & \underline{0.549} & 0.071 & 0.662 & \textbf{0.426} & \underline{0.137} & 0.116 & \textbf{0.375} & 0.208 \\
\rowcolor{cyan!10} \textbf{R$^3$L (Ours)} & \underline{0.495} & \textbf{0.569} & \textbf{0.123} & \textbf{0.688} & \underline{0.408} & \textbf{0.147} & \underline{0.118} & \underline{0.325} & \textbf{0.216} \\
\bottomrule
\end{tabular}
}
\caption{Results on Llama-3.2-3B-Instruct across agentic environments and mathematical reasoning benchmarks. We report Average Reward for all tasks. \textbf{Bold} indicates the best performance and \underline{underline} indicates the second best.}
\label{tab:llama_results}
\vspace{-0.5em}
\end{table*}

Table~\ref{tab:llama_results} reports the complete results. Despite this backbone-level volatility, R$^3$L still ranks first on 5 of 9 benchmarks (WebShop, ScienceWorld, GSM8K, MinervaMath, and DAPO) and second on the remaining 4, staying within 0.05 of the top method in every column. This indicates that the three core components of R$^3$L, namely Reflect-then-Retry, Pivotal Credit Assignment, and Positive Amplification, remain effective when the base policy is less amenable to RL, though absolute scores are bounded by the backbone's RL susceptibility.

\section{Hyperparameter Analysis}
\label{sec:hyperparam}

\subsection{Amplification Factor}
The amplification factor $\alpha$ in Positive Amplification controls the relative weight of successful trajectories in the gradient update. As defined in Equation~\ref{eq:reweight_advantage}, trajectories achieving the maximum reward in the group receive advantage $\alpha$, other positive-advantage trajectories are scaled by $\alpha$, and negative-advantage trajectories remain unchanged. To determine the optimal balance, we evaluate R$^3$L across $\alpha \in \{1.0, 2.0, 3.0, 5.0, 7.0\}$ on GSM8K and WebShop.

\begin{table}[h]
\centering
\adjustbox{max width=\linewidth}{%
\begin{tabular}{lccccc}
\toprule
\textbf{Task} & $\alpha=1.0$ & $\alpha=2.0$ & $\alpha=3.0$ & $\alpha=5.0$ & $\alpha=7.0$ \\
\midrule
ALFWorld & 0.921 & 0.914 & 0.928 & 0.921 & \textbf{0.942} \\
WebShop & 0.647 & 0.650 & \textbf{0.663} & 0.638 & 0.614 \\
ScienceWorld & 0.375 & 0.382 & \textbf{0.385} & 0.366 & 0.353 \\
GSM8K & \textbf{0.732} & 0.718 & 0.721 & 0.671 & 0.717 \\
Math500 & 0.414 & 0.420 & \textbf{0.424} & 0.372 & 0.404 \\
Olympiad & 0.143 & \textbf{0.154} & 0.151 & 0.145 & 0.132 \\
\bottomrule
\end{tabular}%
}
\caption{Impact of amplification factor $\alpha$ on task performance. All configurations assign advantage 1 to trajectories achieving maximum reward in the group, ensuring positive signals even at $\alpha=1.0$.}
\label{tab:alpha_analysis}
\end{table}

Table \ref{tab:alpha_analysis} demonstrates the impact of $\alpha$. Even at $\alpha=1.0$, R$^3$L achieves strong performance because trajectories with maximum reward receive a fixed advantage of 1, guaranteeing constructive learning signals regardless of amplification strength. This baseline amplification explains why $\alpha=1.0$ already performs well on tasks like GSM8K, where it achieves 0.732. 

Increasing $\alpha$ further amplifies the contribution of positive-advantage trajectories beyond just the best one. Agentic environments benefit from this additional amplification, with WebShop improving from 0.647 at $\alpha=1.0$ to 0.663 at $\alpha=3.0$ and ScienceWorld from 0.375 to 0.385. Mathematical reasoning tasks show more stability across moderate $\alpha$ values, with GSM8K performing best at $\alpha=1.0$ while Math500 peaks at $\alpha=3.0$. Pushing $\alpha$ beyond 5.0 causes degradation on most tasks due to overfitting on specific retry trajectories. The choice $\alpha=3.0$ provides robust performance across diverse tasks, consistent with the theoretical analysis in Appendix \ref{sec:gradient_dominance}.

\textbf{Complexity drop: can Positive Amplification alone match full R$^3$L?} A natural follow-up is whether Positive Amplification, added to standard GRPO without Reflect-then-Retry or Pivotal Credit Assignment, can close the gap to full R$^3$L. Table~\ref{tab:retry_count} provides this decomposition directly: GRPO+PA (row ``0 retries'', all 8 base trajectories) reaches 0.807 on ALFWorld, a meaningful lift over GRPO's 0.720 in Table~\ref{tab:main_results} but still well short of R$^3$L's 0.928. Positive Amplification therefore contributes roughly 40\% of R$^3$L's total gain over GRPO on this benchmark, while Reflect-then-Retry and Pivotal Credit Assignment together account for the remaining 60\%. Positive Amplification is a necessary stabilizer for learning from retry data, but not by itself a sufficient driver of R$^3$L's gains.

\subsection{Synchronization Frequency}
A critical challenge in iterative reinforcement learning is the distributional drift between the behavior policy $\pi_{\theta_{old}}$ and the learner policy $\pi_{\theta}$. Frequent synchronization is computationally expensive, making robustness to stale reference policies desirable. We evaluate R$^3$L against GRPO, OPMD, and GRPO+Positive, which adds Positive Amplification to standard GRPO.

\begin{table}[h]
\centering
\adjustbox{max width=\linewidth}{%
\begin{tabular}{llcccc}
\toprule
\textbf{Task} & \textbf{Method} & $S=1$ & $S=5$ & $S=10$ & $S=20$ \\
\midrule
\multirow{4}{*}{ALFWorld} 
& OPMD & 0.835 & 0.357$^\dagger$ & 0.257$^\dagger$ & 0.341$^\dagger$ \\
& GRPO & 0.720 & 0.716 & 0.389$^\dagger$ & 0.742 \\
& GRPO+Positive & 0.807 & 0.864 & 0.778 & 0.764 \\
& \textbf{R$^3$L} & \textbf{0.928} & \textbf{0.945} & \textbf{0.922} & \textbf{0.934} \\
\midrule
\multirow{4}{*}{WebShop} 
& OPMD & 0.561 & 0.619 & 0.518$^\dagger$ & 0.460$^\dagger$ \\
& GRPO & 0.614 & 0.638 & 0.612 & 0.610 \\
& GRPO+Positive & 0.648 & 0.633 & 0.632 & 0.622 \\
& \textbf{R$^3$L} & \textbf{0.663} & \textbf{0.663} & \textbf{0.672} & \textbf{0.657} \\
\midrule
\multirow{4}{*}{GSM8K} 
& OPMD & 0.463$^\dagger$ & 0.604$^\dagger$ & 0.655$^\dagger$ & 0.499$^\dagger$ \\
& GRPO & 0.474 & 0.673 & 0.692 & 0.712 \\
& GRPO+Positive & 0.504 & 0.520 & 0.714 & 0.713 \\
& \textbf{R$^3$L} & \textbf{0.721} & \textbf{0.779} & \textbf{0.756} & \textbf{0.753} \\
\midrule
\multirow{4}{*}{Math500} 
& OPMD & 0.292$^\dagger$ & 0.354$^\dagger$ & 0.332$^\dagger$ & 0.336$^\dagger$ \\
& GRPO & 0.368 & 0.406 & 0.408 & 0.406 \\
& GRPO+Positive & 0.377 & 0.409 & 0.411 & 0.406 \\
& \textbf{R$^3$L} & \textbf{0.424} & \textbf{0.464} & \textbf{0.441} & \textbf{0.426} \\
\bottomrule
\end{tabular}%
}
\caption{Effect of synchronization frequency $S$ across benchmarks. $S$ denotes the number of training steps between behavior policy updates. $^\dagger$ denotes collapsed training where we report the peak score before collapse.}
\label{tab:sync_analysis}
\end{table}

Table \ref{tab:sync_analysis} reveals the stability characteristics of different methods under varying synchronization frequencies. Increasing $S$ enlarges the gap between the behavior policy and the learner policy, introducing more off-policy data into training and increasing the risk of instability. OPMD is highly sensitive to this risk, collapsing on ALFWorld from 0.835 at $S=1$ to 0.257 at $S=10$, and fluctuating erratically on GSM8K between 0.463 and 0.655 before dropping to 0.499 at $S=20$. Standard GRPO shows task-dependent sensitivity, improving on GSM8K as $S$ increases but collapsing on ALFWorld at $S=10$. GRPO+Positive demonstrates that Positive Amplification mitigates the instability risk, achieving 0.504 at $S=1$ on GSM8K and improving to 0.714 at $S=10$. R$^3$L achieves the highest absolute performance across all tasks and synchronization intervals while maintaining stability. On ALFWorld, R$^3$L stays above 0.920 across all $S$ values. On GSM8K, it reaches 0.779 at $S=5$ and maintains 0.753 at $S=20$, consistently outperforming all baselines.

\subsection{Retry Count}
\label{sec:retry_count}

A natural question is whether more retries per base trajectory yield better exploration. We fix the total trajectory budget at $N=8$ and vary the retry count per base: 0 retries (all 8 base), 1 retry per base (4 base + 4 retry, R$^3$L's default), 2 sequential retries per base ($4 \times 2$), and 4 sequential retries per base ($4 \times 1$). Each additional retry restarts from a pivot identified by a fresh reflection on the most recent attempt.

\begin{table}[t]
\centering
\small
\adjustbox{max width=\linewidth}{%
\begin{tabular}{l l c}
\toprule
\textbf{Retry count} & \textbf{Trajectory allocation} & \textbf{ALFWorld} \\
\midrule
0 (GRPO + PA) & 8 base                  & 0.807 \\
\rowcolor{cyan!10} 1 (R$^3$L default) & 4 base + 4 retry        & \textbf{0.928} \\
2             & 4 base + 2$\times$2 retry & 0.913 \\
4             & 4 base + 4$\times$1 retry & 0.896 \\
\bottomrule
\end{tabular}%
}
\caption{Retry count ablation on Qwen2.5-1.5B, ALFWorld, with total trajectory budget $N=8$. All multi-retry settings keep 4 base trajectories and 4 retry trajectories. The notation $a \times b$ in the allocation column denotes $a$ sequential retries applied to each of $b$ base trajectories (so $a \times b = 4$ retry trajectories total); each additional retry restarts from the pivot identified by a fresh reflection on the previous attempt.}
\label{tab:retry_count}
\end{table}

Table~\ref{tab:retry_count} shows that a single retry already captures the bulk of R$^3$L's advantage over GRPO+PA, lifting ALFWorld from 0.807 to 0.928, while additional retries actually hurt performance: 2 retries drop to 0.913 and 4 retries to 0.896. Two factors explain this. First, under the fixed $N=8$ budget, more retries per base means fewer distinct base trajectories, reducing the diversity of base-retry pairs that feed GRPO's contrastive signal. Second, each reflection is an independent one-shot diagnosis that does not accumulate experience across rounds; the useful information comes from environment feedback after the first retry rather than from iterating on the same trajectory. This justifies R$^3$L's choice of exactly one retry per base.

\subsection{Group Size}
\label{sec:group_size}

Group size $N$ controls the number of trajectories per query used for group-relative advantage normalization. We compare R$^3$L, GRPO, Reflect-GRPO, and Critique-GRPO under $N \in \{4, 8, 16\}$ on Qwen2.5-1.5B across five benchmarks spanning agentic and mathematical reasoning.

\begin{table}[t]
\centering
\small
\adjustbox{max width=\linewidth}{%
\begin{tabular}{ll cccc}
\toprule
\textbf{$N$} & \textbf{Task} & \textbf{GRPO} & \textbf{Reflect-GRPO} & \textbf{Critique-GRPO} & \textbf{R$^3$L} \\
\midrule
\multirow{5}{*}{4}
 & ALFWorld     & 0.742 & 0.774 & 0.816 & \textbf{0.886} \\
 & ScienceWorld & 0.332 & 0.352 & 0.368 & \textbf{0.438} \\
 & WebShop      & 0.588 & 0.600 & 0.614 & \textbf{0.624} \\
 & Math500      & 0.334 & 0.348 & 0.362 & \textbf{0.398} \\
 & DAPO         & 0.108 & 0.118 & 0.116 & \textbf{0.138} \\
\midrule
\multirow{5}{*}{8}
 & ALFWorld     & 0.780 & 0.812 & 0.854 & \textbf{0.928} \\
 & ScienceWorld & 0.366 & 0.388 & 0.406 & \textbf{0.482} \\
 & WebShop      & 0.620 & 0.634 & 0.648 & \textbf{0.663} \\
 & Math500      & 0.367 & 0.382 & 0.398 & \textbf{0.432} \\
 & DAPO         & 0.123 & 0.136 & 0.133 & \textbf{0.156} \\
\midrule
\multirow{5}{*}{16}
 & ALFWorld     & 0.788 & 0.818 & 0.860 & \textbf{0.932} \\
 & ScienceWorld & 0.372 & 0.394 & 0.412 & \textbf{0.486} \\
 & WebShop      & 0.628 & 0.640 & 0.654 & \textbf{0.668} \\
 & Math500      & 0.374 & 0.388 & 0.404 & \textbf{0.436} \\
 & DAPO         & 0.128 & 0.140 & 0.137 & \textbf{0.160} \\
\bottomrule
\end{tabular}%
}
\caption{Group size $N$ ablation on Qwen2.5-1.5B across five benchmarks. \textbf{Bold} marks the best method per row. R$^3$L leads on every $(N, \text{benchmark})$ setting; gains are largest moving from $N=4$ to $N=8$, while $N=8 \to 16$ yields only marginal improvement.}
\label{tab:group_size}
\end{table}

Table~\ref{tab:group_size} shows that R$^3$L leads on every $(N, \text{benchmark})$ setting. Gains are largest moving from $N=4$ to $N=8$ (e.g., ALFWorld $0.886 \to 0.928$, ScienceWorld $0.438 \to 0.482$), while $N=8 \to 16$ yields only marginal improvement (ALFWorld $0.928 \to 0.932$). The same diminishing-returns pattern holds across all four methods, indicating $N=8$ is a practical sweet spot: it provides sufficient group diversity for stable variance-based normalization without excessive rollout cost. R$^3$L's relative advantage is preserved across $N$, confirming that its gains do not depend on a specific group size.

\section{Compute Efficiency Analysis}
\label{sec:compute}

Appendix~\ref{app:implementation} reports overall GPU hours; here we provide per-training-step cost breakdowns across four representative model--task settings, addressing how R$^3$L's design choices translate into wall-clock and token budgets. All measurements use two 8-GPU H20 servers with asynchronous pipeline parallelism.

\subsection{Structural Cost Comparison}

\begin{table*}[t]
\centering
\small
\resizebox{\textwidth}{!}{%
\begin{tabular}{l llll}
\toprule
& \textbf{GRPO} & \textbf{Reflect-GRPO} & \textbf{Critique-GRPO} & \textbf{R$^3$L} \\
\midrule
Base rollout & $N$ full & $N$ full & $N$ full & $N/2$ full \\
Reflect & --- & in-trajectory tokens & $N$ independent critiques & $N/2$ JSON ($\sim$500 tok) \\
Retry rollout & --- & in-trajectory & $N$ full refinements & $N/2$ partial from pivot$^\dagger$ \\
Total rollouts & $N$ & $N$ & $2N$ & $N$ to $2N^\dagger$ \\
Enters training & $N$ trajectories & $N$ with reflect tokens & $N$ initial + best refinement & base + distilled, all in group \\
Ref.\ model forward & required (KL) & required (KL) & required (KL) & not required \\
\bottomrule
\end{tabular}%
}
\caption{Rollout structure per training step. Critique-GRPO issues $2N$ rollouts and discards $N{-}1$ refinements; R$^3$L splits $N$ rollouts into $N/2$ base and $N/2$ conditionally triggered partial retries, and removes the reference-model forward pass by dropping KL. $^\dagger$R$^3$L's retry is triggered only when reflection deems the base trajectory non-successful, so total rollouts lie between $N$ and $2N$ depending on the trigger rate.}
\label{tab:compute_structural}
\end{table*}

Table~\ref{tab:compute_structural} summarizes the rollout structure of each method. Two design differences drive the cost gap. First, Critique-GRPO issues $2N$ rollouts ($N$ base plus $N$ full refinements) and selects only the best refinement for training, discarding computation for the remaining $N{-}1$; R$^3$L splits the same $N$ rollouts into $N/2$ base and $N/2$ partial retries triggered only when reflection deems the base non-successful, with all trajectories contributing to the training group. Second, R$^3$L drops the KL penalty and therefore removes the reference-model forward pass that the other three methods require.

\subsection{Per-step Measurements}

\begin{table*}[t]
\centering
\small
\resizebox{\textwidth}{!}{%
\begin{tabular}{l l cccc}
\toprule
\textbf{Setting} & \textbf{Metric} & \textbf{GRPO} & \textbf{Reflect-GRPO} & \textbf{Critique-GRPO} & \textbf{R$^3$L} \\
\midrule
\multirow{4}{*}{Qwen2.5-1.5B on ALFWorld}
 & Total rollout turns / sample  & 114.4   & 146.8   & 208.0   & \textbf{75.2}  \\
 & Total training tokens / sample & 44{,}141 & 52{,}400 & 36{,}200 & \textbf{22{,}300} \\
 & Rollout time / step (s)       & 406.2   & 492.8   & 756.3   & \textbf{372.4} \\
 & Train time / step (s)         & \textbf{87.6} & 109.4 & 94.2  & 103.8 \\
\midrule
\multirow{4}{*}{Qwen2.5-1.5B on DAPO}
 & Total rollout turns / sample  & \textbf{10.2}    & 19.6    & 28.1    & 12.9    \\
 & Total training tokens / sample & \textbf{16{,}400} & 27{,}600 & 17{,}300 & 18{,}000 \\
 & Rollout time / step (s)       & 342.6   & 418.3   & 646.8   & \textbf{292.4} \\
 & Train time / step (s)         & \textbf{54.8} & 73.6    & 59.2    & 87.4    \\
\midrule
\multirow{4}{*}{Qwen3-4B on ALFWorld}
 & Total rollout turns / sample  & 81.6    & 108.0   & 153.6   & \textbf{58.8}  \\
 & Total training tokens / sample & 70{,}176 & 83{,}900 & 50{,}600 & \textbf{26{,}900} \\
 & Rollout time / step (s)       & 713.8   & 836.4   & 1356.8  & \textbf{670.4} \\
 & Train time / step (s)         & \textbf{188.9} & 218.7 & 197.3 & 208.6 \\
\midrule
\multirow{4}{*}{Qwen3-4B on DAPO}
 & Total rollout turns / sample  & \textbf{8.72}    & 16.6    & 25.12   & 11.24   \\
 & Total training tokens / sample & 28{,}663 & 42{,}200 & 30{,}874 & \textbf{25{,}500} \\
 & Rollout time / step (s)       & 1148.0  & 1392.6  & 2318.4  & \textbf{773.5} \\
 & Train time / step (s)         & \textbf{121.3} & 146.8 & 129.6 & 225.8 \\
\bottomrule
\end{tabular}%
}
\caption{Per-training-step cost breakdown on four representative model--task settings, measured on two 8-GPU H20 servers with asynchronous pipeline parallelism. Token counts are model-generated response tokens only (prompt tokens excluded). \textbf{Bold} marks the lowest value in each row.}
\label{tab:compute_measurements}
\end{table*}

Table~\ref{tab:compute_measurements} reports per-step rollout and training cost on four representative settings. Three observations follow.

\textbf{Rollout time: R$^3$L is the lowest across all four settings.} On the most expensive setting (Qwen3-4B on DAPO), R$^3$L completes at 773.5\,s/step versus GRPO's 1148.0\,s ($-33\%$) and Critique-GRPO's 2318.4\,s ($-67\%$). The saving comes from a combination of fewer rollout turns on multi-step tasks and no reference-model forward pass.

\textbf{Training tokens: roughly halved on multi-step tasks.} R$^3$L uses 22{,}300 tokens/sample on Qwen2.5-1.5B ALFWorld versus GRPO's 44{,}141 ($-49\%$), and 26{,}900 versus 70{,}176 on Qwen3-4B ALFWorld ($-62\%$). The reduction stems from pivot-based partial retry combined with the conditional retry trigger: once the base trajectory succeeds, no retry is launched and no extra tokens enter training.

\textbf{Single-step math tasks: rollout turns comparable, but rollout wall-clock still lower.} On DAPO the trajectory is already one turn, so R$^3$L's turn savings vanish (Qwen2.5-1.5B: GRPO 10.2 vs R$^3$L 12.9; Qwen3-4B: 8.72 vs 11.24). R$^3$L still wins on rollout time (292.4 vs 342.6 on Qwen2.5-1.5B; 773.5 vs 1148.0 on Qwen3-4B) because the KL-free objective removes the reference-model forward pass.

R$^3$L's per-step train time can be slightly higher than GRPO's on some settings (e.g., Qwen3-4B on DAPO: 225.8 vs 121.3) because it runs auxiliary SFT on reflection and retry data. The overall wall-clock benefit nevertheless favours R$^3$L, since rollout time dominates total per-step cost and R$^3$L leads on rollout time across all four settings.

\section{Comparison with Related Methods}
\label{sec:related_comparison}

Section~\ref{sec:related_work} positions R$^3$L against existing baselines; here we consolidate the key design distinctions along four axes.

For exploration, GRPO~\cite{shao2024deepseekmath}, GSPO~\cite{zheng2025group}, BAPO~\cite{xi2025bapo}, and OPMD~\cite{yao2025group} rely on pure stochastic sampling. Reflect-GRPO~\cite{bensal2025reflect} and Critique-GRPO~\cite{zhang2025critique} supplement sampling with retries, but trigger retry on every base trajectory. R$^3$L instead triggers retry only when reflection classifies the base as non-successful, and restarts from the identified pivot rather than from scratch.

For credit assignment, GRPO-family methods assign trajectory-level advantages. Process Reward Models~\cite{wang2024math,lightman2023let} offer step-level supervision but require expensive human annotations; GiGPO~\cite{feng2025group} and VinePPO~\cite{kazemnejad2024vineppo} estimate step-level credit through anchor states or Monte Carlo rollouts, adding compute or state-compression constraints. R$^3$L derives step-level credit from the contrastive structure of base-retry pairs without any external annotation, learned verifier, or additional rollouts.

For rollout cost, Critique-GRPO doubles the budget to $2N$ and discards $N{-}1$ refinements. R$^3$L keeps the budget within $N$ ($N/2$ base plus conditional $N/2$ partial retries) and further removes the reference-model forward pass by dropping the KL penalty (Appendix~\ref{sec:compute}).

For hyperparameters, GSPO requires an adaptive clipping range, BAPO uses multiple dynamic clipping bounds, and Critique-GRPO adds a shaping coefficient. R$^3$L adds a single scalar $\alpha$ fixed at $3.0$ across all experiments, and drops both $\beta$ (KL) and $\epsilon$ (importance-sampling clipping) from GRPO.

R$^3$L is the only method in our comparison to simultaneously achieve step-level credit, require no external supervision, stay within the GRPO rollout budget, and introduce a single new hyperparameter.

\section{Algorithm}
\label{sec:algorithm}
\begin{algorithm*}[t]
\small
\caption{R$^3$L: Reflect-then-Retry Reinforcement Learning with Language-Guided Exploration, Pivotal Credit and Positive Amplification}
\label{alg:R3L}
\begin{algorithmic}[1]
\STATE \textbf{Input:} Policy $\pi_{\theta}$, Reward function $R$, Prompt dataset $\mathcal{D}$
\STATE \textbf{Hyperparameters:} Group size $N$, Amplification factor $\alpha$
\STATE \textbf{Initialize:} $\theta \leftarrow \theta_{old}$
\WHILE{not converged}
    \FOR{each prompt $x$ in batch $\mathcal{B} \sim \mathcal{D}$}
        \STATE $\mathcal{G} \leftarrow \emptyset$, $\mathcal{D}_{reflect} \leftarrow \emptyset$, $\mathcal{D}_{retry} \leftarrow \emptyset$ \hfill $\triangleright$ Initialize exploration group and SFT datasets
        
        \STATE \textcolor{blue}{\textit{// Phase 1: Language-Guided Exploration}}
        \STATE Sample $N/2$ base trajectories: $\{\tau_1, \dots, \tau_{N/2}\} \sim \pi_{\theta_{old}}(\cdot|x)$
        \FOR{each base trajectory $\tau_i$ with environment feedback $f_i$}
            \STATE Generate reflection $r_i \sim \pi_{\theta_{old}}(\cdot | \tau_i, f_i, \text{prompt}_{reflect})$
            \STATE Parse $r_i$ to obtain pivot turn $k_{pivot}$ and guidance $g_i$
            \STATE Sample retry suffix from $k_{pivot}$: $\tau'_{i,\geq k_{pivot}} \sim \pi_{\theta_{old}}(\cdot | \tau_{i,<k_{pivot}}, g_i)$
            \STATE \textbf{Context Distillation:} Construct $\tau'_i = (\tau_{i,<k_{pivot}}, \tau'_{i,\geq k_{pivot}})$ without $g_i$
            
            \STATE \textcolor{blue}{\textit{// Phase 2: Pivotal Credit Assignment}}
            \STATE Define pivot mask: $M[k] = 0$ if $k < k_{pivot}$ else $1$
            \STATE Add $(\tau_i, M)$ to $\mathcal{G}$
            \STATE Add $(\tau'_i, M)$ to $\mathcal{G}$
            
            \STATE \textcolor{blue}{\textit{// Collect SFT Data for Meta-Tasks (verified successful corrections)}}
            \IF{$R(\tau'_i) > R(\tau_i)$}
                \STATE Add $([\tau_i, f_i], r_i)$ to $\mathcal{D}_{reflect}$ \hfill $\triangleright$ Learn to reflect
                \STATE Add $(\tau_{i,<k_{pivot}} \oplus g_i, \tau'_{i,\geq k_{pivot}})$ to $\mathcal{D}_{retry}$ \hfill $\triangleright$ Learn to retry
            \ENDIF
        \ENDFOR
        
        \STATE \textcolor{blue}{\textit{// Phase 3: Positive Amplification}}
        \STATE Compute rewards $\{R(\tau) | \tau \in \mathcal{G}\}$
        \STATE Compute group statistics: $\bar{R} = \text{mean}$, $\sigma_R = \text{std}$, $R_{max} = \max$
        \FOR{each $(\tau_j, M_j) \in \mathcal{G}$}
            \STATE $A_j \leftarrow (R(\tau_j) - \bar{R}) / \sigma_R$
            \IF{$R(\tau_j) = R_{max}$}
                \STATE $\hat{A}_j \leftarrow \alpha$
            \ELSIF{$A_j > 0$}
                \STATE $\hat{A}_j \leftarrow \alpha \cdot A_j$
            \ELSE
                \STATE $\hat{A}_j \leftarrow A_j$
            \ENDIF
        \ENDFOR
        
        \STATE \textcolor{blue}{\textit{// Policy Update with RL and Auxiliary SFT Objectives}}
        \STATE $\mathcal{L}_{RL}(\theta) = -\frac{1}{|\mathcal{G}|} \sum_{(\tau_j, M_j) \in \mathcal{G}} \frac{1}{|\tau_j|} \sum_{k,t} M_j[k] \cdot \hat{A}_j \cdot \log \pi_{\theta}(y_k^t | h_k, y_k^{<t})$
        \STATE $\mathcal{L}_{SFT}(\theta) = - \sum_{(x,y) \in \mathcal{D}_{reflect} \cup \mathcal{D}_{retry}} \log \pi_{\theta}(y|x)$ \hfill $\triangleright$ Maintain reflection and retry skills
        \STATE Update $\theta$ via gradient descent on $\mathcal{L}_{RL}(\theta) + \mathcal{L}_{SFT}(\theta)$
    \ENDFOR
\ENDWHILE
\end{algorithmic}
\end{algorithm*}
Algorithm \ref{alg:R3L} summarizes the complete R$^3$L training procedure. The algorithm addresses a fundamental tension in reinforcement fine-tuning: discovering successful trajectories requires exploration, but learning from them requires stable optimization. R$^3$L resolves this tension through three integrated phases.

\textbf{Phase 1: Language-Guided Exploration.} Standard methods allocate their full sampling budget to stochastic exploration, which yields predominantly failed trajectories on difficult tasks. R$^3$L instead splits the budget evenly between base sampling and guided retry. For each base trajectory, the model generates a structured reflection that diagnoses errors and identifies the pivot turn where the issue first manifested. Generation then restarts from this pivot, conditioned on the diagnostic guidance, producing a retry suffix that addresses the identified failure. The critical step is context distillation, which combines the original prefix with the corrected suffix while removing the guidance from the training input. This forces the model to internalize corrections rather than relying on explicit diagnostic prompts, ensuring that improvements transfer directly to inference where no guidance is available.

\textbf{Phase 2: Pivotal Credit Assignment.} The exploration phase produces base and retry trajectories that share identical prefixes up to the pivot turn. This contrastive structure enables precise credit assignment without external annotation. We construct a binary mask that zeros out all turns before the pivot, excluding the shared prefix from gradient computation. Both base and retry trajectories receive the same mask, focusing optimization exclusively on the diverging suffix where one path succeeds, and the other fails. This design serves two purposes. First, it protects valid reasoning in the prefix from being penalized when a later error causes trajectory failure. Second, it reduces gradient variance by eliminating the noise contribution from tokens where both trajectories behave identically.

\textbf{Phase 3: Positive Amplification.} The exploration group now contains both base and retry trajectories with varying rewards. On difficult tasks, failures often outnumber successes even after retries, causing standard group-relative normalization to dilute positive signals. We counteract this by reweighting advantages before computing the policy gradient. Trajectories achieving the maximum reward in the group receive the full amplification factor $\alpha$, ensuring at least one strong positive signal per group. Other positive-advantage trajectories are scaled proportionally to maintain their relative ordering, while negative-advantage trajectories remain unchanged. This asymmetric treatment guarantees that constructive gradients dominate the update direction, channeling probability mass toward discovered solutions rather than dispersing it through blind error suppression.

The final policy update combines the pivot mask with amplified advantages in a single gradient step. Unlike standard GRPO, R$^3$L omits both importance sampling and KL regularization. Importance sampling becomes unreliable for retry trajectories generated under guidance, as the behavior distribution differs fundamentally from the current policy. KL constraints are unnecessary because positive amplification already prevents entropy collapse. This simplification reduces computational and memory overhead while the three-phase design maintains training stability.

\section{Trajectory Visualization and Context Distillation}
\label{sec:appendix_trajectory}

This section details the four trajectory types generated during R$^3$L training and analyzes how context distillation enables learning from guided exploration without creating inference-time dependencies. Figure \ref{fig:trajectory_types} visualizes these four categories.

\begin{figure*}[htbp]
    \centering
    \includegraphics[width=1.0\linewidth]{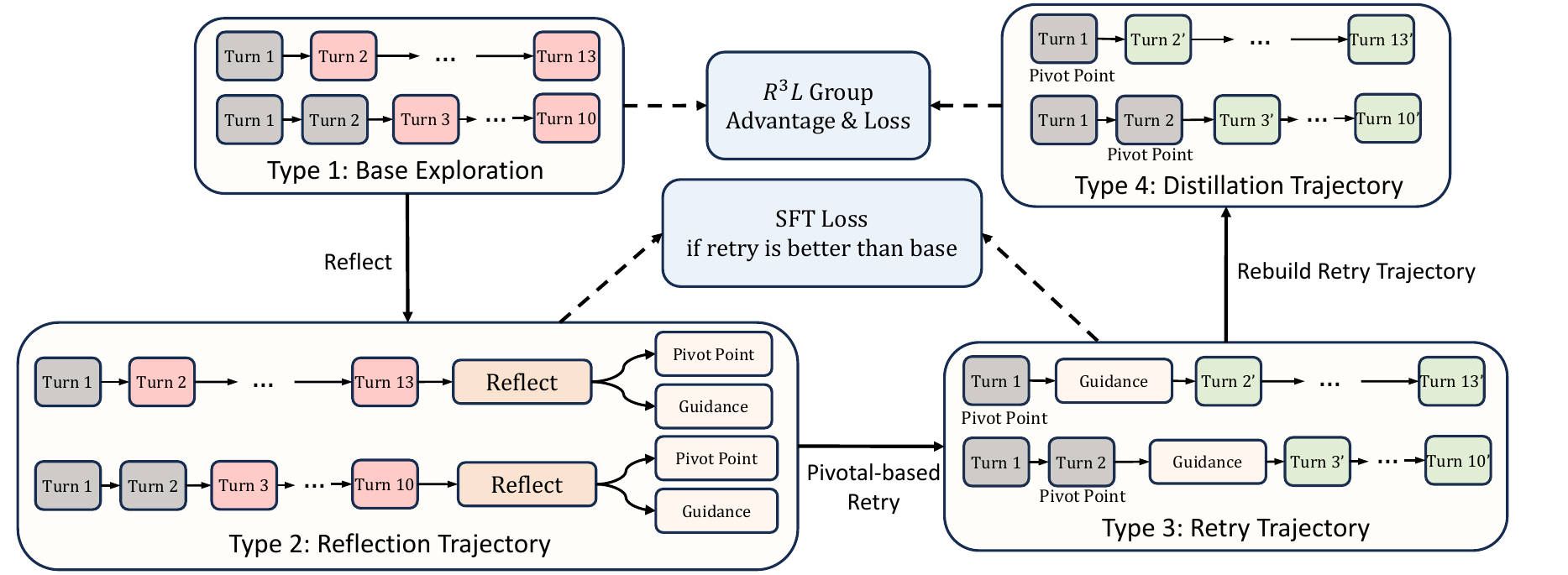}
    \caption{Four types of trajectories in R$^3$L. Type 1 represents base exploration from the current policy. Type 2 captures the reflection process that diagnoses errors and identifies pivot points. Type 3 shows retry generation conditioned on diagnostic guidance. Type 4 is the distillation trajectory that combines the original prefix with the corrected suffix, removing guidance dependency for training.}
    \label{fig:trajectory_types}
\end{figure*}

\subsection{Four Trajectory Types}

\textbf{Type 1: Base Trajectories.} Standard exploration samples trajectories directly from the behavior policy $\pi_{\theta_{old}}(\cdot | x)$. These trajectories form $\mathcal{D}_{base}$ and enter the exploration group $\mathcal{G}_{explore}$ for RL optimization.

\textbf{Type 2: Reflection Trajectories.} Given a base trajectory paired with environment feedback, the model generates a structured diagnosis including outcome classification, root cause analysis, improvement suggestions, and the pivot turn $k_{pivot}$ where the issue first manifested. This reflection transforms implicit failure signals into explicit actionable guidance.

\textbf{Type 3: Retry Trajectories.} Generation restarts from $k_{pivot}$ conditioned on the diagnostic guidance $g$, producing a corrected suffix $\tau'_{\geq k_{pivot}}$. Here, we refer to the turns preceding the pivot as the \emph{prefix} ($\tau_{<k_{pivot}}$) and the turns from the pivot onward as the \emph{suffix} ($\tau_{\geq k_{pivot}}$); both credit assignment and trajectory stitching operate at this turn level. Conditioning on explicit error analysis significantly increases the likelihood of generating improved solutions compared to unguided sampling.

\textbf{Type 4: Distillation Trajectories.} We construct distillation trajectories by pairing the original prefix $\tau_{<k_{pivot}}$ with the retry suffix $\tau'_{\geq k_{pivot}}$, explicitly removing the guidance $g$ from the context. These trajectories form $\mathcal{D}_{distill}$ and enter the exploration group alongside base trajectories for RL optimization. This construction is essential because directly training on Type 3 would teach the model to expect guidance at inference time when none is available.

\subsection{Training Data Organization}

The exploration group $\mathcal{G}_{explore} = \mathcal{D}_{base} \cup \mathcal{D}_{distill}$ contains both base and distillation trajectories. All trajectories receive rewards from the environment, and advantages are computed through group-relative normalization followed by Positive Amplification. This means distillation trajectories that improve upon their base counterparts receive positive advantages, while those that fail to improve receive negative advantages and are suppressed.

Separately, we construct two auxiliary SFT datasets from instances where retry achieves strictly higher reward than the base attempt. The reflection dataset $\mathcal{D}_{reflect}$ trains the model to produce structured diagnoses, and the retry dataset $\mathcal{D}_{retry}$ trains guided correction with the guidance present in the input. These auxiliary tasks maintain reflection and retry capabilities throughout training, preventing skill degradation as the policy distribution shifts.

\subsection{Context Distillation Mechanism}
The key insight is that Type 4 trajectories enable the model to learn from guided exploration without depending on guidance at inference time. Standard context distillation trains a student to match a teacher's output distribution when the teacher has access to additional context:
\begin{equation}
\mathcal{L}_{distill} = -\mathbb{E}_{y \sim \pi_{teacher}(\cdot | x, g)} \left[ \log \pi_{\theta}(y | x) \right]
\end{equation}

R$^3$L achieves a similar effect through its RL objective. For a distillation trajectory $\tau' = (\tau_{<k_{pivot}}, \tau'_{\geq k_{pivot}})$, the gradient takes the form:
\begin{equation}
\nabla \mathcal{L} \propto \hat{A}(\tau') \cdot \nabla \log \pi_{\theta}(\tau'_{\geq k_{pivot}} | \tau_{<k_{pivot}})
\end{equation}
where $\log \pi_{\theta}(\tau'_{\geq k_{pivot}} | \tau_{<k_{pivot}})$ denotes the autoregressive log-probability of the retry suffix conditioned on the original prefix. When $\tau'$ achieves high reward, positive $\hat{A}(\tau')$ increases the likelihood of generating the corrected suffix without guidance. When retries fail to improve, negative advantages suppress those actions. The result is amortized inference: the model learns to generate improved solutions directly, having internalized correction patterns through training.

\section{Analysis of the Pivotal Mechanism}
\label{sec:pivot_analysis}

\subsection{Pivot Location Evolution}

The pivotal mechanism identifies failure points and focuses learning on critical decision turns. Here, we examine how pivot point locations change as training progresses.

\begin{figure}[t]
    \centering
    \resizebox{0.75\linewidth}{!}{%
    \begin{minipage}{\linewidth}
    \begin{minipage}{0.49\linewidth}
        \centering
        \begin{tikzpicture}
            \begin{axis}[
                width=\linewidth, height=4.7cm,
                xlabel={Training Steps}, ylabel={Avg. Pivot Point},
                grid=major, ymin=0, ymax=8, ytick={0,2,4,6,8},
                font=\footnotesize, yticklabel={\pgfmathprintnumber{\tick}},
            ]
            \addplot[color=blue, mark=square*, mark size=0pt, line width=1.2pt, smooth, tension=1.0]
            table[x=step, y=pivot] {figure/src/alfworld-pivot.dat};
            \end{axis}
        \end{tikzpicture}
        \centerline{\footnotesize (a) ALFWorld}
    \end{minipage}\hfill
    \begin{minipage}{0.49\linewidth}
        \centering
        \begin{tikzpicture}
            \begin{axis}[
                width=\linewidth, height=4.7cm,
                xlabel={Training Steps}, ylabel={Avg. Pivot Point},
                grid=major, ymin=0,
                font=\footnotesize, yticklabel={\pgfmathprintnumber{\tick}},
            ]
            \addplot[color=green!70!black, mark=triangle*, mark size=0pt, line width=1.2pt, smooth, tension=1.0]
            table[x=step, y=pivot] {figure/src/webshop-pivot.dat};
            \end{axis}
        \end{tikzpicture}
        \centerline{\footnotesize (b) WebShop}
    \end{minipage}

    \vspace{0.6em}

    \begin{minipage}{0.49\linewidth}
        \centering
        \begin{tikzpicture}
            \begin{axis}[
                width=\linewidth, height=4.7cm,
                xlabel={Training Steps}, ylabel={Avg. Pivot Point},
                grid=major, ymin=0, ymax=15, ytick={0,3,6,9,12,15},
                font=\footnotesize, yticklabel={\pgfmathprintnumber{\tick}},
            ]
            \addplot[color=red, mark=*, mark size=0pt, line width=1.2pt, smooth, tension=1.0]
            table[x=step, y=pivot] {figure/src/scienceworld-pivot.dat};
            \end{axis}
        \end{tikzpicture}
        \centerline{\footnotesize (c) ScienceWorld}
    \end{minipage}\hfill
    \begin{minipage}{0.49\linewidth}
        \centering
        \begin{tikzpicture}
            \begin{axis}[
                width=\linewidth, height=4.7cm,
                xlabel={Training Steps}, ylabel={Avg. Pivot Point},
                grid=major, ymin=0,
                font=\footnotesize, yticklabel={\pgfmathprintnumber{\tick}},
            ]
            \addplot[color=orange, mark=diamond*, mark size=0pt, line width=1.2pt, smooth, tension=1.0]
            table[x=step, y=pivot] {figure/src/dapo-pivot.dat};
            \end{axis}
        \end{tikzpicture}
        \centerline{\footnotesize (d) DAPO (Math)}
    \end{minipage}
    \end{minipage}}
    \caption{Evolution of average pivot points across training. The pivot point indicates the step where the model's initial trajectory fails and requires correction. An increasing trend suggests the model learns to succeed at earlier stages, pushing failure points further into the trajectory.}
    \label{fig:pivot_evolution}
\end{figure}

Figure \ref{fig:pivot_evolution} plots the average pivot point $k_{pivot}$ across training steps for four benchmarks, where $k_{pivot} = 0$ indicates restarting from the beginning of the trajectory. We observe consistent upward trends in embodied environments. ALFWorld pivot points increase from approximately 2 to over 6 steps, while ScienceWorld exhibits an even more pronounced shift from around 2 to over 12 steps across 500 training steps. WebShop shows rapid initial growth from 1 to 4 steps within the first 20 steps, but then declines and stabilizes around 1. For mathematical reasoning in DAPO, which allows up to 3 attempts, the pivot point rises from 0.4 to 0.6 with considerable variance.

This rightward shift carries important implications. Early in training, pivot points concentrate near trajectory beginnings, indicating the model struggles with initial planning or format adherence. As training progresses, average pivot positions migrate toward later stages, suggesting the model stabilizes early-stage actions and preserves valid prefixes. The pivotal mechanism's role consequently evolves from global trajectory correction to fine-grained refinement at execution tails. This evolution also strengthens variance reduction by Theorem \ref{thm:variance_reduction}. As $T_{pivot}/T$ increases, gradient variance decreases proportionally.

\subsection{Pivot Localization Accuracy}
\label{sec:pivot_accuracy}

Beyond the evolution of pivot locations, we directly evaluate how accurately the model identifies failure points. ScienceWorld provides step-level subtask rewards, enabling automatic oracle pivot identification. For ALFWorld and WebShop, we sample 100 failed trajectories per benchmark and annotate oracle pivots through a combination of human judgment and DeepSeek labeling. We define Oracle Agreement as the model's pivot falling within one step of the oracle position, and classify each diagnosis as Accurate, Partial (within 3 steps), or Misdiagnosed.

\begin{table}[t]
\centering
\small
\adjustbox{max width=\linewidth}{%
\begin{tabular}{ll rrr rrr}
\toprule
\multirow{2}{*}{\textbf{Task (Model)}} & \multirow{2}{*}{\textbf{Step}}
 & \textbf{Oracle} & \textbf{Retry SR} & \textbf{Retry SR}
 & \multirow{2}{*}{\textbf{Accurate}} & \multirow{2}{*}{\textbf{Partial}} & \multirow{2}{*}{\textbf{Misdiag.}} \\
 & & \textbf{Agr.} & \textbf{(Correct)} & \textbf{(Wrong)} & & & \\
\midrule
\multirow{2}{*}{ALFWorld (1.5B)}
 & 100 & 43\% & 71\% & 34\% & 27\% & 39\% & 34\% \\
 & 400 & 63\% & 81\% & 37\% & 47\% & 34\% & 18\% \\
\midrule
\multirow{2}{*}{ALFWorld (7B)}
 & 100 & 57\% & 81\% & 41\% & 43\% & 35\% & 22\% \\
 & 400 & 76\% & 85\% & 43\% & 57\% & 31\% & 12\% \\
\midrule
\multirow{2}{*}{WebShop (7B)}
 & 100 & 45\% & 39\% & 13\% & 33\% & 39\% & 28\% \\
 & 400 & 60\% & 51\% & 14\% & 46\% & 33\% & 22\% \\
\midrule
\multirow{4}{*}{ScienceWorld (1.5B)}
 & 100 & 36\% & 57\% & 21\% & 23\% & 39\% & 38\% \\
 & 200 & 44\% & 64\% & 24\% & 31\% & 39\% & 30\% \\
 & 300 & 53\% & 71\% & 27\% & 41\% & 36\% & 23\% \\
 & 400 & 61\% & 76\% & 29\% & 49\% & 33\% & 18\% \\
\bottomrule
\end{tabular}%
}
\caption{Pivot localization accuracy against oracle labels at selected training steps. \textbf{Oracle Agr.}: model's pivot within one step of the oracle. \textbf{Retry SR (Correct/Wrong)}: retry success rate conditioned on whether the pivot matches the oracle. \textbf{Accurate / Partial / Misdiag.}: diagnosis categorization (Partial = within 3 steps of oracle; Misdiag.\ = off by more than 3 or wrong cause). ScienceWorld oracles are derived from step-level subtask rewards; ALFWorld and WebShop oracles are annotated via human judgment and DeepSeek labeling on 100 failed trajectories per benchmark.}
\label{tab:pivot_accuracy}
\end{table}

Table~\ref{tab:pivot_accuracy} reports three observations. First, Oracle Agreement rises substantially during training on every benchmark; on ALFWorld with the 7B model it climbs from 57\% at step 100 to 76\% at step 400, and on ScienceWorld it rises from 36\% to 61\% over the same range. Second, retry success rate conditioned on a correct pivot is consistently 2--3$\times$ higher than on a wrong pivot; on ALFWorld at step 400, a correct pivot yields 85\% retry success against 43\% for a wrong one, showing that accurate localization is causally linked to retry success rather than merely correlated. Third, Accurate diagnoses grow steadily while Misdiagnoses shrink across all benchmarks; on ALFWorld with the 1.5B model, Accurate rises from 27\% to 47\% while Misdiagnoses fall from 34\% to 18\% between steps 100 and 400, consistent with the auxiliary SFT's role in maintaining reflection quality. R$^3$L is also robust to the remaining reflection errors. Retries with misdiagnosed pivots receive negative advantages through group-relative normalization and Positive Amplification, and the auxiliary SFT only consumes verified successful corrections.

\subsection{Task-specific Error Amenability}
\label{sec:error_amenability}

The variation in retry gains across tasks (Table~\ref{tab:retry_gain_stats}) reflects how amenable different error types are to language-guided correction. ALFWorld failures typically stem from discrete action errors that are easy to diagnose and fix through reflection. Mathematical reasoning in DAPO involves localized mistakes in derivation steps that can be identified and corrected. WebShop requires navigating complex search and matching processes where improvements are more incremental.

\section{Qualitative Case Studies}
\label{sec:case_studies}

To ground the statistical observations from preceding sections, we analyze two representative examples in Figures \ref{fig:case_alfworld} and \ref{fig:case_dapo}, presenting simplified trajectories that preserve the essential structure of base failures, reflection diagnoses, and retry corrections.

\textbf{Embodied Decision Making.} Figure \ref{fig:case_alfworld} presents an ALFWorld task requiring the agent to locate a keychain and place it in a dresser. The base trajectory cycles through previously visited locations for 25 steps without ever checking armchair 2, where the keychain was located. The reflection identifies this as a strategic deficit rather than any single wrong action: the agent made no invalid moves, yet lacked systematic coverage. With step 0 designated as the pivot point, the retry trajectory adopts goal-directed search, discovering the keychain on step 4 and completing the task in 8 steps.

\textbf{Mathematical Reasoning.} Figure \ref{fig:case_dapo} presents a DAPO counting problem with three independent attempts. The base trajectory reveals conceptual lock-in: Step 1 misreads ``but not both'' as simple union, Step 2 attempts arithmetic adjustments within the same framework, and Step 3 abandons systematic reasoning entirely. The reflection identifies this as a problem parsing failure and recommends reframing as disjoint partitions. The retry succeeds immediately by counting ``divisible by 3 only'' and ``divisible by 5 only'' separately.

Both cases demonstrate that effective reflection targets the strategic level, enabling escape from flawed approaches rather than patching surface-level errors.

\begin{figure*}[thb]
\centering

\begin{tcolorbox}[
    colback=white,
    colframe=black!70,
    title=\textbf{ALFWorld: Put a keychain in dresser},
    fonttitle=\bfseries\small,
    width=\textwidth,
    boxrule=0.5pt
]

\textbf{Base Trajectory} \hfill \textcolor{red!70!black}{\textbf{Failed after 25 steps}}

\vspace{0.5em}
\footnotesize
\begin{tabularx}{\linewidth}{@{}r@{\hspace{0.8em}}X@{}}
Step 0 & \texttt{go to dresser 1} \\
       & \textit{Observation: On dresser 1, you see a houseplant 1, a laptop 1, and a television 1.} \\[0.3em]
Step 1 & \texttt{go to coffeetable 1} \\
       & \textit{Observation: On coffeetable 1, you see a statue 3, statue 2, statue 1, watch 2, watch 1.} \\[0.3em]
Step 2 & \texttt{go to sidetable 1} \\
       & \textit{Observation: On sidetable 1, you see nothing.} \\[0.3em]
Step 3 & \texttt{go to sofa 1} \\
       & \textit{Observation: On sofa 1, you see a pillow 1.} \\[0.3em]
Step 4 & \texttt{go to dresser 1} \quad (revisiting) \\[0.3em]
Step 5 & \texttt{go to garbagecan 1} \\
       & \textit{Observation: On garbagecan 1, you see nothing.} \\[0.3em]
Step 6 & \texttt{go to armchair 1} \\
       & \textit{Observation: On armchair 1, you see a pillow 2.} \\[0.3em]
Step 7--24 & Cycles repeatedly: coffeetable 1 $\rightarrow$ sidetable 1 $\rightarrow$ sofa 1 $\rightarrow$ dresser 1 $\rightarrow$ armchair 1 ... \\[0.3em]
Step 25 & Task timeout \\
\end{tabularx}

\vspace{0.3em}
\textit{Failure: Agent visited armchair 1 but never checked \textbf{armchair 2}, where the keychain was located.}

\end{tcolorbox}

\vspace{0.5em}

\begin{tcolorbox}[
    colback=gray!8,
    colframe=black!70,
    title=\textbf{Reflection Output},
    fonttitle=\bfseries\small,
    width=\textwidth,
    boxrule=0.5pt
]
\footnotesize

\textbf{Trajectory Summary:} The agent attempted to put a keychain in a dresser but made multiple incorrect moves due to incorrect mental models and environmental interactions, resulting in failure to complete the task within the step limit.

\vspace{0.5em}
\textbf{Root Cause Analysis:} Why was the approach suboptimal? Because the agent jumped directly into navigation without considering the keychain's starting location and environmental context. The agent treated exploration as open-ended wandering rather than systematic enumeration. Critical oversight: armchair 2 was never visited despite being a valid receptacle.

\vspace{0.5em}
\textbf{Pivot Point:} Step 0 

\vspace{0.5em}
\textbf{Guidance:} Adopt systematic search ensuring complete coverage of all receptacles.

\end{tcolorbox}

\vspace{0.5em}

\begin{tcolorbox}[
    colback=white,
    colframe=black!70,
    title=\textbf{Retry Trajectory (guidance removed for training)},
    fonttitle=\bfseries\small,
    width=\textwidth,
    boxrule=0.5pt
]

\hfill \textcolor{green!50!black}{\textbf{Success in 8 steps}}

\vspace{0.5em}
\footnotesize
\begin{tabularx}{\linewidth}{@{}r@{\hspace{0.8em}}X@{}}
Step 0 & \texttt{go to armchair 1} \\
       & \textit{Observation: On armchair 1, you see a pillow 2.} \\[0.3em]
Step 1 & \texttt{go to cabinet 1} \\
       & \textit{Observation: The cabinet 1 is closed.} \\[0.3em]
Step 2 & \texttt{open cabinet 1} \\
       & \textit{Observation: In cabinet 1, you see a box 1.} \\[0.3em]
Step 3 & \texttt{go to dresser 1} \\
       & \textit{Observation: On dresser 1, you see a houseplant 1, laptop 1, television 1.} \\[0.3em]
Step 4 & \texttt{go to armchair 2} \\
       & \textit{Observation: On armchair 2, you see a \textbf{keychain 1}.} \\[0.3em]
Step 5 & \texttt{take keychain 1 from armchair 2} \\
       & \textit{Observation: You pick up the keychain 1 from the armchair 2.} \\[0.3em]
Step 6 & \texttt{go to dresser 1} \\[0.3em]
Step 7 & \texttt{move keychain 1 to dresser 1} \\
       & \textit{Feedback: Task completed successfully (reward: 1/1.0)} \\
\end{tabularx}

\vspace{0.3em}
\textit{Key difference: Retry trajectory visits armchair 2 on step 4, which base trajectory never explored.}

\end{tcolorbox}

\caption{ALFWorld case study showing base trajectory failure and successful retry. The base trajectory exhausts 25 steps cycling through a subset of locations while missing armchair 2. After reflection identifies the systematic coverage failure, the retry trajectory efficiently explores new receptacles and completes the task in 8 steps. Trajectories are simplified for clarity.}
\label{fig:case_alfworld}
\end{figure*}

\begin{figure*}[b]
\centering

\begin{tcolorbox}[
    colback=white,
    colframe=black!70,
    title=\textbf{DAPO: How many integers from 1 to 120 are divisible by 3 or 5 but not both?},
    fonttitle=\bfseries\small,
    width=\textwidth,
    boxrule=0.5pt
]

\textbf{Base Trajectory} \hfill \textcolor{red!70!black}{\textbf{Failed after 3 attempts}}

\vspace{0.5em}

% Step 1
\footnotesize
\textbf{Step 1:}\par
\begin{tabularx}{\linewidth}{@{}X@{}}
\textit{I need to find integers from 1 to 120 divisible by 3 or 5.} \\[0.2em]
Divisible by 3: $\lfloor 120/3 \rfloor = 40$ \quad Divisible by 5: $\lfloor 120/5 \rfloor = 24$ \quad Both: $\lfloor 120/15 \rfloor = 8$ \\[0.2em]
By inclusion-exclusion: $|A \cup B| = 40 + 24 - 8 = 56$ \\[0.2em]
\textbf{Answer: 56} \quad \textcolor{red!70!black}{Feedback: Incorrect.}
\end{tabularx}

\vspace{0.5em}

% Step 2
\textbf{Step 2:}\par
\footnotesize
\begin{tabularx}{\linewidth}{@{}X@{}}
\textit{Previous attempt was wrong. ``But not both'' means I should exclude the intersection.} \\[0.2em]
$|A \cup B| - |A \cap B| = 56 - 8 = 48$... wait, that subtracts intersection twice from the union. \\[0.2em]
\textit{Let me subtract twice:} $56 - 8 - 8 = 40$ \\[0.2em]
\textbf{Answer: 40} \quad \textcolor{red!70!black}{Feedback: Incorrect.}
\end{tabularx}

\vspace{0.5em}

% Step 3
\textbf{Step 3:}\par
\footnotesize
\begin{tabularx}{\linewidth}{@{}X@{}}
\textit{Still wrong. Maybe the problem wants only the intersection?} \\[0.2em]
$|A \cap B| = 8$ \\[0.2em]
\textbf{Answer: 8} \quad \textcolor{red!70!black}{Feedback: Incorrect. Maximum attempts reached.}
\end{tabularx}

\end{tcolorbox}

\vspace{0.5em}

\begin{tcolorbox}[
    colback=gray!8,
    colframe=black!70,
    title=\textbf{Reflection Output},
    fonttitle=\bfseries\small,
    width=\textwidth,
    boxrule=0.5pt
]
\footnotesize

\textbf{Trajectory Summary:} Three attempts all failed. The model repeatedly manipulated the inclusion-exclusion formula without correctly interpreting the constraint.

\vspace{0.3em}
\textbf{Root Cause:} ``But not both'' means exclusive-or, requiring elements in exactly one set. All attempts adjusted overlap arithmetic rather than reframing the problem as disjoint partitions.

\vspace{0.3em}
\textbf{Pivot:} Step 1

\vspace{0.3em}
\textbf{Guidance:} Count ``divisible by 3 only'' and ``divisible by 5 only'' as separate disjoint cases.

\end{tcolorbox}

\vspace{0.5em}

\begin{tcolorbox}[
    colback=white,
    colframe=black!70,
    title=\textbf{Retry Trajectory (guidance removed for training)},
    fonttitle=\bfseries\small,
    width=\textwidth,
    boxrule=0.5pt
]

\hfill \textcolor{green!50!black}{\textbf{Success in 1 attempt}}

\vspace{0.5em}

\textbf{Step 1:}\par
\footnotesize
\begin{tabularx}{\linewidth}{@{}X@{}}
\textit{``Divisible by 3 or 5 but not both'' means exactly one of them. I should count each case separately.} \\[0.2em]
Divisible by 3: $\lfloor 120/3 \rfloor = 40$ \quad Divisible by 5: $\lfloor 120/5 \rfloor = 24$ \quad Both: $\lfloor 120/15 \rfloor = 8$ \\[0.2em]
Divisible by 3 only: $40 - 8 = 32$ \quad Divisible by 5 only: $24 - 8 = 16$ \\[0.2em]
These are disjoint: $32 + 16 = 48$ \\[0.2em]
\textbf{Answer: 48} \quad \textcolor{green!50!black}{Feedback: Correct.}
\end{tabularx}

\end{tcolorbox}

\caption{DAPO case study showing base trajectory failure and successful retry. The base trajectory exhausts three attempts, all failing due to manipulating formulas rather than reframing problem structure. Reflection identifies the semantic misunderstanding and provides corrective guidance. The retry trajectory succeeds on the first attempt by partitioning into disjoint cases. Trajectories are simplified for clarity.}
\label{fig:case_dapo}
\end{figure*}

\section{Evaluation Protocol Details}
\label{sec:eval_protocol}

Table~\ref{tab:main_results} reports Average Reward \emph{after RL training} under the \texttt{<think>/<answer>} output format, following DeepSeek-R1~\cite{guo2025deepseek} and DAPO~\cite{yu2025dapo}. Because this differs from the zero-shot \texttt{\textbackslash boxed\{\}} evaluation used in many backbone technical reports, directly comparing our Table~\ref{tab:main_results} entries with those external numbers is not meaningful. This appendix decomposes the apparent gap into two independent sources: (i) \emph{format discrepancy} between \texttt{<think>/<answer>} and \texttt{\textbackslash boxed\{\}}, and (ii) \emph{distribution shift} from training on DAPO and evaluating on more distant math benchmarks.

\subsection{Zero-shot Performance Under Both Formats}

\begin{table}[t]
\centering
\small
\adjustbox{max width=\linewidth}{%
\begin{tabular}{ll ccc}
\toprule
\textbf{Model} & \textbf{Benchmark} & \texttt{\textbackslash boxed\{\}} & \texttt{<answer>} & \textbf{Gap} \\
\midrule
\multirow{6}{*}{\texttt{Qwen2.5-1.5B-Ins}}
 & GSM8K         & 0.738 & 0.665 & -0.073 \\
 & Math500       & 0.440 & 0.422 & -0.018 \\
 & MinervaMath   & 0.118 & 0.132 & +0.015 \\
 & OlympiadBench & 0.099 & 0.083 & -0.016 \\
 & AMC23         & 0.300 & 0.150 & -0.150 \\
 & DAPO          & 0.133 & 0.107 & -0.027 \\
\midrule
\multirow{6}{*}{\texttt{Qwen2.5-7B-Ins}}
 & GSM8K         & 0.923 & 0.853 & -0.070 \\
 & Math500       & 0.620 & 0.612 & -0.008 \\
 & MinervaMath   & 0.239 & 0.235 & -0.004 \\
 & OlympiadBench & 0.205 & 0.186 & -0.019 \\
 & AMC23         & 0.575 & 0.575 &  0.000 \\
 & DAPO          & 0.337 & 0.337 &  0.000 \\
\midrule
\multirow{6}{*}{\texttt{Llama-3.2-3B-Ins}}
 & GSM8K         & 0.723 & 0.736 & +0.013 \\
 & Math500       & 0.366 & 0.380 & +0.014 \\
 & MinervaMath   & 0.114 & 0.099 & -0.015 \\
 & OlympiadBench & 0.144 & 0.122 & -0.022 \\
 & AMC23         & 0.200 & 0.175 & -0.025 \\
 & DAPO          & 0.137 & 0.130 & -0.007 \\
\midrule
\multirow{6}{*}{\texttt{Qwen3-4B}}
 & GSM8K         & 0.932 & 0.938 & +0.006 \\
 & Math500       & 0.648 & 0.690 & +0.042 \\
 & MinervaMath   & 0.254 & 0.283 & +0.029 \\
 & OlympiadBench & 0.432 & 0.504 & +0.073 \\
 & AMC23         & 0.625 & 0.600 & -0.025 \\
 & DAPO          & 0.446 & 0.443 & -0.003 \\
\bottomrule
\end{tabular}%
}
\caption{Zero-shot accuracy of the four backbones under both \texttt{\textbackslash boxed\{\}} and \texttt{<think>/<answer>} output formats on the full test sets. Gap = \texttt{<answer>} minus \texttt{\textbackslash boxed\{\}}.}
\label{tab:format_zeroshot}
\end{table}

Table~\ref{tab:format_zeroshot} reports zero-shot accuracy of all four backbones under both formats on the full test sets. The format gap is concentrated on GSM8K for the Qwen2.5 series ($-0.073$ and $-0.070$ for the 1.5B and 7B models respectively), while most other benchmarks fall within $\pm 0.02$. Llama-3.2-3B-Instruct shows nearly identical results under both formats, and Qwen3-4B actually performs \emph{better} under \texttt{<think>/<answer>} on several benchmarks, consistent with its reasoning-oriented native training paradigm. This sensitivity is therefore specific to Qwen2.5's extensive \texttt{\textbackslash boxed\{\}} adaptation during instruction tuning, not a flaw of either format.

\subsection{Post-training Performance Under Both Formats}

\begin{table*}[t]
\centering
\small
\resizebox{\textwidth}{!}{%
\begin{tabular}{ll cccccc}
\toprule
\textbf{Model} & \textbf{Method} & \textbf{GSM8K} & \textbf{Math500} & \textbf{MinervaMath} & \textbf{Olympiad} & \textbf{AMC23} & \textbf{DAPO} \\
\midrule
\multicolumn{8}{l}{\emph{Evaluated under} \texttt{<think>/<answer>} \emph{format}} \\
\midrule
\multirow{5}{*}{\texttt{Qwen2.5-1.5B-Ins}}
 & Zero-shot     & 0.665 & 0.422 & 0.132 & 0.083 & 0.150 & 0.107 \\
 & GRPO          & 0.474 & 0.368 & 0.099 & 0.114 & 0.250 & 0.123 \\
 & Reflect-GRPO  & 0.672 & 0.376 & 0.102 & 0.130 & 0.300 & 0.136 \\
 & Critique-GRPO & \textbf{0.798} & 0.404 & 0.110 & 0.124 & 0.275 & 0.133 \\
 & R$^3$L        & 0.721 & \textbf{0.424} & \textbf{0.125} & \textbf{0.151} & \textbf{0.325} & \textbf{0.156} \\
\midrule
\multirow{5}{*}{\texttt{Qwen2.5-7B-Ins}}
 & Zero-shot     & 0.853 & 0.612 & 0.235 & 0.186 & 0.575 & 0.337 \\
 & GRPO          & 0.846 & 0.572 & 0.239 & 0.277 & 0.675 & 0.393 \\
 & Reflect-GRPO  & 0.765 & 0.532 & 0.194 & 0.250 & 0.550 & 0.396 \\
 & Critique-GRPO & 0.678 & 0.522 & 0.152 & 0.170 & 0.300 & 0.390 \\
 & R$^3$L        & \textbf{0.897} & \textbf{0.658} & \textbf{0.275} & \textbf{0.301} & \textbf{0.700} & \textbf{0.436} \\
\midrule
\multicolumn{8}{l}{\emph{Evaluated under} \texttt{\textbackslash boxed\{\}} \emph{format}} \\
\midrule
\multirow{5}{*}{\texttt{Qwen2.5-1.5B-Ins}}
 & Zero-shot     & 0.738 & 0.440 & 0.118 & 0.099 & 0.300 & 0.133 \\
 & GRPO          & 0.576 & 0.392 & 0.092 & 0.125 & 0.400 & 0.147 \\
 & Reflect-GRPO  & 0.742 & 0.404 & 0.096 & 0.140 & 0.375 & 0.160 \\
 & Critique-GRPO & \textbf{0.793} & 0.428 & 0.103 & 0.135 & 0.375 & 0.153 \\
 & R$^3$L        & 0.788 & \textbf{0.446} & \textbf{0.114} & \textbf{0.162} & \textbf{0.425} & \textbf{0.173} \\
\midrule
\multirow{5}{*}{\texttt{Qwen2.5-7B-Ins}}
 & Zero-shot     & 0.923 & 0.620 & 0.239 & 0.205 & 0.575 & 0.337 \\
 & GRPO          & 0.915 & 0.584 & 0.246 & 0.285 & 0.650 & 0.383 \\
 & Reflect-GRPO  & 0.846 & 0.544 & 0.162 & 0.195 & 0.350 & 0.392 \\
 & Critique-GRPO & 0.872 & 0.550 & 0.205 & 0.264 & 0.550 & 0.407 \\
 & R$^3$L        & \textbf{0.943} & \textbf{0.658} & \textbf{0.276} & \textbf{0.305} & \textbf{0.675} & \textbf{0.424} \\
\bottomrule
\end{tabular}%
}
\caption{Post-RL-training performance on Qwen2.5-1.5B and Qwen2.5-7B after training on DAPO, evaluated under both output formats. Zero-shot reference rows are included for each (model, format) combination. \textbf{Bold} marks the best method per column within each (model, format) block.}
\label{tab:format_posttraining}
\end{table*}

Table~\ref{tab:format_posttraining} extends the comparison to post-RL-training results on Qwen2.5-1.5B and Qwen2.5-7B. On the in-domain DAPO test set, all methods improve over zero-shot under both formats. On distribution-distant benchmarks such as GSM8K, however, standard baselines like GRPO degrade from the zero-shot level: on Qwen2.5-1.5B, GRPO drops GSM8K from 0.665 to 0.474 under \texttt{<think>/<answer>} and from 0.738 to 0.576 under \texttt{\textbackslash boxed\{\}}, while R$^3$L recovers to 0.721 and 0.788 respectively, matching or exceeding the zero-shot baseline under both formats. The relative ordering of methods is consistent across formats, confirming that R$^3$L's gains are not a format artifact.

\subsection{In-domain Training Controls}

\begin{table}[t]
\centering
\small
\adjustbox{max width=\linewidth}{%
\begin{tabular}{l cc cc}
\toprule
& \multicolumn{2}{c}{\textbf{GSM8K-on-GSM8K}} & \multicolumn{2}{c}{\textbf{MATH-on-Math500}} \\
\cmidrule(lr){2-3} \cmidrule(lr){4-5}
\textbf{Method} & \texttt{<answer>} & \texttt{\textbackslash boxed\{\}} & \texttt{<answer>} & \texttt{\textbackslash boxed\{\}} \\
\midrule
Zero-shot     & 0.665 & 0.738 & 0.422 & 0.440 \\
GRPO          & 0.814 & 0.798 & 0.481 & 0.493 \\
Reflect-GRPO  & 0.822 & 0.830 & 0.505 & 0.498 \\
Critique-GRPO & 0.846 & 0.842 & 0.518 & 0.512 \\
R$^3$L        & \textbf{0.867} & \textbf{0.874} & \textbf{0.533} & \textbf{0.530} \\
\bottomrule
\end{tabular}%
}
\caption{In-domain training controls on Qwen2.5-1.5B-Instruct: train and evaluate on the same distribution. All methods improve over zero-shot under both formats.}
\label{tab:format_indomain}
\end{table}

To isolate distribution shift from methodological effects, we additionally train and evaluate on matched distributions on Qwen2.5-1.5B-Instruct. Table~\ref{tab:format_indomain} reports GSM8K-on-GSM8K and MATH-on-Math500 controls. Under matched distributions, all methods improve over zero-shot under both formats, confirming that the apparent degradation in Table~\ref{tab:main_results} stems from cross-distribution training rather than any methodological deficiency.

\subsection{Gap Decomposition}

Take Qwen2.5-1.5B on GSM8K as a concrete example. The Qwen2.5 technical report reports 0.732, measured zero-shot under \texttt{\textbackslash boxed\{\}}; our post-DAPO GRPO run under \texttt{<think>/<answer>} instead reports 0.474. The apparent 0.258 gap comes from two unrelated effects stacked on top of each other. First, switching the answer format from \texttt{\textbackslash boxed\{\}} to \texttt{<think>/<answer>} alone costs 0.073 (zero-shot GSM8K drops from 0.738 to 0.665), which accounts for roughly 28\% of the total. The remaining 72\% comes from training on DAPO and evaluating on a distribution-distant benchmark: under the same \texttt{<think>/<answer>} format, DAPO training takes GSM8K from 0.665 down to 0.474. Cross-domain shift, not format choice, is therefore the dominant factor, and it is a property of any DAPO-trained model rather than something specific to our method.

\section{Implementation Details}
\label{app:implementation}

\subsection{Training Configuration}
We implement R$^3$L and all baselines using the Trinity-RFT framework \cite{pan2025trinity}, utilizing VLLM for high-throughput inference and Fully Sharded Data Parallel for distributed training across 40 NVIDIA A100 PCIe GPUs and 56 NVIDIA H20 GPUs. Main experiments are conducted on Qwen2.5-1.5B-Instruct, Qwen2.5-7B-Instruct, and Qwen3-4B; cross-architecture evaluation on Llama-3.2-3B-Instruct is reported in Appendix~\ref{sec:cross_arch}. All models are used directly without any prior supervised fine-tuning.

\textbf{Hyperparameters.} We use the Adam optimizer with a learning rate of $1\text{e-}6$ for all models. The global batch size is set to 96 with a group size of $N=8$. We train for up to 20 epochs with manual early stopping based on reward curve monitoring, terminating when the reward plateaus for multiple consecutive steps or exhibits signs of collapse. To manage the off-policy nature of the data, we synchronize the behavior policy with the learner policy at every update step with synchronization interval $S=1$. During training, we set the sampling temperature to 1.0 to encourage exploration. During evaluation, we reduce the temperature to 0.4 for stability, and during reflection we use 0.7 to balance diversity with coherence. Regarding computational costs on the ALFWorld task, training the 1.5B model for the full 20 epochs requires approximately 420 GPU hours, while the 3B Llama model requires 2304 GPU hours. The detailed training hyperparameters are shown in Figure \ref{tab:hyperparams}.

\textbf{Sequence Lengths.} The maximum context length is set to 20,480 tokens to accommodate long interaction histories. For generation, the maximum response length is restricted to 512 tokens for agentic tasks including ALFWorld, WebShop, and ScienceWorld, and 4,096 tokens for mathematical reasoning tasks and reflection outputs.

\textbf{Multi-turn Trajectory Modeling.} A fundamental distinction in our experimental setting compared to recent works like GiGPO \cite{feng2025group} lies in the definition of the state space. Approaches focusing on step-level optimization often employ context compression or memory summarization modules that reduce past history into a concise state representation. This strategy effectively transforms a multi-turn session into a sequence of quasi-independent single-turn interactions, allowing the model to revisit identical compressed states across different trajectories to compute local baselines. While computationally efficient, this introduces a dependency on the compression policy and risks losing critical historical details required for causal diagnosis.

In contrast, R$^3$L adopts full history concatenation to preserve complete temporal causality. We condition the policy on the exact uncompressed cumulative history at each step. This ensures that our reflection mechanism has access to the precise sequence of errors leading to failure. Consequently, since unique histories rarely repeat across samples, we strictly perform trajectory-level optimization rather than state-based step-level grouping. This difference in context modeling means our method operates on a more complex long-context distribution compared to compression-based baselines.

\textbf{Experience Replay.} We employ an experience replay buffer with decay-limit randomization priority sampling, which balances recency and diversity to stabilize off-policy training. This mechanism activates only when the synchronization interval $S$ exceeds 1, allowing the model to learn from a mixture of recent and historical trajectories.

\begin{table}[h]
\centering
\small
\adjustbox{max width=\linewidth}{%
\begin{tabular}{lc}
\toprule
\textbf{Hyperparameter} & \textbf{Value} \\
\midrule
Global Batch Size & 96 \\
Learning Rate & $1\text{e-}6$ \\
Total Epochs & 20 with Early Stop \\
Group Size $N$ & 8 \\
Sync Interval $S$ & 1 \\
Gradient Clipping & 1.0 \\
Max Model Length & 20,480 \\
Max Response for Agentic & 512 \\
Max Response for Math & 4,096 \\
Max Response for Reflection & 4,096 \\
Train Temperature & 1.0 \\
Test Temperature & 0.4 \\
Reflection Temperature & 0.7 \\
\bottomrule
\end{tabular}
}
\caption{Summary of training hyperparameters shared across all methods.}
\label{tab:hyperparams}
\end{table}

\subsection{Method-Specific Configurations}
We show all method-specific hyperparameters in Table \ref{tab:method_params}.

\textbf{GRPO and GSPO.} We incorporate a KL divergence penalty with coefficient $\beta = 0.01$ to constrain policy updates. The importance sampling ratio is clipped to the range $[1-\epsilon, 1+\epsilon]$ with $\epsilon=0.2$ following the standard PPO configuration. For GSPO specifically, which utilizes sequence-level importance sampling, we set the adaptive clipping range with $\epsilon_{low}=0.0003$ and $\epsilon_{high}=0.0004$ as per the original implementation.

\textbf{OPMD.} Online Policy Mirror Descent sets the KL coefficient to zero, effectively functioning as a group-relative variant of REINFORCE. This design omits the importance sampling ratios required for rigorous off-policy correction, rendering it susceptible to distribution shift when the policy deviates from the sampling distribution.

\textbf{Critique-GRPO.} Following the original implementation, we set the shaping parameter $\gamma=0.1$ for off-policy sample weighting. The shaping function $f(\pi) = \pi / (\pi + \gamma)$ moderates the contribution of samples with high importance weights. For refinement selection within each group, the method prioritizes refinements achieving reward $\geq 1.0$. If no refinement reaches this threshold, the refinement with the highest reward is selected.

\textbf{R$^3$L-Specific Components.} R$^3$L does not employ an explicit KL penalty term in the loss function, relying instead on Positive Amplification to prevent entropy collapse. This design eliminates the need to maintain a frozen reference model during training, reducing memory and computational overhead.

Positive Amplification applies the following reweighting rule to trajectory advantages. For a trajectory $\tau$ with normalized advantage score $s$, the amplified advantage $\hat{A}(\tau)$ is computed as follows. Trajectories achieving maximum reward in the group with $R(\tau) \geq 1.0$ receive $\hat{A}(\tau) = 1.0$. Trajectories with positive advantage $s \geq 0$ receive $\hat{A}(\tau) = \alpha \cdot s$ with amplification factor $\alpha = 3.0$. Trajectories with negative advantage remain unchanged with $\hat{A}(\tau) = s$.

Pivotal Credit Assignment masks the first $t$ assistant response segments in the action mask, where $t$ equals the pivot step identified by reflection. These masked segments are excluded from gradient computation while preserved as context for subsequent generation. The implementation identifies all assistant response segments where the action mask equals 1, then sets the action mask to 0 for the first $t$ segments corresponding to the shared prefix.

\begin{table}[h]
\centering
\small
\adjustbox{max width=\linewidth}{%
\begin{tabular}{lcc}
\toprule
\textbf{Method} & \textbf{Parameter} & \textbf{Value} \\
\midrule
GRPO & KL Coefficient $\beta$ & 0.01 \\
GRPO & IS Clip $\epsilon$ & 0.2 \\
GSPO & KL Coefficient $\beta$ & 0.01 \\
GSPO & Adaptive Clip Range & [0.0003, 0.0004] \\
OPMD & KL Coefficient $\beta$ & 0.0 \\
Critique-GRPO & Shaping $\gamma$ & 0.1 \\
R$^3$L & Amplification $\alpha$ & 3.0 \\
R$^3$L & KL Coefficient $\beta$ & None \\
\bottomrule
\end{tabular}
}
\caption{Method-specific hyperparameters. R$^3$L omits KL regularization entirely.}
\label{tab:method_params}
\end{table}

\subsection{Reward and Feedback Mechanisms}

\textbf{Agentic Tasks.} The reward signal characteristics vary by domain. ALFWorld utilizes a strictly binary outcome, returning 1.0 solely for successful goal achievement and 0.0 otherwise. WebShop and ScienceWorld provide dense scalar rewards upon episode termination, reflecting the quality of the solution such as the percentage of matched product attributes in WebShop or the completion rate of sub-goals in ScienceWorld. In our experimental setting, none of these environments provide explicit intermediate step-rewards during execution. The scalar score is only revealed at the end of the trajectory. To ensure a unified optimization objective, we normalize these terminal scalar scores to the range $[0, 1]$ to serve as the trajectory reward $R(\tau)$. For feedback, we capture the textual observations including error messages and state updates directly from the environment to drive the reflection process.\looseness=-1

\textbf{Mathematical Reasoning.} We use the math\_verify library to verify the final answer. The reward is binary with 1.0 for a correct answer and 0.0 otherwise. For the language-guided retry, since there is no environment error message, we generate guidance based on comparison with the ground truth. This guidance does not reveal the answer but points out the type of error to simulate a high-quality critique without data leakage.

\subsection{Baselines}
We compare R$^3$L against the following representative methods.

\textbf{RAFT} \cite{dongraft} performs Rejection Sampling Fine-Tuning by sampling multiple trajectories, filtering for the highest-reward solutions, and fine-tuning the policy using standard supervised fine-tuning on these successful samples.

\textbf{GRPO} \cite{shao2024deepseekmath} implements Group Relative Policy Optimization, a critic-free PPO variant that estimates the baseline using the average reward of a group of sampled outputs from the same prompt.

\textbf{OPMD} \cite{yao2025group} functions as Online Policy Mirror Descent by directly maximizing the likelihood of high-reward trajectories. With KL coefficient set to zero, OPMD is effectively a group-relative variant of REINFORCE, omitting importance sampling ratios and rendering it susceptible to distribution shift.

\textbf{GSPO} \cite{zheng2025group} introduces Group Sequence Policy Optimization with sequence-level ratios to reduce the high variance typically associated with trajectory-level reward estimates in group optimization.

\textbf{Reflect-GRPO} derives from the Reflect-Retry-Reward methodology \cite{bensal2025reflect} by integrating the language-guided reflect-then-retry mechanism to synthesize corrected trajectories during exploration. Unlike R$^3$L, it incorporates these samples into the training group using the standard GRPO objective without Pivotal Credit Assignment or Positive Amplification. We exclude Agent-RLVR \cite{da2025agent} from our baselines as it functions as a DPO-based variant of Reflect-GRPO that can be viewed as GRPO with group size $N=2$.

\textbf{Critique-GRPO} adapts the methodology from \cite{zhang2025critique} by using natural language critiques to guide refinements and applying weighted advantages to the best refinement in each group. The shaping function moderates off-policy sample contributions.

\subsection{Task Specifications and Prompts}
As shown in Figure \ref{tab:dataset_stats}, the specific interaction limits are 25 steps for ALFWorld, 15 steps for WebShop, 30 steps for ScienceWorld, and 3 attempts for mathematical reasoning tasks.

\textbf{ALFWorld} \cite{shridhar2020alfworld} is a text-based interactive environment that combines textual observations with embodied AI challenges. Agents must complete household tasks such as finding objects, manipulating items, and achieving specific goals through natural language commands. The training set contains 3,553 tasks spanning 6 task types including pick, examine, clean, heat, cool, and put. The test set contains 140 tasks in the valid\_seen set.

\textbf{WebShop} \cite{yao2022webshop} simulates realistic online shopping scenarios where agents navigate e-commerce websites to purchase products matching user-specified requirements. The task involves searching, comparing attributes, and making purchasing decisions. The environment contains 1.18 million products in total. We use sessions 0 through 4095 for training with 4,096 tasks and evaluate on 100 held-out test sessions.

\textbf{ScienceWorld} \cite{wang2022scienceworld} provides interactive simulated environments for scientific reasoning. Agents must formulate hypotheses, conduct experiments, and manipulate laboratory equipment across domains like biology and chemistry. We use the easy simplification mode and evaluate on held-out task types for task-type-level generalization. The training set covers 17 task types with 2,294 tasks, while the test set covers 13 non-overlapping task types with 1,308 tasks.

\textbf{Mathematical Reasoning.} We evaluate on benchmarks including GSM8K \cite{cobbe2021training}, Math500 \cite{lightman2023let}, MinervaMath \cite{lewkowycz2022solving}, OlympiadBench \cite{gao2024omni}, AMC23 \cite{amc}, and the DAPO test set \cite{yu2025dapo}. These tasks require multi-step arithmetic reasoning with a strict chain-of-thought format. In math tasks, we train the model on the DAPO training set and evaluate across these diverse benchmarks to assess generalization.

\begin{table}[h]
\centering
\small
\adjustbox{max width=\linewidth}{%
\begin{tabular}{llcc}
\toprule
\textbf{Environment} & \textbf{Split} & \textbf{Size} & \textbf{Max Steps} \\
\midrule
\multirow{2}{*}{ALFWorld} & Train & 3,553 & \multirow{2}{*}{25 steps} \\
 & Test & 140 & \\
\midrule
\multirow{2}{*}{WebShop} & Train & 4,096 & \multirow{2}{*}{15 steps} \\
 & Test & 100 & \\
\midrule
\multirow{2}{*}{ScienceWorld} & Train & 2,294 & \multirow{2}{*}{30 steps} \\
 & Test & 1,308 & \\
\midrule
\multirow{7}{*}{Math} & Train (DAPO) & 1.79M & \multirow{7}{*}{3 attempts} \\
 & Test: DAPO & 300 & \\
 & Test: AMC23 & 40 & \\
 & Test: GSM8K & 1,320 & \\
 & Test: Math500 & 500 & \\
 & Test: MinervaMath & 272 & \\
 & Test: OlympiadBench & 674 & \\
\bottomrule
\end{tabular}%
}
\caption{Dataset statistics and interaction limits for each environment. The Math training set contains 1.79 million problems from DAPO though training typically converges before exhausting the full dataset.}
\label{tab:dataset_stats}
\end{table}

\subsection{Reflection and Retry Mechanism}

To synthesize high-quality exploration data, we implement a two-stage language-guided reflect-then-retry mechanism.

In the reflection phase, we prompt the model to analyze all trajectories using the unified reflection prompt shown in Figure \ref{fig:reflect_prompt_full}. The model examines the interaction log and outputs a structured JSON report containing a trajectory outcome classification, root cause analysis, and a suggested retry step. The outcome classification determines whether retry is warranted, with trajectories classified as success requiring no further action while those classified as failure or success\_but\_inefficient proceed to the retry phase.

In the retry phase, we rollback the environment and conversation history to the specified retry step. A guidance prompt is constructed by embedding the raw JSON reflection report into a self-correction template shown in Figure \ref{fig:guidance_prompt}. The model then continues generation from the pivot point conditioned on this guidance.

Context distillation ensures that while retry generation is conditioned on the guidance, the resulting successful trajectory is stored for training without the guidance prompt. This maps the original prefix directly to the corrected response, ensuring the policy learns to internalize the correction logic rather than depending on explicit guidance at inference time.

\begin{figure*}[t]
    \centering
    \begin{tcolorbox}[
        colback=white,
        colframe=black!70,
        title=\textbf{ALFWorld System Prompt},
        fonttitle=\bfseries\small,
        width=\textwidth,
        boxrule=0.5pt
    ]
    \begin{lstlisting}[basicstyle=\ttfamily\scriptsize, breaklines=true]
You are an agent interacting with a virtual text-based environment.

## Response Format:
You MUST use this exact format for every response. Both tags are REQUIRED in sequential order:

<think>your analytical reasoning and thought process</think>
<action>exactly one specific action command</action>

## Action Commands:
Your <action> must be one of the following, strictly following the command (argument) format.

### Navigation & Observation:
- look: Look around your current location to get more details.
- inventory: Check the object you are currently holding (you can only hold one).
- go to (receptacle): Move to a receptacle (e.g., table, fridge, sink).

### Interacting with Receptacles:
- open (receptacle): Open a receptacle.
- close (receptacle): Close a receptacle.

### Interacting with Objects:
- take (object) from (receptacle): Pick up an object from a receptacle.
- move (object) to (receptacle): Place the object you are holding into or onto a receptacle.
- examine (object): Examine an object closely to learn its properties.

### Changing Object States:
- heat (object) with (receptacle): Heat an object with a device (e.g., microwave).
- cool (object) with (receptacle): Cool an object with a device (e.g., fridge).
- clean (object) with (receptacle): Clean an object with a device (e.g., sink).
- slice (object) with (object): Slice an object using a sharp object (e.g., knife).

## Critical Rules & Constraints
- Single Item Inventory: You can only hold one object at a time.
- Use Exact Names: Arguments MUST exactly match names in Observation, including numbers.
- Step Limit: You must complete the task within 25 steps.
    \end{lstlisting}
    \end{tcolorbox}
    \caption{System prompt used for the ALFWorld environment.}
    \label{fig:prompt_alfworld}
\end{figure*}

\begin{figure*}[t]
    \centering
    \begin{tcolorbox}[
        colback=white,
        colframe=black!70,
        title=\textbf{WebShop System Prompt},
        fonttitle=\bfseries\small,
        width=\textwidth,
        boxrule=0.5pt
    ]
    \begin{lstlisting}[basicstyle=\ttfamily\scriptsize, breaklines=true]
You are an agent interacting with a virtual text-based web shopping environment.

## Response Format:
You MUST use this exact format for every response. All tags are REQUIRED in sequential order:

<think>your analytical reasoning and thought process</think>
<action>exactly one specific action command</action>

## Environment States:
This environment contains five types of webpages:
1. Start/Index page - Initial page with search functionality and task instruction
2. Search Results page - Lists products returned by search engine with pagination
3. Item page - Shows product details, options, and purchase button
4. Item Sub-page - Shows additional product information
5. Done page - Final confirmation page after purchase

## Available Actions:
1. search[your_query_here] - To search for products from any page with a search bar
2. click[exact_button_text_here] - To click on any clickable element

## Task Completion:
Goal: Find and purchase an item matching the given instruction within 15 steps
Success: Episode ends when you click "Buy Now" with appropriate product and options
    \end{lstlisting}
    \end{tcolorbox}
    \caption{System prompt used for the WebShop environment.}
    \label{fig:prompt_webshop}
\end{figure*}

\begin{figure*}[t]
    \centering
    \begin{tcolorbox}[
        colback=white,
        colframe=black!70,
        title=\textbf{ScienceWorld System Prompt},
        fonttitle=\bfseries\small,
        width=\textwidth,
        boxrule=0.5pt
    ]
    \begin{lstlisting}[basicstyle=\ttfamily\scriptsize, breaklines=true]
You are an agent, your job is to do some scientific experiment in a virtual text-based environment.

## Response Format:
You MUST use this exact format for every response. All tags are REQUIRED in sequential order:
<think>your analytical reasoning and thought process</think>
<action>exactly one specific action command</action>

## Notes:
At each step, you should first think then perform action to fulfill the instruction.
You should ALWAYS take one action each step.
DO NOT try to interact with the user at anytime. Finish the task by yourself.

## Available Commands:
[Navigation] look, look around, look at OBJ, go to LOC, teleport to LOC
[Interaction] open OBJ, close OBJ, pick up OBJ, put OBJ in CONTAINER, pour OBJ into CONTAINER
[Task] focus on OBJ, wait, wait1
    \end{lstlisting}
    \end{tcolorbox}
    \caption{System prompt used for the ScienceWorld environment.}
    \label{fig:prompt_science}
\end{figure*}

\begin{figure*}[t]
    \centering
    \begin{tcolorbox}[
        colback=white,
        colframe=black!70,
        title=\textbf{Mathematical Reasoning System Prompt},
        fonttitle=\bfseries\small,
        width=\textwidth,
        boxrule=0.5pt
    ]
    \begin{lstlisting}[basicstyle=\ttfamily\scriptsize, breaklines=true]
You are a mathematical problem solver. Your task is to solve mathematical problems step by step.

## Response Format:
You MUST use this exact format for every response. All tags are REQUIRED in sequential order:

<think>your step-by-step reasoning and solution process</think>
<answer>your final answer</answer>

## Instructions:
1. Carefully read and understand the problem
2. Show your reasoning step by step in the <think> tags
3. Provide your final answer in the <answer> tags
4. For numerical answers, provide the exact value
5. If the problem asks for a specific format, use that format in your answer
    \end{lstlisting}
    \end{tcolorbox}
    \caption{System prompt used for mathematical reasoning tasks.}
    \label{fig:prompt_math}
\end{figure*}

\begin{figure*}[t]
\centering
\begin{tcolorbox}[
    colback=white,
    colframe=black!70,
    title=\textbf{Unified Reflection Prompt Template},
    fonttitle=\bfseries\small,
    width=\textwidth,
    boxrule=0.5pt
]
\begin{lstlisting}[basicstyle=\ttfamily\scriptsize, breaklines=true]
You are a Reflector that analyzes trajectory logs based on user and environment feedback. 
Your goal is to identify what went wrong, trace root causes, and extract reusable principles 
for future improvement. Through Socratic-style iterative "why" questioning, trace issues back 
to their fundamental flawed assumptions or mental models.

Please output in the following JSON format:

{
"trajectory_summary": "Concise overview covering: (1) strategy employed, (2) final result, 
    (3) key observations about execution quality.",
"root_cause_analysis": "Deep causal analysis using iterative 'why' questioning to trace 
    from observable symptoms to the fundamental root cause. Chain reasoning explicitly.",
"trajectory_outcome": "One of: 'success', 'success_but_inefficient', 'failure'",
"improvement_suggestion": "Generalizable, context-complete principle for avoiding similar 
    issues. Must be self-contained and actionable.",
"retry_from_step": "Integer identifying the earliest step where the root cause first 
    manifested. Use 0 when root cause traces to initial strategy selection."
}
\end{lstlisting}
\end{tcolorbox}
\caption{The unified reflection prompt template used across all tasks.}
\label{fig:reflect_prompt_full}
\end{figure*}

\begin{figure*}[t]
\centering
\begin{tcolorbox}[
    colback=white,
    colframe=black!70,
    title=\textbf{Retry Guidance Template},
    fonttitle=\bfseries\small,
    width=\textwidth,
    boxrule=0.5pt
]
\begin{lstlisting}[basicstyle=\ttfamily\scriptsize, breaklines=true]
Your previous attempt encountered issues. Below is a reflection based on user and 
environment feedback:

{
    "trajectory_summary": "...",
    "root_cause_analysis": "...",
    "trajectory_outcome": "...",
    "improvement_suggestion": "...",
    "retry_from_step": ...
}

Apply the lessons learned from this reflection to avoid repeating the same mistakes.
Do not mention or reference this guidance in your response.
\end{lstlisting}
\end{tcolorbox}
\caption{The guidance prompt template used during the retry phase. The full JSON output from the reflection step is embedded into this template.}
\label{fig:guidance_prompt}
\end{figure*}

\end{document}